\pgfplotsset{compat=1.17}
\definecolor{flat-red}{RGB}{231, 76, 60}
\definecolor{flat-blue}{RGB}{41, 128, 185}
\def\eqref#1{equation~\ref{#1}}
\def\1{\bm{1}}
\def\vzero{{\bm{0}}}
\def\va{{\bm{a}}}
\def\vb{{\bm{b}}}
\def\vg{{\bm{g}}}
\def\vm{{\bm{m}}}
\def\vv{{\bm{v}}}
\def\vx{{\bm{x}}}
\def\vy{{\bm{y}}}
\def\vz{{\bm{z}}}
\def\mA{{\bm{A}}}
\def\mD{{\bm{D}}}
\def\mI{{\bm{I}}}
\DeclareMathAlphabet{\mathsfit}{\encodingdefault}{\sfdefault}{m}{sl}
\SetMathAlphabet{\mathsfit}{bold}{\encodingdefault}{\sfdefault}{bx}{n}
\def\gB{{\mathcal{B}}}
\def\gG{{\mathcal{G}}}
\def\sR{{\mathbb{R}}}
\newcommand{\E}{\mathbb{E}}
\DeclareMathOperator*{\argmin}{argmin}
\newcommand{\eqcomment}[2]{\textrm{\tikz[baseline]\node[font=\footnotesize,inner sep=0pt,yshift=2pt,opacity=0.3] {(#1)};}\ifthenelse{\isempty{#2}}{\qquad}{#2}}
\newtheorem{theorem}{Theorem}
\newtheorem{corollary}{Corollary}
\newtheorem{proposition}{Proposition}
\newtheorem{lemma}{Lemma}
\newtheorem{assumption}{Assumption}
\newtheorem{definition}{Definition}
\DeclareMathOperator{\prox}{prox}
\DeclareMathOperator{\diag}{diag}
\DeclareMathOperator{\dist}{dist}
\DeclareMathOperator{\MCP}{MCP}
\icmltitlerunning{Structured Sparsity Inducing Adaptive Optimizers for Deep Learning}
\begin{document}

\twocolumn[
\icmltitle{Structured Sparsity Inducing Adaptive Optimizers for Deep Learning}



\icmlsetsymbol{equal}{*}

\begin{icmlauthorlist}
\icmlauthor{Tristan Deleu}{mila,udem}
\icmlauthor{Yoshua Bengio}{mila,udem,cifar_ai_chair,cifar_fellow}
\end{icmlauthorlist}

\icmlaffiliation{mila}{Mila}
\icmlaffiliation{udem}{Universite de Montreal}
\icmlaffiliation{cifar_ai_chair}{CIFAR AI Chair}
\icmlaffiliation{cifar_fellow}{CIFAR Senior Fellow}

\icmlcorrespondingauthor{Tristan Deleu}{deleutri@mila.quebec}

\icmlkeywords{}

\vskip 0.3in
]



\printAffiliationsAndNotice{}  

\begin{abstract}
The parameters of a neural network are naturally organized in groups, some of which might not contribute to its overall performance. To prune out unimportant groups of parameters, we can include some non-differentiable penalty to the objective function, and minimize it using proximal gradient methods. In this paper, we derive the weighted proximal operator, which is a necessary component of these proximal methods, of two structured sparsity inducing penalties. Moreover, they can be approximated efficiently with a numerical solver, and despite this approximation, we prove that existing convergence guarantees are preserved when these operators are integrated as part of a generic adaptive proximal method. Finally, we show that this adaptive method, together with the weighted proximal operators derived here, is indeed capable of finding solutions with structure in their sparsity patterns, on representative examples from computer vision and natural language processing.
\end{abstract}

\begin{figure*}[t]
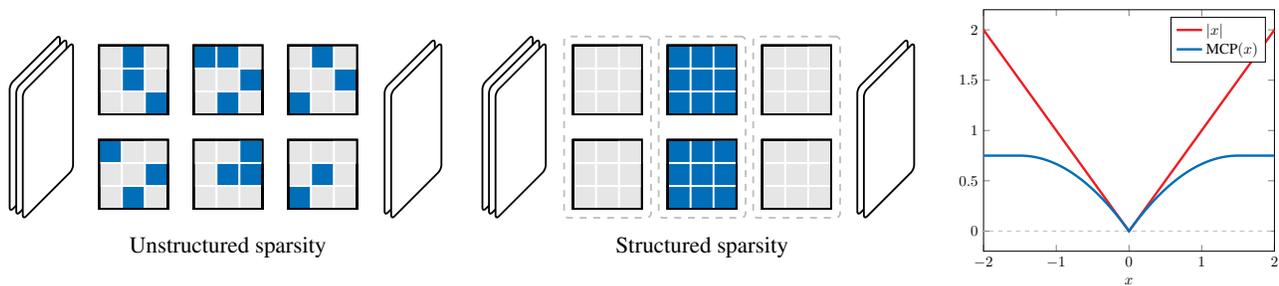

    \centering
    \includestandalone[width=\textwidth]{figures/convnet-sparsity}
    \vspace{-2em}
    \caption{Structured sparsity in neural networks. (Left) An example of the effect of unstructured sparsity inducing penalties, e.g. the $\ell_{1}$ penalty, on a convolutional layer. The zeros of each filter are represented in blue. (Middle) An example of the effect of structured sparsity inducing penalties, e.g. the mixed $\ell_{1}/\ell_{2}$ norm, on a similar convolutional layer. The filters are grouped together, depending on their input channel, and whole groups are encouraged to be equal to zero: this is a \emph{channel-wise structure} \citep{wen2016structured}. Here, the sparsity pattern effectively removes the effect of the second input channel from the network's predictions. (Right) Graphs of the two structured sparsity inducing penalties (in 1-D, for visualization) studied in this paper. The $\ell_{1}$ norm applies a larger penalty for large values of $x$, whereas the Minimax Concave Penalty (MCP) saturates after a certain point.}
    \label{fig:structured-sparsity}
\end{figure*}

\section{Introduction}
\label{sec:introduction}
In machine learning, a large majority of problems involve the minimization of a composite loss function of the form
\begin{equation}
    \min_{\vx\in\sR^{N}} f(\vx) + h(\vx),
    \label{eq:composite-loss}
\end{equation}
where $f$ is a differentiable loss function, and $h$ is a penalty function (or \emph{regularizer}). Some standard penalties used in the context of training neural networks include the squared $\ell_{2}$ norm (also called \emph{ridge regression} in statistics, or \emph{weight decay} in machine learning) and elastic net \citep{zou2005elasticnet}. In this work, we focus on non-differentiable, and possibly non-convex, penalties leading to sparse solutions of this optimization problem. In particular, we are interested in penalties which encourage structure in the sparsity patterns of these solutions \citep{kyrillidis2015structuredsparsity}, such as the mixed $\ell_{1}/\ell_{2}$ norm, also known as \emph{group Lasso} when $f$ is the squared loss \citep{yuan2006grouplasso}. These penalties tend to produce solutions where complete groups of variables are zeroed-out, as opposed to the more frequently used $\ell_{1}$ norm which yields unstructured sparsity. This class of penalties is called \emph{structured sparsity inducing} penalties \citep{bach2011optimization,bach2012structuredconvexopt}.

This family of regularizers is particularly appropriate for neural networks, where the parameters are naturally organized in layers and groups of filters \citep{krizhevsky2012alexnet,ioannou2017deeproots}. If some of these groups of parameters were exactly equal to zero, this would effectively be equivalent to deactivating certain neurons in the network \citep{zhou2016lessismore}. As a consequence, it may have an impact on the size of the model, reducing its memory footprint and computational cost at inference time. However in these structured sparsity inducing penalties, the groups of variables can be arbitrary, and they do not necessarily have to follow the layered nature of the neural network if we have some prior knowledge about the structure of the problem, such as a causal structure \citep{germain2015made,lachapelle2020grandag,ke2019neuralcausal}, or for feature selection.

To solve an optimization problem like the one in \cref{eq:composite-loss}, it is standard practice in Deep Learning to use first-order methods, and in particular adaptive methods such as Adagrad \citep{duchi2011adagrad}, RMSprop \citep{tieleman2012rmsprop}, or Adam \citep{kingma2014adam}. Even when the penalty function $h$ is non-differentiable, as is the case for structured sparsity inducing penalties, it is still possible to optimize this composite loss function using stochastic subgradient methods \citep{wen2016structured}. But even if these penalties are meant to introduce sparsity, the solutions found by subgradient methods are typically not sparse \citep{bach2011optimization}, and they often require a post-processing step where the parameters with a small magnitude are pruned out. An alternative approach is to use \emph{proximal gradient methods}, which are specifically designed for the optimization of composite loss functions. These methods avoid choosing an arbitrary subgradient of $h$ at non-differentiable points by applying a \emph{proximal operator} \citep{moreau1962proximal}, which can often be efficiently computed, to a standard gradient descent update rule.
\newpage
While proximal gradient methods are often presented as derived from vanilla gradient descent, it has been shown that they can also be applied to adaptive first-order methods \citep{yang2020proxsgd,yun2020proxgen}, using a weighted version of the proximal operator \citep{hiriart1993convex}. For many penalties, including the $\ell_{1}$ penalty \citep{becker2019quasinewton}, the weighted proximal operator is a straightforward extension of its unweighted counterpart, and can often be computed in closed-form. However for structured sparsity inducing penalties, computing their weighted proximal operator becomes more challenging than their unweighted equivalent, due to their grouping property.

In this paper, we derive the \emph{weighted proximal operator} of two different structured sparsity inducing penalties: the mixed $\ell_{1}/\ell_{2}$ norm, and the group Minimax Concave Penalty \citep[group MCP;][]{breheny2009groupmcp}. We show that unlike their (unweighted) proximal operator, no closed-form solution exists for the weighted proximal operator of these penalties, although they can be approximated efficiently using a root finding algorithm. When this approximation is integrated as part of a stochastic proximal gradient method \citep{yun2020proxgen}, we prove that the resulting inexact method retains the same convergence guarantees as if the exact weighted proximal operator was applied. We further support empirically that this algorithm can find solutions with structure in their sparsity patterns, with examples from computer vision and natural language processing.

\section{Related work}
\label{sec:related-work}
\paragraph{Pruning} In addition to having a smaller memory footprint, pruning groups of variables that only have a small impact on the network's performance, such as filters in a convolutional neural network, can also significantly speedup inference on standard architectures \citep{li2017pruning}. Closely related to our work, \citet{wen2016structured} used the mixed $\ell_{1}/\ell_{2}$ norm in order to identify these groups, and minimized the objective with subgradient methods. However this required a post-processing step at the end of training because subgradient methods may fail at finding truly sparse solutions \citep{bach2011optimization}. In contrast, proximal gradient methods tend to not suffer from this problem, and they can find solutions with exact zeros. \citet{lin2019toward} also proposed to use structured sparsity regularization for pruning, with an approach inspired by ADMM.

\paragraph{Proximal gradient methods} Proximal gradient methods (also known as \emph{forward-backward splitting} methods; \citealp{singer2009fobos}) is a class of first-order methods that are particularly designed for the optimization of composite objectives of the form of \cref{eq:composite-loss}, where the non-differentiable penalty $h$ admits a proximal operator that can be efficiently computed. See \citet{parikh2014proximal} for a general introduction to proximal algorithms, and \citet{bach2011optimization} for an overview of proximal gradient methods for sparsity-inducing penalties (including structured sparsity).

Proximal gradient methods have also been embedded into adaptive first-order methods, such as Adam or RMSProp \citep{lee2019proxadam}. \citet{yang2020proxsgd} introduced \textsc{Prox-SGD}, an algorithm based on a weighted proximal operator to correct for the rescaling introduced by these adaptive methods. This algorithm was further refined in \textsc{ProxGen} \citep{yun2020proxgen}, to treat the update as an exact proximal gradient step. Both of these methods are generic, and they can be applied to any penalty, provided the weighted proximal operator can be computed efficiently. But so far, their applications have been limited to sparsity-inducing penalties such as $\ell_{q}^{q}$, with $q \in [0, 1]$, whose weighted proximal operators can be obtained in closed-form thanks to the separability of these penalties \citep{becker2019quasinewton}. Our work is complementary, since we give an efficient algorithm to compute the weighted proximal operator of some structured sparsity inducing penalties, for which there is no closed-form solution available. \citet{melchior2019adaprox} also propose an alternative algorithm where the weighted proximal operator is approximated by iterating over the (unweighted) proximal operator. \citet{chen2020half} proposed a Half-Space step, in conjunction with projected gradient methods, to increase the level of group-sparsity.

Finally, although we are focusing our attention here on first-order methods, which are more appropriate for Deep Learning applications, it is important to note that there also exists proximal quasi-Newton methods \citep{becker2019quasinewton}, also based on weighted proximal operators, which are adapted from limited-memory quasi-Newton methods such as SR1 and L-BFGS. 

\section{Structured sparsity inducing penalties}
\label{sec:structured-sparsity-penalties}
Throughout this paper, we assume that we have a collection $\gG$ of (disjoint) groups of variables; for example, these groups might correspond to individual filters in a convolutional neural network \citep{wen2016structured}. Structured sparsity inducing penalties are penalty functions that encourage whole groups of variables as defined by $\gG$ to be ignored (see \cref{fig:structured-sparsity} for an illustration). In this section, we will first recall the definition of two of these penalties, namely the mixed $\ell_{1}/\ell_{2}$ norm and group MCP, and how they have been used in the past in conjunction with proximal gradient descent.

\subsection{Mixed $\ell_{1}/\ell_{2}$ norm}
\label{sec:mixed-l1-l2-norm}
Taking inspiration from the $\ell_{1}$ penalty, which is widely adopted for unstructured sparsity, the mixed $\ell_{1}/\ell_{2}$ norm can be thought of as the $\ell_{1}$ norm of a vector consisting of the $\ell_{2}$ norms over each group in $\gG$. More precisely, this penalty is defined as
\begin{equation}
    h(\vx) \triangleq \sum_{g\in\gG} \lambda_{g} \|\vx_{g}\|_{2},
    \label{eq:l1-l2}
\end{equation}
where $\|\vx_{g}\|_{2} = \sqrt{\sum_{j\in g}x_{j}^{2}}$ is the $\ell_{2}$ norm of the vector $\vx$, restricted to group $g$. To account for groups of varying sizes, the weighting $\lambda_{g}$ is typically set to $\lambda_{g} = \lambda\sqrt{|g|}$, where $|g|$ is the number of elements in group $g$ \citep{murphy2012mlapp}.

\subsection{Bias reduction with non-convex penalties}
\label{sec:non-convex-penalties}
Although applying the $\ell_{1}$ penalty may result in sparse solutions, this penalty is also known to suffer from a \emph{shrinkage} effect \citep{copas1983shrinkage}, introducing bias in the model. The mixed $\ell_{1}/\ell_{2}$ norm inherits from this bias, but this time at the group level: groups of variables get pushed invariably towards $0$, regardless of their $\ell_{2}$ norm.

In order to reduce this bias, \citet{zhang2010mcp} introduced a non-convex penalty called the \emph{Minimax Concave Penalty} (MCP). MCP operates under two regimes: similarly to the $\ell_{1}$ penalty, small values are shrunk towards $0$, but contrary to $\ell_{1}$, sufficiently large values are not penalized as much anymore. Formally, this penalty function is defined as
\begin{equation}
    \MCP(x;\beta, \lambda) \triangleq \left\{\begin{array}{cl}
        \lambda |x| - \dfrac{x^{2}}{2\beta} & \textrm{if $|x| \leq \beta\lambda$} \\
        \dfrac{\beta\lambda^{2}}{2} & \textrm{otherwise,}
    \end{array}\right.
    \label{eq:mcp}
\end{equation}
with $\beta > 1$ controlling the magnitude beyond which values are no longer penalized. \cref{fig:structured-sparsity} shows a comparison of $\MCP$ against the $\ell_{1}$ penalty. Similar to the mixed $\ell_{1}/\ell_{2}$ norm, it is also possible to apply MCP to the $\ell_{2}$ norm of groups of variables, in order to encourage structured sparsity while retaining the benefits of MCP: this is called group MCP \citep{breheny2009groupmcp}
\begin{equation}
    h(\vx) = \sum_{g\in\gG}\MCP(\|\vx_{g}\|_{2};\beta, \lambda_{g}).
    \label{eq:mcp-l2}
\end{equation}

\subsection{Proximal gradient descent}
\label{sec:proximal-gradient-descent}
In order to minimize the composite objective in \cref{eq:composite-loss}, the proximal gradient descent algorithm updates an iterate $\vx_{t}$ with an update similar to standard gradient descent, based on the gradient of the differentiable part $f$
\begin{equation*}
    \vx_{t+1} \leftarrow \prox_{\alpha h}\!\big(\vx_{t} - \alpha \nabla_{\vx}f(\vx_{t})\big),
    \label{eq:pgd-update}
\end{equation*}
where $\alpha$ is the learning rate, and $\prox_{\alpha h}$ is the \emph{proximal operator} \citep{moreau1962proximal} of the penalty $h$, defined as
\begin{equation}
    \prox_{\alpha h}(\vx) = \argmin_{\vz\in\sR^{n}}\frac{1}{2}\|\vz - \vx\|_{2}^{2} + \alpha h(\vz).
    \label{eq:proximal-operator}
\end{equation}
Interestingly, proximal gradient descent generalizes both standard gradient descent (when $h(\vx) \equiv 0$), and projected gradient descent (when $h(\vx) = \mI_{C}(x)$ is the indicator function, equal to $0$ if $\vx \in C$, and $+\infty$ otherwise). For many penalty functions of interest, even non-differentiable and possibly non-convex ones, the proximal operator can be computed efficiently, often in closed-form. This is true in particular for the structured sparsity inducing penalties presented in this paper.

Let's take the example of the mixed $\ell_{1}/\ell_{2}$ norm. First note that since the penalty in \cref{eq:l1-l2} decomposes along the (disjoint) groups $g \in \gG$, the proximal operator in \cref{eq:proximal-operator} benefits from the same decomposition, involving only the proximal operators of the $\ell_{2}$ norm evaluated on each $\vx_{g}$. Therefore, it is sufficient to know the proximal operator of the $\ell_{2}$ norm $h(\vx) = \lambda\|\vx\|_{2}$ in order to construct the update for proximal gradient descent, which is given in closed-form \citep{combettes2005proximal} by
\begin{equation}
    \prox_{\alpha h}(\vx) = \left[1 - \frac{\alpha\lambda}{\|\vx\|_{2}}\right]_{+}\!\vx,
\end{equation}
where $[z]_{+} = \max(0, z)$. Similarly, the group MCP enjoys the same decomposition property along groups, and there exists a closed-form expression for the proximal operator of the MCP / $\ell_{2}$ penalty (i.e. $h(\vx) = \MCP(\|\vx\|_{2})$; \citealp{breheny2009groupmcp}).

\section{Adaptive proximal optimizers for structured sparsity inducing penalties}
\label{sec:adaptive-proximal-optimizers}
In practice in Deep Learning, the minimization of the objective is typically performed using stochastic and adaptive first-order methods, such as Adagrad, RMSprop, or Adam, where the update direction is rescaled with a diagonal preconditioning matrix $\mD_{t}$
\begin{equation*}
    \vx_{t+1} \leftarrow \vx_{t} - \alpha \mD_{t}^{-1}\vm_{t},
\end{equation*}
where $\vm_{t}$ is an estimate of the mean of the gradients of $f$; see also \cref{tab:optimizers} in \cref{app:adaptive-optimizers} for examples of matrices $\mD$. Although it appears to be similar to the standard gradient descent update, the preconditioning $\mD$ makes composing these methods with a proximal operator more challenging. \citet{yun2020proxgen} proposed a general proximal method that is capable of dealing with this preconditioning, with an update of the form
\begin{equation}
    \vx_{t+1} \leftarrow \prox_{\alpha h}^{\mD_{t}}\!\big(\vx_{t} - \alpha \mD_{t}^{-1}\vm_{t}\big),
\end{equation}
where $\prox_{\alpha h}^{\mD}$ is the \emph{weighted proximal operator} of $h$ \citep{hiriart1993convex,lee2014proximalnewton}
\begin{equation}
    \prox_{\alpha h}^{\mD}(\vx) = \argmin_{\vz\in\sR^{n}}\frac{1}{2}\|\vz - \vx\|_{\mD}^{2} + \alpha h(\vz),
    \label{eq:weighted-proximal-operator}
\end{equation}
and $\|\vz\|_{\mD}^{2} = \langle \vz, \mD\vz\rangle$. This naturally extends to the case where the update is stochastic, see \cref{app:algorithmic-details}, and \cref{sec:convergence-analysis} for details and additional theoretical guarantees. Contrary to the proximal operator, for which proximity is computed using the Euclidean distance, the weighted proximal operator captures the reweighting by $\mD$ by changing the geometry of the space over which proximity is measured.

However unlike their proximal operators (see \cref{sec:proximal-gradient-descent}), the weighted proximal operators of the $\ell_{2}$ norm and MCP / $\ell_{2}$ cannot be evaluated in closed-form in general. \citet{becker2019quasinewton,yang2020proxsgd} show that we can get a closed-form expression for the mixed $\ell_{1}/\ell_{2}$ norm when the preconditioning matrix $\mD$ decomposes along the groups into spherical matrices $\mD_{g} = m\mI_{g}$, which is not practical for Deep Learning applications: while being diagonal, the preconditioning matrix usually does not have the same value for all the variables belonging to the same group (e.g. it might be based on aggregated statistics from past gradients). In \cref{thm:weighted-prox-l2}, we give an expression of the weighted proximal operator for the $\ell_{2}$ norm, based on an implicit expression. Recall from \cref{sec:proximal-gradient-descent} that due to group decomposition, this is sufficient to compute the weighted proximal operator of the mixed $\ell_{1}/\ell_{2}$ norm.
\begin{theorem}[Weighted proximal operator of $\ell_{2}$]
  \label{thm:weighted-prox-l2}
  Let $\mD = \diag(d_{1}, \ldots, d_{n})$ be a positive definite diagonal matrix (i.e. $d_{i} > 0$ for all $i$), and $h(\vx) = \lambda \|\vx\|_{2}$ the $\ell_{2}$ penalty. The weighted proximal operator of $h$ is given by
  \begin{equation}
    \big[\prox_{\alpha h}^{\mD}(\vx)\big]_{i} = \left\{\begin{array}{cl}
        \dfrac{d_{i}\theta^{\star}x_{i}}{d_{i}\theta^{\star} + \alpha \lambda} & \textrm{if $\|\mD\vx\|_{2} > \alpha \lambda$}\\[1em]
        0 & \textrm{otherwise},
        \label{eq:weighted-prox-l2}
    \end{array}\right.
  \end{equation}
  where $\theta^{\star} > 0$ is the unique positive solution of
  \begin{equation}
    \sum_{i=1}^{n}\left[\frac{d_{i}x_{i}}{d_{i}\theta^{\star} + \alpha\lambda}\right]^{2} = 1.
    \label{eq:weighted-prox-l2-theta}
  \end{equation}
\end{theorem}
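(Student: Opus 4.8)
The plan is to treat the definition in \cref{eq:weighted-proximal-operator} as a finite-dimensional convex program and read off the answer from first-order optimality. Since $\mD$ is positive definite, $\vz \mapsto \tfrac12\|\vz-\vx\|_{\mD}^{2}$ is strictly convex, and $h(\vz)=\lambda\|\vz\|_{2}$ is convex, so the objective has a \emph{unique} minimizer $\vz^{\star}$, characterized by the inclusion $\vzero \in \mD(\vz^{\star}-\vx) + \alpha\lambda\,\partial\|\cdot\|_{2}(\vz^{\star})$. First I would handle the vanishing case: using $\partial\|\cdot\|_{2}(\vzero)=\{\vu:\|\vu\|_{2}\le 1\}$, the point $\vz^{\star}=\vzero$ satisfies this inclusion if and only if $\mD\vx$ lies in the ball of radius $\alpha\lambda$, i.e. $\|\mD\vx\|_{2}\le\alpha\lambda$, which gives the second branch of \cref{eq:weighted-prox-l2}. (If $\vx=\vzero$ then $\vz^{\star}=\vzero$ trivially, so we may assume $\vx\neq\vzero$ henceforth.)

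On the complementary region $\|\mD\vx\|_{2}>\alpha\lambda$ the minimizer is nonzero, so $\|\cdot\|_{2}$ is differentiable at $\vz^{\star}$ with gradient $\vz^{\star}/\|\vz^{\star}\|_{2}$, and the optimality condition becomes $\mD\vz^{\star}+(\alpha\lambda/\|\vz^{\star}\|_{2})\vz^{\star}=\mD\vx$. Setting $\theta^{\star}\triangleq\|\vz^{\star}\|_{2}>0$ and taking this coordinate-wise yields $z^{\star}_{i}\,(d_{i}+\alpha\lambda/\theta^{\star})=d_{i}x_{i}$, which rearranges exactly to the closed form $z^{\star}_{i}=d_{i}\theta^{\star}x_{i}/(d_{i}\theta^{\star}+\alpha\lambda)$ of \cref{eq:weighted-prox-l2}. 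Substituting this expression back into the identity $\|\vz^{\star}\|_{2}^{2}=(\theta^{\star})^{2}$ and dividing by $(\theta^{\star})^{2}$ produces precisely the scalar equation \cref{eq:weighted-prox-l2-theta}.

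It then remains to show that \cref{eq:weighted-prox-l2-theta} has a unique positive root, and this is the only place requiring genuine (if short) work. I would define $\phi(\theta)\triangleq\sum_{i=1}^{n}\big(d_{i}x_{i}/(d_{i}\theta+\alpha\lambda)\big)^{2}$ on $(0,\infty)$: since $\vx\neq\vzero$ and all $d_{i}>0$, $\phi$ is continuous and strictly decreasing, with $\phi(\theta)\to 0$ as $\theta\to\infty$ and $\phi(\theta)\to\|\mD\vx\|_{2}^{2}/(\alpha\lambda)^{2}$ as $\theta\to 0^{+}$. The hypothesis $\|\mD\vx\|_{2}>\alpha\lambda$ makes the latter limit exceed $1$, so the intermediate value theorem supplies a unique $\theta^{\star}>0$ with $\phi(\theta^{\star})=1$. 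Finally I would close the loop: for this $\theta^{\star}$, the vector $\vz$ with $z_{i}=d_{i}\theta^{\star}x_{i}/(d_{i}\theta^{\star}+\alpha\lambda)$ obeys $\|\vz\|_{2}^{2}=(\theta^{\star})^{2}\phi(\theta^{\star})=(\theta^{\star})^{2}$, hence $\|\vz\|_{2}=\theta^{\star}$, and a one-line substitution then verifies $\mD(\vz-\vx)+\alpha\lambda\,\vz/\|\vz\|_{2}=\vzero$; by convexity this first-order condition is sufficient, so $\vz=\vz^{\star}$. The main obstacle is really just this self-consistency bookkeeping — ensuring the $\theta$ one solves for coincides with the norm of the resulting candidate — together with the careful case split around the non-smooth point $\vzero$; everything else is routine subdifferential calculus.
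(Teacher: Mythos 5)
Your proposal is correct, and every step checks out: the subdifferential of $\|\cdot\|_{2}$ at $\vzero$ gives exactly the threshold $\|\mD\vx\|_{2}\le\alpha\lambda$ for the zero branch, the stationarity condition $\mD\vz^{\star}+(\alpha\lambda/\|\vz^{\star}\|_{2})\vz^{\star}=\mD\vx$ decouples coordinate-wise because $\mD$ is diagonal, and your monotonicity argument for $\phi$ (with $\phi(0^{+})=\|\mD\vx\|_{2}^{2}/(\alpha\lambda)^{2}>1$ and $\phi\to 0$ at infinity) correctly yields existence and uniqueness of $\theta^{\star}$; the final sufficiency-by-convexity check closes the loop. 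The route differs from the paper's in its packaging rather than its substance: the paper first invokes \cref{lem:weighted-prox-to-prox} to convert the weighted proximal operator into an \emph{unweighted} proximal operator of the composed penalty $h\circ\mD^{-1/2}$, computes that operator in \cref{prop:prox-l2-unweighted} (where $\theta^{\star}$ is introduced as $\|\mD\vz^{\star}\|_{2}$ for the transformed problem), and then recovers \cref{thm:weighted-prox-l2} by a change of variables. You instead apply first-order optimality directly to the weighted objective $\tfrac12\|\vz-\vx\|_{\mD}^{2}+\alpha\lambda\|\vz\|_{2}$, which is arguably cleaner and more self-contained for the diagonal case, and makes the meaning of $\theta^{\star}=\|\vz^{\star}\|_{2}$ transparent without tracking it through the substitution. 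What the paper's detour buys is uniformity: \cref{lem:weighted-prox-to-prox} holds for any positive definite $\mA$ (connecting to the proximal quasi-Newton literature) and the same reduction is reused verbatim for the MCP~/~$\ell_{2}$ case in \cref{prop:prox-mcp-l2-unweighted}, so the two derivations share a common scaffold. Either organization is a complete proof of the theorem.
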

The proof of \cref{thm:weighted-prox-l2} is given in \cref{app:proof-weighted-prox-l2} \citep{li2020proxl2}. Unlike the (unweighted) proximal operators for many standard penalties, including for the $\ell_{2}$ norm itself, the weighted proximal operator of $\ell_{2}$ cannot be expressed in closed-form. Instead in practice, we need to solve \cref{eq:weighted-prox-l2-theta} numerically, for example using the Newton-Raphson algorithm. We note that \citet{duchi2011adagrad} also provides a bisection-based procedure to solve a somewhat different minimization problem, related to \cref{eq:weighted-proximal-operator} up to a change of variable. \cref{thm:weighted-prox-l2}, on the other hand, gives a more explicit formulation of the weighted proximal operator when the preconditioning matrix is diagonal, more appropriate for the adaptive proximal gradient method presented above \citep{yun2020proxgen}.

One consequence of this approximation of $\theta^{\star}$ is that we can only obtain an approximation of the weighted proximal operator of $\ell_{2}$, as opposed to an exact expression; the resulting proximal gradient method is then called \emph{inexact} \citep{schmidt2011inexactprox}. \cref{prop:weighted-prox-l2-approx} shows that this error on the weighted proximal operator is controlled by the approximation error induced by the root finding algorithm used to find $\theta^{\star}$.

\begin{proposition}
  Let $\varepsilon > 0$ and $\tilde{\theta} > 0$ such that $|\tilde{\theta} - \theta^{\star}| \leq \varepsilon$. Let $\widetilde{\prox}^{\mD}_{\alpha h}(\vx)$ be an approximation of the weighted proximal operator of $\ell_{2}$ (i.e. $h(\vx) = \lambda\|\vx\|_{2}$), replacing $\theta^{\star}$ by $\tilde{\theta}$ in \cref{eq:weighted-prox-l2}. Then we have for all $\vx \in \sR^{n}$:
  \begin{equation}
    \big\|\widetilde{\prox}_{\alpha h}^{\mD}(\vx) - \prox_{\alpha h}^{\mD}(\vx)\big\|_{2} \leq \varepsilon.
  \end{equation}
  \label{prop:weighted-prox-l2-approx}
  \vspace*{-2em}
\end{proposition}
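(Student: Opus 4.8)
The plan is to reduce the claim to the defining normalization identity \cref{eq:weighted-prox-l2-theta} for $\theta^{\star}$, combined with one elementary pointwise bound. First I would dispose of the trivial case: if $\|\mD\vx\|_{2} \leq \alpha\lambda$, then both $\widetilde{\prox}_{\alpha h}^{\mD}(\vx)$ and $\prox_{\alpha h}^{\mD}(\vx)$ equal $\vzero$ by the second branch of \cref{eq:weighted-prox-l2} (which does not involve $\theta^{\star}$ at all), so the inequality holds with room to spare. It therefore suffices to treat the regime $\|\mD\vx\|_{2} > \alpha\lambda$, where both operators are given by the first branch of \cref{eq:weighted-prox-l2}, one evaluated at $\theta^{\star}$ and the other at $\tilde{\theta}$.

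In that regime I would compute the per-coordinate difference explicitly. Writing $\frac{d_{i}\theta x_{i}}{d_{i}\theta + \alpha\lambda} = x_{i}\big(1 - \frac{\alpha\lambda}{d_{i}\theta + \alpha\lambda}\big)$ and subtracting the value at $\theta^{\star}$ from the value at $\tilde{\theta}$ gives, after putting over a common denominator,
\begin{equation*}
  \big[\widetilde{\prox}_{\alpha h}^{\mD}(\vx)\big]_{i} - \big[\prox_{\alpha h}^{\mD}(\vx)\big]_{i} = \frac{\alpha\lambda\, d_{i} x_{i}\,(\tilde{\theta} - \theta^{\star})}{(d_{i}\theta^{\star} + \alpha\lambda)(d_{i}\tilde{\theta} + \alpha\lambda)}.
\end{equation*}
Squaring, summing over $i$, and pulling the common factor $(\tilde{\theta} - \theta^{\star})^{2}$ out front leaves the sum $\sum_{i}\big(\frac{d_{i}x_{i}}{d_{i}\theta^{\star} + \alpha\lambda}\big)^{2}\big(\frac{\alpha\lambda}{d_{i}\tilde{\theta} + \alpha\lambda}\big)^{2}$ to be controlled.

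The two key observations are then: (i) since $d_{i} > 0$ and $\tilde{\theta} > 0$ are both assumed, each factor $\frac{\alpha\lambda}{d_{i}\tilde{\theta} + \alpha\lambda}$ lies in $[0,1)$, so its square is at most $1$ (this also guarantees no denominator vanishes); and (ii) dropping that factor, the remaining sum is exactly $\sum_{i}\big(\frac{d_{i}x_{i}}{d_{i}\theta^{\star} + \alpha\lambda}\big)^{2} = 1$ by the equation \cref{eq:weighted-prox-l2-theta} characterizing $\theta^{\star}$. Combining, $\big\|\widetilde{\prox}_{\alpha h}^{\mD}(\vx) - \prox_{\alpha h}^{\mD}(\vx)\big\|_{2}^{2} \leq (\tilde{\theta} - \theta^{\star})^{2} \leq \varepsilon^{2}$, and taking square roots concludes.

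I do not anticipate a real obstacle: the argument is short, and the only points requiring care are the boundary case $\|\mD\vx\|_{2}\leq\alpha\lambda$ (handled first) and the explicit use of the positivity of the $d_{i}$ and of $\tilde{\theta}$ so that the factor in (i) is genuinely bounded by $1$. The conceptual content is simply that the normalization pinning down $\theta^{\star}$ is also what makes the map $\theta \mapsto \frac{d_{i}\theta x_{i}}{d_{i}\theta + \alpha\lambda}$, summed in the $\ell_{2}$ sense over coordinates, $1$-Lipschitz in $\theta$ near $\theta^{\star}$.
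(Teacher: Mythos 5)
Your proposal is correct and follows essentially the same route as the paper's proof: compute the per-coordinate difference over a common denominator, factor out $(\tilde{\theta}-\theta^{\star})^{2}$, bound the factor $\alpha\lambda/(d_{i}\tilde{\theta}+\alpha\lambda)$ by $1$, and use the normalization \cref{eq:weighted-prox-l2-theta} to evaluate the remaining sum as exactly $1$. Your explicit treatment of the case $\|\mD\vx\|_{2}\leq\alpha\lambda$ is a minor (and welcome) addition that the paper absorbs into its initial inequality.
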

The proof of \cref{prop:weighted-prox-l2-approx} is provided in \cref{app:proof-approx-weighted-prox-l2}, along with bounds on $\theta^{\star}$ to reduce the search space as much as possible, and to ensure fast convergence of the numerical solver.

Likewise, we also give in \cref{thm:weighted-prox-mcp-l2} an (implicit) expression for the weighted proximal operator of MCP / $\ell_{2}$. Again, this is sufficient to compute the weighted proximal operator of group MCP.
\begin{theorem}[Weighted proximal operator of MCP / $\ell_{2}$]
  \label{thm:weighted-prox-mcp-l2}
  Let $\mD = \diag(d_{1}, \ldots, d_{n})$ be a positive definite diagonal matrix (i.e. $d_{i} > 0$ for all $i$), and $h(\vx) = \MCP(\|\vx\|_{2})$ the MCP / $\ell_{2}$ penalty. Suppose that $\alpha$ and $\beta$ satisfy $\alpha < \beta d_{\min}$, where $d_{\min}$ is the smallest value of the diagonal of $\mD$. The weighted proximal operator of $h$ is given by
  \begin{equation}
      \scalebox{0.88}{$
      \big[\prox_{\alpha h}^{\mD}(\vx)\big]_{i} = \left\{\begin{array}{cl}
            x_{i} & \textrm{\scalebox{0.83}{if $\|\vx\|_{2} > \beta\lambda$}} \\[0.8em]
            \dfrac{d_{i}\beta\theta^{\star}x_{i}}{(d_{i}\beta - \alpha)\theta^{\star} + \alpha \beta\lambda} & \textrm{\scalebox{0.62}{\begin{tabular}{@{}l}if $\|\vx\|_{2} \leq \beta \lambda$\\[0.4em] and $\|\mD\vx\|_{2} > \alpha \lambda$\end{tabular}}}\\[1.5em]
            0 & \textrm{\scalebox{0.62}{\begin{tabular}{@{}l}if $\|\vx\|_{2} \leq \beta \lambda$\\[0.4em] and $\|\mD\vx\|_{2} \leq \alpha \lambda$,\end{tabular}}}
        \end{array}\right.$}
        \label{eq:weighted-prox-mcp-l2}
  \end{equation}
  where $\theta^{\star} > 0$ is the unique positive solution of
  \begin{equation}
      \beta^{2}\sum_{i=1}^{n}\left[\frac{d_{i}x_{i}}{(d_{i}\beta - \alpha)\theta^{\star} + \alpha\beta\lambda}\right]^{2} = 1.
      \label{eq:weighted-prox-mcp-l2-theta}
  \end{equation}
\end{theorem}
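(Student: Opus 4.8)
The plan is to work directly with the minimization problem defining the weighted proximal operator,
\[
\min_{\vz \in \sR^{n}} F(\vz), \qquad F(\vz) \triangleq \tfrac{1}{2}\|\vz - \vx\|_{\mD}^{2} + \alpha\,\MCP\!\big(\|\vz\|_{2};\beta,\lambda\big),
\]
and to split $\sR^{n}$ according to the two regimes of $\MCP$: the open ball $B = \{\vz : \|\vz\|_{2} < \beta\lambda\}$, where $\MCP(\|\vz\|_{2}) = \lambda\|\vz\|_{2} - \|\vz\|_{2}^{2}/(2\beta)$, and its exterior, where $\MCP(\|\vz\|_{2})$ equals the constant $\beta\lambda^{2}/2$. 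The first step is to observe that the hypothesis $\alpha < \beta d_{\min}$ makes $F$ \emph{strictly convex on the closed ball} $\bar B$: there $F(\vz) = \tfrac{1}{2}\sum_{i}(d_{i} - \alpha/\beta)z_{i}^{2} - \langle \mD\vx, \vz\rangle + \alpha\lambda\|\vz\|_{2} + \mathrm{const}$, and the quadratic coefficients satisfy $d_{i} - \alpha/\beta \ge d_{\min} - \alpha/\beta > 0$. Hence $F|_{\bar B}$ has a unique minimizer, while on the exterior $F$ is just the strictly convex quadratic $\tfrac{1}{2}\|\vz-\vx\|_{\mD}^{2} + \beta\alpha\lambda^{2}/2$. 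Because $\MCP(\cdot)$ is continuous, the infimum of $F$ over the open exterior is attained at $\vx$ when $\|\vx\|_{2} \ge \beta\lambda$, and otherwise equals $\min_{\vz \in \partial B} F(\vz) \ge \min_{\bar B} F$; so the global minimizer lies in $\bar B$ unless $\|\vx\|_{2} > \beta\lambda$, and it suffices to study $F|_{\bar B}$.

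Next I would write the first-order optimality conditions for $F|_{\bar B}$. At an interior point $\vz \ne \vzero$ with $r \triangleq \|\vz\|_{2} \in (0,\beta\lambda)$, stationarity reads $d_{i}(z_{i}-x_{i}) + (\alpha\lambda/r - \alpha/\beta)z_{i} = 0$, i.e.
\[
z_{i} = \frac{d_{i}\beta\, r\, x_{i}}{(d_{i}\beta - \alpha)\,r + \alpha\beta\lambda};
\]
substituting this into $r^{2} = \sum_{i} z_{i}^{2}$ and dividing by $r^{2} > 0$ gives exactly $\beta^{2}\sum_{i} d_{i}^{2}x_{i}^{2}/\big((d_{i}\beta - \alpha)r + \alpha\beta\lambda\big)^{2} = 1$, that is $r = \theta^{\star}$ as in \cref{eq:weighted-prox-mcp-l2-theta}. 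Let $\phi(r)$ denote the left-hand side of that equation. Since $d_{i}\beta - \alpha > 0$, $\phi$ is strictly decreasing on $[0,\infty)$ with $\phi(r)\to 0$; moreover the $i$-th denominator at $r=0$ is $\alpha^{2}\beta^{2}\lambda^{2}$ and at $r=\beta\lambda$ simplifies to $d_{i}^{2}\beta^{4}\lambda^{2}$, so $\phi(0) = \|\mD\vx\|_{2}^{2}/(\alpha^{2}\lambda^{2})$ and $\phi(\beta\lambda) = \|\vx\|_{2}^{2}/(\beta^{2}\lambda^{2})$. Thus a positive root $\theta^{\star}$ exists (and is unique) exactly when $\|\mD\vx\|_{2} > \alpha\lambda$, and $\theta^{\star} \le \beta\lambda$ exactly when $\|\vx\|_{2} \le \beta\lambda$. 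At the non-smooth interior point $\vz = \vzero$, using $\partial(\lambda\|\cdot\|_{2})(\vzero) = \{\vu : \|\vu\|_{2} \le \lambda\}$, the condition $\vzero \in \partial F(\vzero)$ becomes $\|\mD\vx\|_{2} \le \alpha\lambda$ (note this forces $\|\vx\|_{2} \le \|\mD\vx\|_{2}/d_{\min} < \beta\lambda$ by $\alpha < \beta d_{\min}$, so $\vzero$ indeed lies in $\bar B$).

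These three computations assemble into the three cases of the theorem. If $\|\vx\|_{2} > \beta\lambda$, then $\theta^{\star} > \beta\lambda$ and $\|\mD\vx\|_{2} > \alpha\lambda$, so $F|_{\bar B}$ has no stationary point in the open ball; its minimizer is on $\partial B$, where $F \ge \beta\alpha\lambda^{2}/2 = F(\vx)$, hence the global minimizer is $\vx$. If $\|\vx\|_{2} \le \beta\lambda$ and $\|\mD\vx\|_{2} > \alpha\lambda$, the unique root $\theta^{\star} \in (0,\beta\lambda]$ yields the unique stationary point of the strictly convex $F|_{\bar B}$, hence its minimizer, and the exterior comparison rules out anything smaller; plugging $r=\theta^{\star}$ into the displayed formula gives the middle branch. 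If $\|\mD\vx\|_{2} \le \alpha\lambda$, then $\vzero$ is the minimizer of $F|_{\bar B}$ by convexity and again no exterior point does better, so the minimizer is $\vzero$. The main obstacle is precisely the non-convexity of $F$ on all of $\sR^{n}$: the argument must be arranged so that convexity is invoked only on $\bar B$ (where the hypothesis $\alpha < \beta d_{\min}$ is used) and the exterior is disposed of separately by the crude bound $F \ge \beta\alpha\lambda^{2}/2$, with the threshold inequalities $\|\vx\|_{2} \lessgtr \beta\lambda$ and $\|\mD\vx\|_{2} \lessgtr \alpha\lambda$ falling out of evaluating $\phi$ at $\beta\lambda$ and at $0$. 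A secondary, more routine obstacle is verifying the strict monotonicity and the $0/\beta\lambda$ endpoint values of $\phi$ (hence uniqueness of $\theta^{\star}$), which is where $\alpha < \beta d_{\min}$ is used a second time.
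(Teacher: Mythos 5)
Your proof is correct, and it reaches the result by a genuinely different (more direct) route than the paper. The paper first establishes the \emph{unweighted} proximal operator of the composed penalty $h\circ\mD$ (\cref{prop:prox-mcp-l2-unweighted}, under the condition $\beta>\alpha d_{\max}^{2}$) and then obtains \cref{thm:weighted-prox-mcp-l2} by the reduction $\prox_{\alpha h}^{\mA}(\vx)=\mA^{-1/2}\prox_{\alpha h\circ\mA^{-1/2}}(\mA^{1/2}\vx)$ of \cref{lem:weighted-prox-to-prox}, with the substitution $\mD\rightarrow\mD^{-1/2}$, $\vx\rightarrow\mD^{1/2}\vx$ (which turns $\beta>\alpha d_{\max}^{2}$ into $\alpha<\beta d_{\min}$ and $\|\mD^{-1}\vx\|_{2}$, $\|\mD\vx\|_{2}$ into $\|\mD\vx\|_{2}$, $\|\vx\|_{2}$). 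You instead minimize $\tfrac{1}{2}\|\vz-\vx\|_{\mD}^{2}+\alpha\,\MCP(\|\vz\|_{2})$ directly; the hypothesis $\alpha<\beta d_{\min}$ then appears transparently as the positivity of the quadratic coefficients $d_{i}-\alpha/\beta$ on the ball $\bar B$, rather than being inherited through the change of variables. The core computations are the same in both arguments: the stationarity condition parametrized by $r=\|\vz\|_{2}$, the subdifferential test at $\vzero$, and the scalar equation whose strictly decreasing left-hand side evaluated at $0$ and at $\beta\lambda$ produces the thresholds $\|\mD\vx\|_{2}\gtrless\alpha\lambda$ and $\|\vx\|_{2}\gtrless\beta\lambda$. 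Where your write-up adds value is in the explicit handling of the non-convexity: you restrict the convexity argument to $\bar B$ and dispose of the exterior by the comparison $F\geq\alpha\beta\lambda^{2}/2=F(\vx)$ together with continuity of $\MCP$ at $\beta\lambda$, whereas the paper's case analysis on $\|\mD\vz^{\star}\|_{2}$ states the first-order conditions ``for $F$ convex and differentiable at $\vz^{\star}$'' without spelling out why the piecewise-defined, globally non-convex objective cannot have its global minimizer in the other regime; your observation that $\|\mD\vx\|_{2}\leq\alpha\lambda$ already forces $\|\vx\|_{2}<\beta\lambda$ is also a nice consistency check that the paper leaves implicit. The only point worth tightening is the first case: to conclude that the global minimizer is exactly $\vx$ (and not a boundary point), note that $F>\alpha\beta\lambda^{2}/2$ strictly on $\partial B$ since $\vx\notin\partial B$ there.
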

The proof is provided in \cref{app:proof-weighted-prox-mcp-l2}. Moreover, guarantees on the approximation error induced by the numerical solver to determine $\theta^{\star}$ for MCP / $\ell_{2}$, similar to \cref{prop:approx-weighted-prox-l2}, are given in \cref{app:proof-approx-weighted-prox-mcp-l2}.

\section{Convergence analysis}
\label{sec:convergence-analysis}
In this section, we will make stochasticity more explicit, and we will consider the following composite objective function
\begin{equation}
    \min_{\vx \in \sR^{N}}\big(F(\vx) \triangleq \E_{\xi}[f(\vx;\xi)] + h(\vx)\big).
\end{equation}
Prior work studied the non-asymptotic convergence properties of stochastic proximal gradient descent on this kind of objective \citep{xu2019analysisprox,yun2020proxgen}. However, their results depend on the fact that the (weighted) proximal operator of $h$ can be computed exactly, which is unfortunately impossible in practice for structured sparsity inducing penalties (their weighted version at least). We will see that these results remain valid, despite the approximation of the weighted proximal operator. Note that related to our work, convergence guarantees of (deterministic) inexact proximal gradient methods also exist, although with stronger conditions on the convexity of $f$ and $h$ \citep{schmidt2011inexactprox}.

Because the overall objective function $F$ can be non-differentiable and non-convex, convergence is proven in terms of the expected distance of the \emph{Frechet subdifferential} $\widehat{\partial}F(\vx_{t})$ to zero\footnote{This reduces to $\|\nabla F(\vx_{t})\|_{2}$ when $F$ is differentiable; see \cref{def:frechet-subdifferential} for the definition of the Frechet subdifferential. In particular at a stationary point $\vx^{\star}$, we have $\vzero \in \widehat{\partial}F(\vx^{\star})$.} \citep{rockafellar1976monotone}. To derive the convergence bound, we borrow the assumptions made by \citet{yun2020proxgen}, which are recalled here:
\begin{assumption}[\citealp{yun2020proxgen}]
    \vphantom{x}
    \begin{enumerate}
        \item \textbf{L-smoothness}\quad The loss function $f$ is $L$-smooth and lower-bounded: $f(\vx^{\star}) > -\infty$ for the optimal solution $\vx^{\star}$, and $\forall \vx, \vy, \|\nabla f(\vx) - \nabla f(\vy)\|_{2} \leq L \|\vx - \vy\|_{2}$.
        \item \textbf{Bounded variance}\quad The stochastic gradient $\vg_{t} = \nabla f(\vx_{t};\xi_{t})$ is unbiased, and has bounded variance: $\E_{\xi}[\vg_{t}] = \nabla f(\vx_{t})$, and $\E_{\xi}[\|\vg_{t} - \nabla f(\vx_{t})\|^{2}_{2}] \leq \sigma^{2}$.
        \item (i) The update and (ii) the stochastic gradient are bounded, and (iii) the momentum parameter is exponentially decaying: (i) $\|\vx_{t+1} - \vx_{t}\|_{2} \leq D$, (ii) $\|\vg_{t}\|_{2} \leq G$, (iii) $\rho_{t} = \rho_{0}\mu^{t}$, with $\mu \in [0, 1)$.
        \item \textbf{Sufficiently positive-definite}\quad For all $t$, $\mD_{t} \succeq \delta \mI$ (i.e. $\mD_{t} - \delta \mI$ is positive semi-definite), and $\alpha_{t}\mD_{t}^{-1} \succeq \gamma \mI$, with $\delta, \gamma > 0$.
    \end{enumerate}
    \label{hyp:proxgen-assumptions}
\end{assumption}
In addition to these assumptions, we also need to account for the possible approximation of the weighted proximal operator of the structured sparsity inducing penalties. Here, we add the following assumption
\begin{assumption}
    Let $\vx_{t+1}^{\star} = \prox_{\alpha h}^{\mD_{t}}\!\big(\vx_{t} - \alpha \mD_{t}^{-1}\vm_{t}\big)$ be the exact proximal update. For all $t \geq 0$, we have either
    \begin{enumerate}
        \item $\vx_{t+1}$ is the exact proximal update: $\vx_{t+1} \leftarrow \vx_{t+1}^{\star}$;
        \item or $\vx_{t+1}$ is an $\varepsilon_{t+1}$-approximation of $\vx_{t+1}^{\star}$, $\|\vx_{t+1} - \vx_{t+1}^{\star}\|_{2} \leq \varepsilon_{t+1}$, and $h$ is $L'$-smooth in the $\varepsilon_{t+1}$-ball $\gB$ around $\vx_{t+1}^{\star}$ (i.e. $\vx_{t+1} \in \gB$).
    \end{enumerate}
    Moreover, we assume that the approximation of the weighted proximal operator becomes more accurate as $t$ grows. Specifically, with the convention $\varepsilon_{t+1} = 0$ if $\vx_{t+1} = \vx_{t+1}^{\star}$,
    \begin{equation*}
        \sum_{t=0}^{+\infty} \varepsilon_{t+1}^{2} = K < +\infty.
    \end{equation*}
    \label{hyp:prox-approximation}
    \vspace*{-1em}
\end{assumption}
\cref{hyp:prox-approximation} is a slightly weaker assumption than that of \citet{yun2020proxgen}: instead of always relying on an exact proximal update ($\vx_{t+1} = \vx_{t+1}^{\star}$), we allow the update to be approximated at points where the penalty is smooth. This condition is in particular satisfied for both structured sparsity inducing penalties studied in this paper, since the value of $\theta^{\star}$, which is the only approximated quantity, is only used to define the weighted proximal operator in a regime where the corresponding penalty is smooth (at least if $\|\mD\vx\|_{2} > \alpha \lambda$). The decreasing sequence of approximation errors is also a well accepted assumption in the context of inexact proximal methods \citep{rockafellar1976monotone}. Under these conditions, we can derive a bound on the convergence of stochastic proximal methods, similar to the one given in \citet{xu2019analysisprox}. Note that \cref{alg:minimization-composite-loss} is given in \cref{app:algorithmic-details}.
\begin{theorem}
    Suppose that \cref{hyp:proxgen-assumptions,hyp:prox-approximation} are satisfied. If we run \cref{alg:minimization-composite-loss} with a non-increasing step-size $\alpha_{t}$, such that $\alpha_{0} < \delta/2L$, then the output $\vx_{R}$ of \cref{alg:minimization-composite-loss}, where $R$ is sampled uniformly in $\{1, \ldots, T\}$, satisfies
    \begin{equation*}
        \scalebox{0.85}{$\displaystyle \E_{R}\big[\dist\!\big(\vzero, \widehat{\partial}F(\vx_{R})\big)^{2}\big] \leq \frac{C_{1}}{T}\sum_{t=0}^{T-1}\|\vg_{t} - \nabla f(\vx_{t})\|_{2}^{2} + \frac{C_{2}\Delta}{T} + \frac{C_{3}}{T},$}
    \end{equation*}
    where $\Delta = F(\vx_{0}) - F(\vx^{\star})$ ($\vx^{\star}$ is a solution of \cref{eq:composite-loss}), and with $C_{1}$, $C_{2}$, and $C_{3}$ positive constants independent of $T$. Here $\dist(\vz, S)$ is the distance of a set $S$ to a point $\vz$, defined as the minimal distance of any point in $S$ to $\vz$.
    \label{thm:convergence-analysis}
\end{theorem}
The proof of \cref{thm:convergence-analysis} is given in \cref{app:proof-convergence-analysis}. We note that this matches previous (non-asymptotic) convergence guarantees of adaptive proximal gradient methods from \citet{yun2020proxgen}, albeit with the slightly weaker \cref{hyp:prox-approximation}. Moreover, the corollary results from \citet{xu2019analysisprox} of the convergence of mini-batch stochastic proximal methods can be directly transposed here; in particular, in the case where the mini-batch size is fixed:
\begin{corollary}[Fixed mini-batch size]
    If the assumptions of \cref{thm:convergence-analysis} are satisfied, with $T = 2(C_{2}\Delta + C_{3})/\varepsilon^{2}$ and with a fixed mini-batch size $m_{t}$ with $m_{t} = 2C_{1}\sigma^{2}/\varepsilon^{2}$, then the output $\vx_{R}$ of \cref{alg:minimization-composite-loss} satisfies
    \begin{equation*}
        \E_{R}\big[\dist\!\big(\vzero, \widehat{\partial}F(\vx_{R})\big)^{2}\big] \leq \varepsilon^{2},
    \end{equation*}
    where $C_{1}$, $C_{2}$, and $C_{3}$ are the constants from \cref{thm:convergence-analysis}. To have $\E[\dist(\vzero, \widehat{\partial}F(\vx_{R}))] \leq \varepsilon$, it is then sufficient to have $T = O(1 / \varepsilon^{2})$, making the total complexity $O(1 / \varepsilon^{4})$.
    \label{cor:fixed-batch-size}
\end{corollary}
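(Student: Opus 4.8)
The plan is to specialize the general bound of \cref{thm:convergence-analysis} to the prescribed values of $T$ and of the mini-batch size, following the argument of \citet{xu2019analysisprox}. First I would take the expectation over all the randomness (the stochastic gradients $\xi_{0}, \ldots, \xi_{T-1}$, in addition to the index $R$) on both sides of the inequality in \cref{thm:convergence-analysis}. The only term on the right-hand side that is still random after this is $\frac{C_{1}}{T}\sum_{t=0}^{T-1}\|\vg_{t} - \nabla f(\vx_{t})\|_{2}^{2}$, and I would control its expectation with the mini-batch version of the bounded-variance assumption (\cref{hyp:proxgen-assumptions}, item 2): since $\vg_{t}$ is the average of $m_{t}$ i.i.d.\ unbiased estimates of $\nabla f(\vx_{t})$, we have $\E_{\xi}[\|\vg_{t} - \nabla f(\vx_{t})\|_{2}^{2}] \leq \sigma^{2}/m_{t}$. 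With a fixed mini-batch size $m_{t} = m$ for all $t$, this yields
\begin{equation*}
    \E\big[\dist\!\big(\vzero, \widehat{\partial}F(\vx_{R})\big)^{2}\big] \leq \frac{C_{1}\sigma^{2}}{m} + \frac{C_{2}\Delta + C_{3}}{T}.
\end{equation*}

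Next I would substitute the two prescribed values. Choosing $m = 2C_{1}\sigma^{2}/\varepsilon^{2}$ makes the first term equal to $\varepsilon^{2}/2$, and choosing $T = 2(C_{2}\Delta + C_{3})/\varepsilon^{2}$ makes the second term equal to $\varepsilon^{2}/2$; summing the two halves gives $\E[\dist(\vzero, \widehat{\partial}F(\vx_{R}))^{2}] \leq \varepsilon^{2}$, which is the first claim. For the second claim I would invoke Jensen's inequality (concavity of $\sqrt{\cdot}$): $\E[\dist(\vzero, \widehat{\partial}F(\vx_{R}))] \leq \sqrt{\E[\dist(\vzero, \widehat{\partial}F(\vx_{R}))^{2}]} \leq \varepsilon$, so the same choices $T = 2(C_{2}\Delta + C_{3})/\varepsilon^{2} = O(1/\varepsilon^{2})$ and $m = 2C_{1}\sigma^{2}/\varepsilon^{2} = O(1/\varepsilon^{2})$ are sufficient. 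The total number of stochastic gradient evaluations is then $T \cdot m = O(1/\varepsilon^{4})$.

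I do not expect any serious obstacle: the statement is a direct corollary of \cref{thm:convergence-analysis}, and the only point requiring a little care is the variance-reduction step, namely checking that averaging $m$ independent unbiased gradient samples scales the variance bound down by a factor of $m$, so that \cref{hyp:proxgen-assumptions}(2) effectively applies with $\sigma^{2}$ replaced by $\sigma^{2}/m$; everything else is arithmetic and a single use of Jensen's inequality. One should also verify that the hypotheses of \cref{thm:convergence-analysis} (in particular the non-increasing step-size with $\alpha_{0} < \delta/2L$, as well as \cref{hyp:prox-approximation}) are unaffected by these choices, which they are, since neither $T$ nor $m$ enters those conditions.
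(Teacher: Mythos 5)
Your proposal is correct and takes essentially the same route as the paper's proof: bound the mini-batch gradient variance by $\sigma^{2}/m_{t}$ (averaging $m_{t}$ independent unbiased samples), substitute the prescribed $T$ and $m_{t}$ into the bound of \cref{thm:convergence-analysis} so each term contributes $\varepsilon^{2}/2$, and conclude via Jensen for the first-moment statement. Your write-up is in fact slightly more careful than the paper's in making the expectation over the stochastic gradients explicit and in spelling out the Jensen step for $\E[\dist(\vzero,\widehat{\partial}F(\vx_{R}))]\leq\varepsilon$.
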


\section{Experimental results}
\label{sec:experimental-results}
To validate that proximal gradient methods with structured sparsity inducing penalties are indeed capable of finding solutions with structure in their sparsity patterns, we experimented with two families of representative architectures: convolutional neural networks from computer vision, and transformers from natural language processing. In all experiments with proximal gradient methods, we used the \textsc{ProxGen} algorithm \citep{yun2020proxgen} presented in \cref{sec:adaptive-proximal-optimizers}, and the weighted proximal operators given in \cref{thm:weighted-prox-l2,thm:weighted-prox-mcp-l2}; the adaptive optimizer (i.e. the form of the preconditioning matrix) is Adam \citep{kingma2014adam}. In order to approximate the weighted proximal operators, we used the Newton-Raphson algorithm for its fast convergence properties; see \cref{alg:newton-raphson-l2} for details about this procedure. We will get back to the choice of this algorithm in \cref{sec:approximation-weighted-prox}.

\subsection{Convolutional Neural Networks}
\label{sec:convolutional-neural-networks}
To show the advantage of structured sparsity in convolutional architectures, we trained a VGG-16 \citep{simonyan2015vgg} on CIFAR-10 using both the mixed $\ell_{1}/\ell_{2}$ norm and group MCP. The penalty is only applied to the weights of the neural network, leaving the biases unpenalized. Following \citet{wen2016structured}, we study \emph{channel-wise} structured sparsity for the convolutional layers, where the groups correspond to all the outgoing weights of a single channel; see \cref{fig:structured-sparsity} for an illustration. Likewise, we use a \emph{row-wise} structure for the weights of the unique fully-connected layer. This structure is motivated by finding a network where intermediate representations are only influenced by a subset of channels from the previous layer; we will return to the advantages of this choice in \cref{sec:pruning-indirect-sparsity}. Overall, there are 4k groups of variables, in a model containing about 15M parameters. We also experimented with Residual Networks, with similar sizes and groups; details and experimental results are available in \cref{app:residual-networks}.

\begin{table}[t]
    \vspace*{-1em}
    \centering
    \caption{Performance of VGG-16 trained on CIFAR-10, with different structured sparsity inducing penalties, reported as the mean and standard deviation over 3 runs. For each penalty, the first line corresponds to training with subgradient methods \citep{wen2016structured}, and ``+ prox.'' with proximal gradient methods. Here, group sparsity is the proportion of groups (out of 4k) with non-zero norm. ${}^{\star}$Required a thresholding step, see the text for details.}
    \vspace*{0.1in}
    \begin{tabular}{lcc}
        \toprule
         & Group Sparsity & Test accuracy\\
        \midrule
        Baseline & -- & $90.76 \pm 0.29 \%\phantom{{}^{\star}}$ \\
        \midrule
        $\ell_{1}/\ell_{2}$ & $20.74 \pm 0.30 \%^{\star}$ & $89.53 \pm 0.43 \%^{\star}$  \\
        \hspace{2em}+ prox. & $22.03 \pm 0.33 \%\phantom{{}^{\star}}$ & $89.55 \pm 0.14 \%\phantom{{}^{\star}}$ \\[0.3em]
        Group MCP & $20.93 \pm 0.30 \%^{\star}$ & $89.59 \pm 0.30\%^{\star}$ \\
        \hspace{2em}+ prox. & $22.63 \pm 0.15 \%\phantom{{}^{\star}}$ & $89.80 \pm 0.06 \%\phantom{{}^{\star}}$ \\
        \bottomrule
    \end{tabular}
    \label{tab:vgg16-group-sparsity}
\end{table}

With this experiment, we also want to show the effectiveness of proximal gradient methods with the (non-differentiable) penalties studied in this paper, as opposed to subgradient methods \citep{wen2016structured}, where the composite objective was simply trained with Adam. \cref{tab:vgg16-group-sparsity} compares the performance for both training methods, with both structured sparsity inducing penalties. In the case of subgradient methods, the solution found for both penalties was not sparse, i.e. no group had zero norm, meaning a group sparsity of $100\%$. Following \citet{wen2016structured}, we applied a post-processing thresholding step to these networks, where groups with small enough $\ell_{2}$ norm were zeroed-out. We found that this step was highly sensitive to the choice of the threshold, jumping from $90\%$ accuracy down to $10\%$ (i.e. a random predictor) over a small range of values; the values reported in \cref{tab:vgg16-group-sparsity} are trading-off accuracy for group sparsity.

In contrast to subgradient methods, the solutions found using proximal gradient methods are capable of reaching a low level of group sparsity, without any post-processing necessary (because the proximal operator can set groups of parameters to $0$), while maintaining a good accuracy compared to our baseline model.

\begin{figure*}[t]
    \centering
    \includegraphics[width=0.463\linewidth]{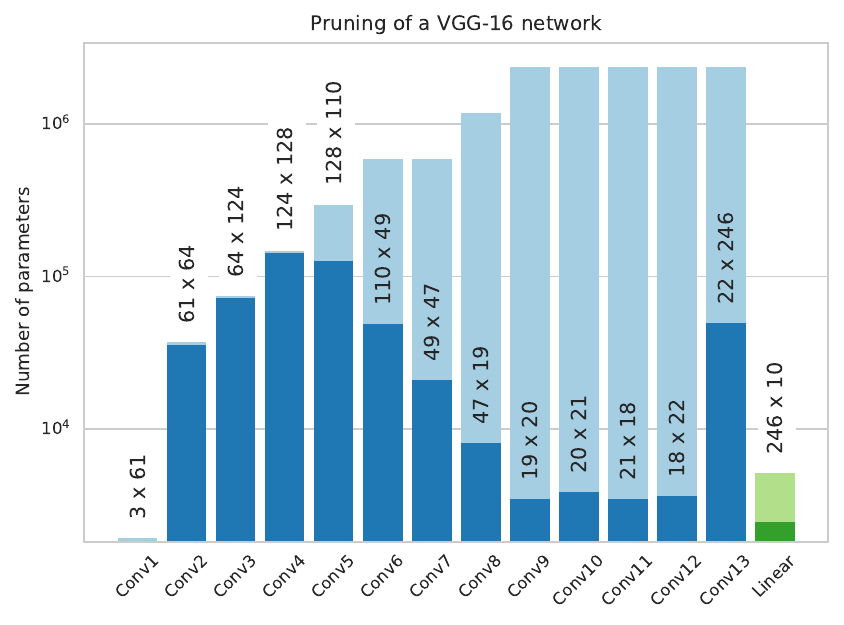}
    \includegraphics[width=0.45\linewidth]{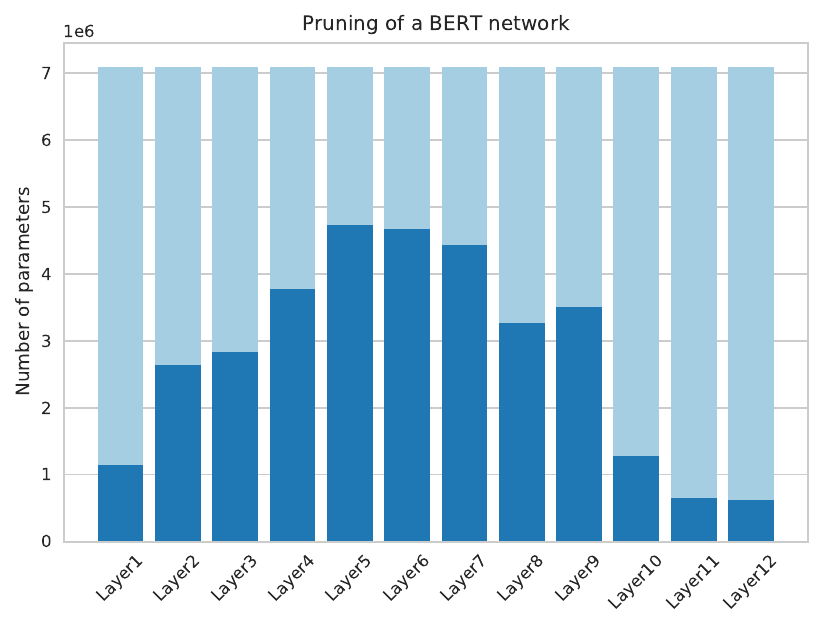}
    \vspace{-1em}
    \caption{Comparison of the size of the network before (light) and after (dark) pruning, for each layer of VGG-16 (left) and BERT (right). For VGG-16, convolutional layers (with their corresponding batch-normalization) are represented in blue, and the linear layer in green; note that since there is only a single linear layer at the end, this network has only 14 layers. The label above each bar represents the size of the layer after pruning (all convolutions have a kernel of size $3\times 3$).}
    \label{fig:pruned-vgg16-bert}
\end{figure*}

\subsection{Large-scale Transformers}
\label{sec:transformers}
Unlike in the large-scale computer vision community, which has mostly moved away from adaptive optimizers \citep{wilson2017marginalvalue}, Adam is still a popular choice for training and fine-tuning large-scale transformers for language modeling. To test our inexact proximal gradient method on a large-scale problem, we fine-tuned a BERT model \citep{devlin2018bert} on SQuAD 1.1, a question-answering benchmark \citep{rajpurkar2016squad}; the language model was initially pre-trained on a large corpus of articles from Wikipedia and the BookCorpus dataset. Taking inspiration from our experiment with VGG-16, we use a row-wise structure for the weight matrices of all the attention layers in the network. In the context of BERT, \citet{guo2019reweightedl1} found that this kind of structure was already emerging in some weight matrices with unstructured sparsity inducing penalties. Overall, there are 86k groups of variables, in a model containing 108M parameters. To be consistent with prior work on pruning language models \citep{sanh2020movementpruning}, the embeddings are kept fixed during fine-tuning.

\begin{table}[ht]
    \vspace*{-1em}
    \centering
    \caption{Performance of BERT fine-tuned on SQuAD, with different structured sparsity inducing penalties, and different levels of group sparsity. Here, group sparsity is the proportion of groups (out of 86k) with non-zero norm.}
    \vspace*{0.1in}
    \begin{tabular}{lccc}
        \toprule
         & Group Sparsity & EM & F1\\
        \midrule
        Baseline & -- & $81.01$ & $88.27$ \\
        \midrule
        \multirow{2}{*}{$\ell_{1}/\ell_{2}$} & $84.34\%$ & $72.29$ & $81.96$ \\
        & $76.14\%$ & $66.16$ & $77.50$ \\[0.2em]
        \multirow{2}{*}{Group MCP} & $81.49\%$ & $75.40$ & $84.50$ \\
        & $61.33\%$ & $69.47$ & $79.95$ \\
        \bottomrule
    \end{tabular}
    \label{tab:squad-group-sparsity}
\end{table}

\cref{tab:squad-group-sparsity} shows the performance of BERT, both in terms of F1 score and Exact Match (EM), for two choices of hyperparameters ($\lambda$ and $\beta$), leading to two values of group sparsity. In contrast to our experiments in \cref{sec:convolutional-neural-networks}, the levels of group sparsity reached using the structured sparsity inducing penalties are higher, meaning that fewer parameters are zeroed-out. The gap in performance between the sparse models and the baseline matches the gap found in prior work using unstructured sparsity \citep{sanh2020movementpruning}, although with more limited levels of sparsity here. We can also observe that at comparable levels of group sparsity, the solutions found with group MCP tend to perform better than the ones found with the mixed $\ell_{1}/\ell_{2}$ norm. These results are encouraging, and show that structured sparsity can also be effective on large-scale models, and remains practical even with a very large number of groups.

\subsection{Pruning with indirect sparsity}
\label{sec:pruning-indirect-sparsity}
Interestingly, as observed by \citet{li2017pruning}, enforcing channel-wise and row-wise structured sparsity at the level of one layer has an effect on the computational efficiency of that one layer, but it has consequences on the neighboring layers as well: if one channel sees its outgoing weights being zeroed-out, then it does not contribute to the output of the network anymore, and can therefore be ignored in upstream computations as well. This has a significant impact on the effective size of the network, beyond group sparsity, where connections can be further pruned \emph{indirectly} thanks to these structured sparsity inducing penalties without affecting predictions. The details of this pruning procedure are given in \cref{app:pruning-indirect-sparsity}.

\cref{fig:pruned-vgg16-bert} shows the proportion of parameters preserved after applying this procedure on two networks: VGG-19 trained with $\ell_{1}/\ell_{2}$ (group sparsity $22.03\%)$, and BERT fine-tuned with group MCP (group sparsity $61.33\%$). The effective sparsity (i.e. the size of the pruned network, divided by the size of the original network) is about $3\%$ for VGG-16, and $48\%$ for BERT, with significant gains on some layers (e.g. 3 orders of magnitude fewer parameters for Conv9-12 in VGG-16). For BERT, we are still below the high (unstructured) sparsity of \citet{sanh2020movementpruning}, where the same level of performance is reached with about $3\%$ of the size of the original network. To achieve better sparsity with structured sparsity inducing penalties, Transformers in general could benefit from better choices of groups, e.g. spanning over multiple weight matrices; see \cref{app:large-scale-transformers} for further discussion.

\begin{figure}[t]
    \centering
    \includegraphics[width=0.95\linewidth]{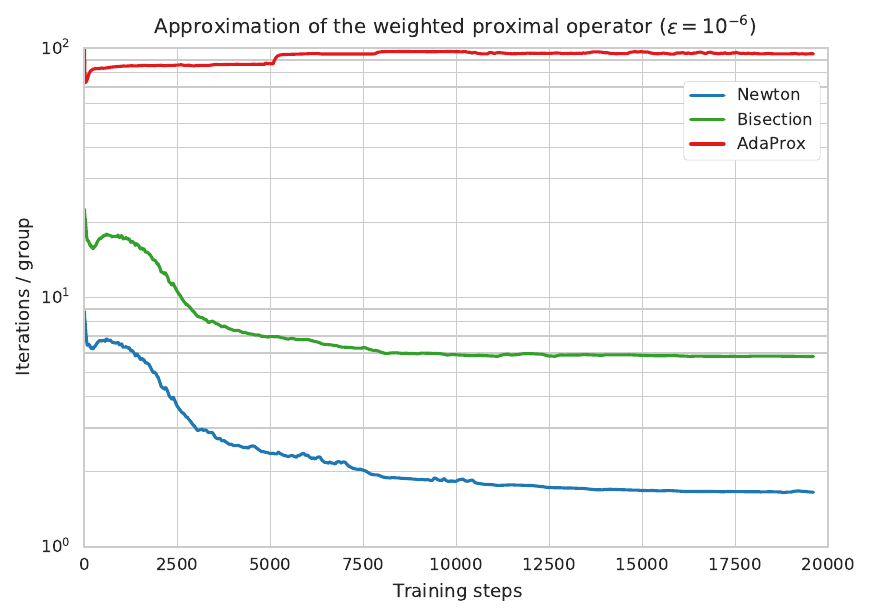}
    \vspace{-1em}
    \caption{Number of iterations, normalized by the total number of groups (here 4k), to approximate the weighted proximal operator with 3 different algorithms, while training a VGG-16 network on CIFAR-10. Note that Newton's algorithm reaches a number of iterations per group close to 1 at the end of training (about 1.65 iterations / group), in part thanks to the amortized cost of running the algorithm in a vectorized way.}
    \label{fig:approximation-weighted-prox}
    \vspace*{-1em}
\end{figure}

However, a big advantage of structured sparsity, with our choice of groups, is that the pruned models are smaller \emph{dense} networks: in addition to having a smaller memory footprint, this can accelerate inference, without any special requirement in terms of hardware (see \citet{sanh2020movementpruning} for discussions).

\subsection{Approximation of the weighted proximal operator}
\label{sec:approximation-weighted-prox}
Although our formulation of the weighted proximal operators for the mixed $\ell_{1}/\ell_{2}$ norm and group MCP in \cref{thm:weighted-prox-l2,thm:weighted-prox-mcp-l2} is independent of the choice of the numerical solver to find $\theta^{\star}$, in practice we chose the Newton-Raphson algorithm. In this section, we evaluate empirically how efficient this algorithm is over the course of training, compared to other approximations of the weighted proximal operator. This also helps us to estimate the overhead induced by this routine over standard first-order methods. 

As a direct replacement of Newton's algorithm, we consider a bisection-based method, similar to the one described in \citet{duchi2011adagrad}. We also compare it to AdaProx \citep{melchior2019adaprox}, a general method that approximates the weighted proximal operator by iterating over the (unweighted) proximal operator. Since all three methods are iterative methods, we evaluate their efficiency as the number of iterations per gradient update. Similar to \cref{sec:convolutional-neural-networks}, we trained a VGG-16 network on CIFAR-10 with the mixed $\ell_{1}/\ell_{2}$ norm as our penalty. In order to control for variations during the learning process, at each gradient update we approximate the weighted proximal operator with all three methods, but we only apply the one found by Newton's algorithm. We also use the same tolerance $\varepsilon = 10^{-6}$ for all algorithms.

\cref{fig:approximation-weighted-prox} shows the evolution of the number of iterations per group, and per gradient update, with the number of training steps. Thanks to the increasing group sparsity, the cost induced by both Newton's algorithm and bisection decreases over time: in the case of $\ell_{1}/\ell_{2}$, only non-zero $\prox(\vx)$ require an approximation of $\theta^{\star}$ (i.e. if $\|\mD\vx\|_{2} > \alpha\lambda$; see \cref{thm:weighted-prox-l2}). AdaProx, on the other hand, iterates over the proximal operator of $\ell_{2}$ regardless of our knowledge of $\|\mD\vx\|_{2}$, and therefore has a non-decreasing cost over the course of training. This cost is also significantly higher than the other two methods, because AdaProx was often reaching the maximum number of iterations, set at $100$ per gradient update. Finally, Newton's algorithm is about 3 times as fast as the bisection-based method, showing a clear advantage in terms of efficiency, and supporting our choice of this algorithm as part of our inexact proximal gradient method.

\section{Conclusion}
\label{sec:conclusion}
The weighted proximal operator is a quantity that plays a fundamental role in adaptive proximal gradient methods, a class of methods derived from adaptive optimizers in Deep Learning, specifically adapted for optimizing composite objectives with non-differentiable penalties. In this work, we derived the weighted proximal operators of two structured sparsity inducing penalties: the mixed $\ell_{1}/\ell_{2}$ norm, and group MCP. We saw that unlike their unweighted counterparts, there exists no closed-form expression for the weighted proximal operators of these penalties. Nevertheless, we found that they can be efficiently approximated using the Newton-Raphson algorithm, and they can be incorporated as part of a general proximal gradient method called ProxGen \citep{yun2020proxgen}. We also proved that the convergence guarantees of this algorithm were maintained, despite the resulting method being inexact. We showed experimentally that this proximal gradient method applied to structured sparsity inducing penalties was capable of finding sparse solutions, with structure in their sparsity patterns, on examples from computer vision with VGG-16, and natural language processing with BERT, with a moderate loss of performance. Finally, we leveraged this structure to prune these models even further, and obtain much smaller models while keeping them functionally identical.

Going beyond pruning neural networks, proximal gradient methods, together with the weighted proximal operators derived here, can be applied to any problem where structure is known or expected a priori. In future work, we would like to evaluate this method on other choices of groups, for example motivated by the causal structure of the problem.

\bibliography{references}

\begin{thebibliography}{45}
\providecommand{\natexlab}[1]{#1}
\providecommand{\url}[1]{\texttt{#1}}
\expandafter\ifx\csname urlstyle\endcsname\relax
  \providecommand{\doi}[1]{doi: #1}\else
  \providecommand{\doi}{doi: \begingroup \urlstyle{rm}\Url}\fi

\bibitem[Bach et~al.(2012{\natexlab{a}})Bach, Jenatton, Mairal, and
  Obozinski]{bach2011optimization}
Bach, F., Jenatton, R., Mairal, J., and Obozinski, G.
\newblock {Optimization with Sparsity-Inducing Penalties}.
\newblock \emph{Foundations and Trends in Machine Learning},
  2012{\natexlab{a}}.

\bibitem[Bach et~al.(2012{\natexlab{b}})Bach, Jenatton, Mairal, Obozinski,
  et~al.]{bach2012structuredconvexopt}
Bach, F., Jenatton, R., Mairal, J., Obozinski, G., et~al.
\newblock {Structured Sparsity through Convex Optimization}.
\newblock \emph{Statistical Science}, 2012{\natexlab{b}}.

\bibitem[Becker et~al.(2019)Becker, Fadili, and Ochs]{becker2019quasinewton}
Becker, S., Fadili, J., and Ochs, P.
\newblock {On Quasi-Newton Forward-Backward Splitting: Proximal Calculus and
  Convergence}.
\newblock \emph{SIAM Journal on Optimization}, 2019.

\bibitem[Breheny \& Huang(2009)Breheny and Huang]{breheny2009groupmcp}
Breheny, P. and Huang, J.
\newblock {Penalized methods for bi-level variable selection}.
\newblock \emph{Statistics and its Interface}, 2009.

\bibitem[Chen et~al.(2020)Chen, Wang, Ding, Ji, Yi, and Zhu]{chen2020half}
Chen, T., Wang, G., Ding, T., Ji, B., Yi, S., and Zhu, Z.
\newblock Half-space proximal stochastic gradient method for group-sparsity
  regularized problem.
\newblock \emph{arXiv preprint}, 2020.

\bibitem[Combettes \& Wajs(2005)Combettes and Wajs]{combettes2005proximal}
Combettes, P.~L. and Wajs, V.~R.
\newblock {Signal recovery by proximal forward-backward splitting}.
\newblock \emph{Multiscale Modeling \& Simulation}, 2005.

\bibitem[Copas(1983)]{copas1983shrinkage}
Copas, J.~B.
\newblock {Regression, Prediction and Shrinkage}.
\newblock \emph{Journal of the Royal Statistical Society: Series B
  (Methodological)}, 1983.

\bibitem[Devlin et~al.(2018)Devlin, Chang, Lee, and Toutanova]{devlin2018bert}
Devlin, J., Chang, M.-W., Lee, K., and Toutanova, K.
\newblock {BERT: Pre-training of Deep Bidirectional Transformers for Language
  Understanding}.
\newblock \emph{arXiv preprint}, 2018.

\bibitem[Duchi et~al.(2011)Duchi, Hazan, and Singer]{duchi2011adagrad}
Duchi, J., Hazan, E., and Singer, Y.
\newblock {Adaptive Subgradient Methods for Online Learning and Stochastic
  Optimization}.
\newblock \emph{Journal of Machine Learning Research}, 2011.

\bibitem[Duchi \& Singer(2009)Duchi and Singer]{singer2009fobos}
Duchi, J.~C. and Singer, Y.
\newblock {Efficient Learning using Forward-Backward Splitting}.
\newblock In \emph{Advances in Neural Information Processing Systems}, 2009.

\bibitem[Germain et~al.(2015)Germain, Gregor, Murray, and
  Larochelle]{germain2015made}
Germain, M., Gregor, K., Murray, I., and Larochelle, H.
\newblock {MADE: Masked Autoencoder for Distribution Estimation}.
\newblock In \emph{International Conference on Machine Learning}, 2015.

\bibitem[Guo et~al.(2019)Guo, Liu, Mungall, Lin, and Wang]{guo2019reweightedl1}
Guo, F.-M., Liu, S., Mungall, F.~S., Lin, X., and Wang, Y.
\newblock {Reweighted Proximal Pruning for Large-Scale Language
  Representation}.
\newblock \emph{arXiv preprint}, 2019.

\bibitem[Hiriart-Urruty \& Lemar{\'e}chal(1993)Hiriart-Urruty and
  Lemar{\'e}chal]{hiriart1993convex}
Hiriart-Urruty, J.-B. and Lemar{\'e}chal, C.
\newblock \emph{{Convex Analysis and Minimization Algorithms II: Advanced
  Theory and Bundle Methods}}.
\newblock Springer, 1993.

\bibitem[Ioannou et~al.(2017)Ioannou, Robertson, Cipolla, and
  Criminisi]{ioannou2017deeproots}
Ioannou, Y., Robertson, D., Cipolla, R., and Criminisi, A.
\newblock {Deep Roots: Improving CNN Efficiency with Hierarchical Filter
  Groups}.
\newblock In \emph{Proceedings of the IEEE Conference on Computer Vision and
  Pattern Recognition}, 2017.

\bibitem[Ke et~al.(2019)Ke, Bilaniuk, Goyal, Bauer, Larochelle, Sch{\"o}lkopf,
  Mozer, Pal, and Bengio]{ke2019neuralcausal}
Ke, N.~R., Bilaniuk, O., Goyal, A., Bauer, S., Larochelle, H., Sch{\"o}lkopf,
  B., Mozer, M.~C., Pal, C., and Bengio, Y.
\newblock {Learning Neural Causal Models from Unknown Interventions}.
\newblock \emph{arXiv preprint}, 2019.

\bibitem[Kingma \& Ba(2015)Kingma and Ba]{kingma2014adam}
Kingma, D.~P. and Ba, J.
\newblock {Adam: A Method for Stochastic Optimization}.
\newblock \emph{International Conference on Learning Representations}, 2015.

\bibitem[Krizhevsky et~al.(2012)Krizhevsky, Sutskever, and
  Hinton]{krizhevsky2012alexnet}
Krizhevsky, A., Sutskever, I., and Hinton, G.~E.
\newblock {ImageNet Classification with Deep Convolutional Neural Networks}.
\newblock In \emph{Advances in Neural Information Processing Systems}, 2012.

\bibitem[Kyrillidis et~al.(2015)Kyrillidis, Baldassarre, El~Halabi, Tran-Dinh,
  and Cevher]{kyrillidis2015structuredsparsity}
Kyrillidis, A., Baldassarre, L., El~Halabi, M., Tran-Dinh, Q., and Cevher, V.
\newblock {Structured Sparsity: Discrete and Convex approaches}.
\newblock In \emph{Compressed Sensing and its Applications}. Springer, 2015.

\bibitem[Lachapelle et~al.(2020)Lachapelle, Brouillard, Deleu, and
  Lacoste-Julien]{lachapelle2020grandag}
Lachapelle, S., Brouillard, P., Deleu, T., and Lacoste-Julien, S.
\newblock {Gradient-based Neural DAG Learning}.
\newblock \emph{International Conference on Learning Representations}, 2020.

\bibitem[Lee et~al.(2014)Lee, Sun, and Saunders]{lee2014proximalnewton}
Lee, J.~D., Sun, Y., and Saunders, M.~A.
\newblock {Proximal Newton-type methods for minimizing composite functions}.
\newblock \emph{SIAM Journal on Optimization}, 2014.

\bibitem[Lee \& Lee(2019)Lee and Lee]{lee2019proxadam}
Lee, S. and Lee, J.
\newblock {Compressed Learning of Deep Neural Networks for OpenCL-Capable
  Embedded Systems}.
\newblock \emph{Applied Sciences}, 2019.

\bibitem[Li et~al.(2017)Li, Kadav, Durdanovic, Samet, and Graf]{li2017pruning}
Li, H., Kadav, A., Durdanovic, I., Samet, H., and Graf, H.~P.
\newblock {Pruning Filters for Efficient Convnets}.
\newblock \emph{International Conference on Learning Representations}, 2017.

\bibitem[Li(2020)]{li2020proxl2}
Li, R.
\newblock Proximal operator of $ f \left( x \right) = {\left\| a x
  \right\|}_{2} $ where $ a $ is diagonal matrix (weighted $ {L}_{2} $ norm).
\newblock Mathematics Stack Exchange, 2020.
\newblock URL \url{https://math.stackexchange.com/q/3582685}.
\newblock (version: 2020-03-16).

\bibitem[Lin et~al.(2019)Lin, Ji, Li, Deng, and Li]{lin2019toward}
Lin, S., Ji, R., Li, Y., Deng, C., and Li, X.
\newblock {Toward Compact ConvNets via Structure-Sparsity Regularized Filter
  Pruning}.
\newblock \emph{IEEE Transactions on Neural Networks and Learning Systems},
  2019.

\bibitem[Melchior et~al.(2019)Melchior, Joseph, and
  Moolekamp]{melchior2019adaprox}
Melchior, P., Joseph, R., and Moolekamp, F.
\newblock {Proximal Adam: Robust Adaptive Update Scheme for Constrained
  Optimization}.
\newblock \emph{arXiv preprint}, 2019.

\bibitem[Moreau(1962)]{moreau1962proximal}
Moreau, J.~J.
\newblock {Fonctions convexes duales et points proximaux dans un espace
  hilbertien}.
\newblock \emph{Elsevier}, 1962.

\bibitem[Murphy(2012)]{murphy2012mlapp}
Murphy, K.~P.
\newblock \emph{{Machine Learning: A Probabilistic Perspective}}.
\newblock MIT press, 2012.

\bibitem[Parikh \& Boyd(2014)Parikh and Boyd]{parikh2014proximal}
Parikh, N. and Boyd, S.
\newblock {Proximal Algorithms}.
\newblock \emph{Foundations and Trends in Optimization}, 2014.

\bibitem[Rajpurkar et~al.(2016)Rajpurkar, Zhang, Lopyrev, and
  Liang]{rajpurkar2016squad}
Rajpurkar, P., Zhang, J., Lopyrev, K., and Liang, P.
\newblock {SQuAD: 100,000+ Questions for Machine Comprehension of Text}.
\newblock \emph{arXiv preprint}, 2016.

\bibitem[Rockafellar(1976)]{rockafellar1976monotone}
Rockafellar, R.~T.
\newblock Monotone operators and the proximal point algorithm.
\newblock \emph{SIAM journal on control and optimization}, 1976.

\bibitem[Rockafellar \& Wets(2009)Rockafellar and
  Wets]{rockafellar2009variational}
Rockafellar, R.~T. and Wets, R. J.-B.
\newblock \emph{{Variational Analysis}}.
\newblock {Springer Science \& Business Media}, 2009.

\bibitem[Rumelhart et~al.(1986)Rumelhart, Hinton, and
  Williams]{rumelhart1986momentum}
Rumelhart, D.~E., Hinton, G.~E., and Williams, R.~J.
\newblock {Learning representations by back-propagating errors}.
\newblock \emph{Nature}, 1986.

\bibitem[Sanh et~al.(2020)Sanh, Wolf, and Rush]{sanh2020movementpruning}
Sanh, V., Wolf, T., and Rush, A.~M.
\newblock {Movement Pruning: Adaptive Sparsity by Fine-Tuning}.
\newblock \emph{Advances in Neural Information Processing Systems}, 2020.

\bibitem[Schmidt et~al.(2011)Schmidt, Roux, and Bach]{schmidt2011inexactprox}
Schmidt, M., Roux, N.~L., and Bach, F.
\newblock {Convergence rates of inexact proximal-gradient methods for convex
  optimization}.
\newblock \emph{arXiv preprint}, 2011.

\bibitem[Simonyan \& Zisserman(2015)Simonyan and Zisserman]{simonyan2015vgg}
Simonyan, K. and Zisserman, A.
\newblock {Very Deep Convolutional Networks for Large-Scale Image Recognition}.
\newblock \emph{International Conference on Learning Representations}, 2015.

\bibitem[Tieleman \& Hinton(2012)Tieleman and Hinton]{tieleman2012rmsprop}
Tieleman, T. and Hinton, G.
\newblock {RMSprop: Divide the gradient by a running average of its recent
  magnitude}.
\newblock \emph{Neural networks for machine learning}, 2012.

\bibitem[Wen et~al.(2016)Wen, Wu, Wang, Chen, and Li]{wen2016structured}
Wen, W., Wu, C., Wang, Y., Chen, Y., and Li, H.
\newblock {Learning Structured Sparsity in Deep Neural Networks}.
\newblock In \emph{Advances in Neural Information Processing Systems}, 2016.

\bibitem[Wilson et~al.(2017)Wilson, Roelofs, Stern, Srebro, and
  Recht]{wilson2017marginalvalue}
Wilson, A.~C., Roelofs, R., Stern, M., Srebro, N., and Recht, B.
\newblock {The Marginal Value of Adaptive Gradient Methods in Machine
  Learning}.
\newblock In \emph{Advances in Neural Information Processing Systems}, 2017.

\bibitem[Xu et~al.(2019)Xu, Jin, and Yang]{xu2019analysisprox}
Xu, Y., Jin, R., and Yang, T.
\newblock {Non-asymptotic Analysis of Stochastic Methods for Non-Smooth
  Non-Convex Regularized Problems}.
\newblock In \emph{Advances in Neural Information Processing Systems}, 2019.

\bibitem[Yang et~al.(2020)Yang, Yuan, Chatzimichailidis, van Sloun, Lei, and
  Chatzinotas]{yang2020proxsgd}
Yang, Y., Yuan, Y., Chatzimichailidis, A., van Sloun, R.~J., Lei, L., and
  Chatzinotas, S.
\newblock {ProxSGD: Training Structured Neural Networks under Regularization
  and Constraints}.
\newblock In \emph{International Conference on Learning Representations}, 2020.

\bibitem[Yuan \& Lin(2006)Yuan and Lin]{yuan2006grouplasso}
Yuan, M. and Lin, Y.
\newblock Model selection and estimation in regression with grouped variables.
\newblock \emph{Journal of the Royal Statistical Society: Series B (Statistical
  Methodology)}, 2006.

\bibitem[Yun et~al.(2020)Yun, Lozano, and Yang]{yun2020proxgen}
Yun, J., Lozano, A.~C., and Yang, E.
\newblock {A General Family of Stochastic Proximal Gradient Methods for Deep
  Learning}.
\newblock \emph{arXiv preprint}, 2020.

\bibitem[Zhang(2010)]{zhang2010mcp}
Zhang, C.-H.
\newblock {Nearly Unbiased Variable Selection under Minimax Concave Penalty}.
\newblock \emph{The Annals of Statistics}, 2010.

\bibitem[Zhou et~al.(2016)Zhou, Alvarez, and Porikli]{zhou2016lessismore}
Zhou, H., Alvarez, J.~M., and Porikli, F.
\newblock {Less is More: Towards Compact CNNs}.
\newblock \emph{European Conference on Computer Vision}, 2016.

\bibitem[Zou \& Hastie(2005)Zou and Hastie]{zou2005elasticnet}
Zou, H. and Hastie, T.
\newblock {Regularization and Variable Selection via the Elastic Net}.
\newblock \emph{Journal of the Royal Statistical Society: Series B (Statistical
  Methodology)}, 2005.

\end{thebibliography}
\bibliographystyle{icml2021}

\newpage

\appendix
\onecolumn

\section{Proofs}
\label{app:proofs}
In this section, we prove the main results from the paper, namely \cref{thm:weighted-prox-l2,thm:weighted-prox-mcp-l2}, and \cref{prop:weighted-prox-l2-approx}. In addition to these results, we also prove additional results on the approximation error for MCP / $\ell_{2}$ (i.e. the equivalent of \cref{prop:weighted-prox-l2-approx} for MCP / $\ell_{2}$), as well as bounds on the different $\theta^{\star}$ used in the main theorems, which can be used to narrow down the search space for the root-finding algorithm. The proofs of these two theorems are based on the following lemma, which gives an expression of the weighted proximal operator of $h$ in terms of the (unweighted) proximal operator of another function:
\begin{lemma}[\citealt{becker2019quasinewton}]
    \label{lem:weighted-prox-to-prox}
    Let $\mA$ be a positive definite matrix. Then we have
    \begin{equation}
        \prox_{\alpha h}^{\mA}(\vx) = \mA^{-1/2} \prox_{\alpha h \circ \mA^{-1/2}}(\mA^{1/2}\vx).
    \end{equation}
\end{lemma}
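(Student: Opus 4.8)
The plan is to prove the identity directly from the definition of the weighted proximal operator in \cref{eq:weighted-proximal-operator}, via an explicit linear change of variables. Since $\mA$ is positive definite, it admits a unique positive definite (in particular symmetric and invertible) square root $\mA^{1/2}$, with inverse $\mA^{-1/2}$; consequently the map $\vz \mapsto \mA^{1/2}\vz$ is a bijection of $\sR^{n}$ onto itself. This is the only structural fact about $\mA$ that the argument uses.

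First I would rewrite the weighted quadratic term under the substitution $\vu = \mA^{1/2}\vz$ (equivalently $\vz = \mA^{-1/2}\vu$). Using symmetry of $\mA^{1/2}$,
$$\frac{1}{2}\|\vz - \vx\|_{\mA}^{2} = \frac{1}{2}\langle \vz - \vx,\ \mA(\vz - \vx)\rangle = \frac{1}{2}\big\|\mA^{1/2}\vz - \mA^{1/2}\vx\big\|_{2}^{2} = \frac{1}{2}\big\|\vu - \mA^{1/2}\vx\big\|_{2}^{2},$$
while at the same time $\alpha h(\vz) = \alpha\,(h\circ \mA^{-1/2})(\vu)$. Hence the objective defining $\prox_{\alpha h}^{\mA}(\vx)$ as a function of $\vz$ coincides, under $\vu = \mA^{1/2}\vz$, with the (unweighted) objective $\tfrac{1}{2}\|\vu - \mA^{1/2}\vx\|_{2}^{2} + \alpha(h\circ \mA^{-1/2})(\vu)$ that defines $\prox_{\alpha h\circ \mA^{-1/2}}(\mA^{1/2}\vx)$ in \cref{eq:proximal-operator}.

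Then I would conclude by the bijectivity of $\vz\mapsto\mA^{1/2}\vz$: minimizing the first objective over $\vz$ is the same as minimizing the second over $\vu$, and a minimizer $\vz^{\star}$ of the former corresponds to the minimizer $\vu^{\star} = \mA^{1/2}\vz^{\star}$ of the latter, i.e.
$$\prox_{\alpha h}^{\mA}(\vx) = \mA^{-1/2}\,\argmin_{\vu\in\sR^{n}}\Big(\tfrac{1}{2}\|\vu - \mA^{1/2}\vx\|_{2}^{2} + \alpha(h\circ \mA^{-1/2})(\vu)\Big) = \mA^{-1/2}\prox_{\alpha h\circ \mA^{-1/2}}(\mA^{1/2}\vx),$$
which is the desired identity.

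I do not expect a genuine obstacle here — the result is essentially a restatement after a norm-preserving coordinate change. The only point worth a remark is that, because $h$ is allowed to be non-convex, the (weighted) proximal operator may a priori be set-valued; the change of variables shows the two sets of minimizers are in exact correspondence under $\mA^{\pm 1/2}$, so the identity holds as stated (and both sides are single-valued whenever a minimizer is unique, e.g. when $h$ is convex, since the weighted quadratic term is then strongly convex).
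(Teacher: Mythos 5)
Your change-of-variables argument is correct and is the standard proof of this identity; the paper itself does not prove the lemma but simply cites it from \citet{becker2019quasinewton}, where the same substitution $\vu = \mA^{1/2}\vz$ is the underlying idea. Your closing remark about possible set-valuedness for non-convex $h$ is a sensible caveat and does not affect the validity of the correspondence between minimizers.
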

This lemma is valid for any positive definite matrix $\mA$; in particular, \citet{becker2019quasinewton} makes use of it to derive proximal quasi-Newton algorithms based on low-rank approximations of the Hessian. Here, we will only use it in the limited case where the matrix $\mD$ is diagonal (and positive definite).

\subsection{Weighted proximal operator of $\ell_{2}$}
\label{app:proof-weighted-prox-l2}
Based on \cref{lem:weighted-prox-to-prox}, in order to compute the weighted proximal operator of $h(\vx) = \lambda\|\vx\|_{2}$, it is sufficient to compute the (unweighted) proximal operator $h\circ \mA^{-1/2}$. The following proposition shows how to compute this operator, up to a change of variable (to simplify the notation), for a diagonal positive definite matrix $\mD$. \cref{prop:prox-l2-unweighted} is based on a result derived in \citep{li2020proxl2}; we include the proof here for completeness.

\begin{proposition}[\citealp{li2020proxl2}]
    \label{prop:prox-l2-unweighted}
    Let $\mD = \diag(d_{1}, \ldots, d_{n})$ be a positive definite diagonal matrix (i.e. $d_{i} > 0$ for all $i$), and $h(\vx) = \lambda \|\vx\|_{2}$ the $\ell_{2}$ penalty. The proximal operator of $h \circ \mD$ is given by
    \begin{equation}
        \big[\prox_{\alpha h \circ \mD}(\vx)\big]_{i} = \left\{\begin{array}{cl}
            \dfrac{\theta^{\star}x_{i}}{\theta^{\star} + \alpha \lambda d_{i}^{2}} & \textrm{if $\|\mD^{-1}\vx\|_{2} > \alpha \lambda$} \\[1em]
            0 & \textrm{otherwise,}
        \end{array}\right.
    \end{equation}
    where $\theta^{\star} > 0$ is the unique positive solution of
    \begin{equation}
        G(\theta^{\star}) \triangleq \sum_{i=1}^{n}\left[\frac{d_{i}x_{i}}{\theta^{\star} + \alpha\lambda d_{i}^{2}}\right]^{2} = 1.
        \label{eq:condition-theta}
    \end{equation}
\end{proposition}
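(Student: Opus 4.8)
The plan is to characterize $\prox_{\alpha h\circ\mD}(\vx)$ as the unique minimizer of the function $\phi(\vz)\triangleq\tfrac12\|\vz-\vx\|_{2}^{2}+\alpha\lambda\|\mD\vz\|_{2}$, noting that $(h\circ\mD)(\vz)=\lambda\|\mD\vz\|_{2}$. Since the quadratic term is strictly convex and coercive, $\phi$ has a unique minimizer $\vz^{\star}$, determined by Fermat's rule $\vzero\in\partial\phi(\vz^{\star})$. The map $\vz\mapsto\|\mD\vz\|_{2}$ is differentiable everywhere except at $\vz=\vzero$ (because $\mD$ is invertible), so the natural approach is to split into the cases $\vz^{\star}=\vzero$ and $\vz^{\star}\neq\vzero$ and show that each corresponds to one branch of the claimed formula.

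For the first case, $\vzero\in\partial\phi(\vzero)=-\vx+\alpha\lambda\,\partial(\|\mD\,\cdot\,\|_{2})(\vzero)$. By the subdifferential chain rule, $\partial(\|\mD\,\cdot\,\|_{2})(\vzero)=\{\mD\vu:\|\vu\|_{2}\le1\}$, so $\vzero$ is the minimizer iff $\|\mD^{-1}\vx\|_{2}\le\alpha\lambda$; together with uniqueness this pins down $\vz^{\star}=\vzero$ precisely on the complement of the region $\|\mD^{-1}\vx\|_{2}>\alpha\lambda$, which is the second branch.

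For the second case, $\|\mD^{-1}\vx\|_{2}>\alpha\lambda$ forces $\vz^{\star}\neq\vzero$, so $\phi$ is differentiable at $\vz^{\star}$ and $\nabla\phi(\vz^{\star})=\vz^{\star}-\vx+\alpha\lambda\,\mD^{2}\vz^{\star}/\|\mD\vz^{\star}\|_{2}=\vzero$. Setting $\theta^{\star}\triangleq\|\mD\vz^{\star}\|_{2}>0$ and reading off coordinates gives $z_{i}^{\star}=\theta^{\star}x_{i}/(\theta^{\star}+\alpha\lambda d_{i}^{2})$, the first branch. It then remains only to identify $\theta^{\star}$: substituting this expression back into the defining identity $\|\mD\vz^{\star}\|_{2}^{2}=(\theta^{\star})^{2}$ and dividing by $(\theta^{\star})^{2}$ reproduces exactly the condition $G(\theta^{\star})=1$ of \cref{eq:condition-theta}.

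Finally, I would check that $G(\theta)=1$ has a unique positive solution in this regime. Each term $[d_{i}x_{i}/(\theta+\alpha\lambda d_{i}^{2})]^{2}$ is continuous and non-increasing on $(0,\infty)$, and strictly decreasing whenever $x_{i}\neq0$; since $\|\mD^{-1}\vx\|_{2}>\alpha\lambda$ implies $\vx\neq\vzero$, $G$ is continuous and strictly decreasing, with $G(0^{+})=\|\mD^{-1}\vx\|_{2}^{2}/(\alpha\lambda)^{2}>1$ and $G(\theta)\to0$ as $\theta\to\infty$. The intermediate value theorem plus strict monotonicity yield a unique $\theta^{\star}>0$ with $G(\theta^{\star})=1$, and the associated $\vz^{\star}$ is then a stationary point of the convex function $\phi$, hence its global minimizer. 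The only subtle steps are the subdifferential computation at the origin and verifying that the two case boundaries agree at $\|\mD^{-1}\vx\|_{2}=\alpha\lambda$; the remaining calculations are routine.
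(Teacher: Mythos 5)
Your proposal is correct and follows essentially the same route as the paper: characterize the minimizer of $\tfrac12\|\vz-\vx\|_{2}^{2}+\alpha\lambda\|\mD\vz\|_{2}$, handle $\vz^{\star}=\vzero$ via the subdifferential at the origin (you apply Fermat's rule with the chain rule directly, where the paper reaches the same threshold $\|\mD^{-1}\vx\|_{2}\le\alpha\lambda$ by a change of variables), and for $\vz^{\star}\neq\vzero$ use the stationarity condition, the substitution $\theta^{\star}=\|\mD\vz^{\star}\|_{2}$, and monotonicity of $G$ to get existence and uniqueness. Your closing check that the candidate $\vz^{\star}$ built from $\theta^{\star}$ is indeed a stationary point of the convex objective is a nice touch of rigor that the paper leaves implicit.
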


\begin{proof}
Let us first recall the definition of the proximal operator of $h \circ \mD$
\begin{equation*}
    \prox_{\alpha h \circ \mD}(\vx) \triangleq \argmin_{\vz \in \sR^{n}}\underbrace{\frac{1}{2}\|\vz - \vx\|_{2}^{2} + \alpha\lambda \|\mD\vz\|_{2}}_{\triangleq\,F(\vz)}.
\end{equation*}

The function $F$ is a convex function, whose minimizers $\vz^{\star}$ characterize the proximal operator of interest. We start with the special case where $\vz^{\star} = \vzero$; by definition of a global minimizer, we have
\begin{align*}
    \forall \vz \in \sR^{n},\ \ F(\vz) \geq F(\vzero) & & \Leftrightarrow & & & \forall \vz \in \sR^{n},\ \  \frac{1}{2}\|\vz\|_{2}^{2} + \alpha \lambda \|\mD\vz\|_{2} \geq \langle \vz, \vx \rangle\\
    \eqcomment{change of variable $\vz \rightarrow \mD^{-1}\vy$}{\ }& & \Leftrightarrow & & & \forall \vy \in \sR^{n},\ \  \frac{1}{2}\|\mD^{-1}\vy\|_{2}^{2} + \alpha \lambda \|\vy\|_{2} \geq \langle \mD^{-1}\vy, \vx \rangle\\
    \eqcomment{$\mD^{-1}$ is symmetric}{\ }& & \Leftrightarrow & & & \forall \vy \in \sR^{n},\ \ \frac{1}{2}\|\mD^{-1}\vy\|_{2}^{2} + \alpha \lambda \|\vy\|_{2} \geq \langle \vy, \mD^{-1}\vx\rangle
\end{align*}

If we introduce a new function $H(\vy) \triangleq 1/2\|\mD^{-1}\vy\|_{2}^{2} + \alpha \lambda \|\vy\|_{2}$, by definition of a subgradient, the last inequality is equivalent to $\mD^{-1}\vx \in \partial H(\vzero)$, where $\partial H(\vzero)$ is the subdifferential of $H$ at $\vzero$:

\begin{equation*}
    \partial H(\vzero) = \{\vy \in \sR^{n}\mid \|\vy\|_{2} \leq \alpha \lambda\}.
\end{equation*}

This shows that $\vzero$ is a global minimizer of $F$ iff $\|\mD^{-1}\vx\|_{2} \leq \alpha \lambda$; in other words,
\begin{equation*}
    \prox_{\alpha h\circ \mD}(\vx) = \vzero \qquad \Leftrightarrow \qquad \|\mD^{-1}\vx\|_{2} \leq \alpha \lambda.
\end{equation*}

Now if the minimizer $\vz^{\star} \neq \vzero$ of $F$ is non-zero, we can use the necessary and sufficient first-order condition of optimality for $F$ ($F$ being convex and differentiable at $\vz^{\star}$ for $\vz^{\star} \neq \vzero$) to obtain
\begin{equation*}
    \vz^{\star} - \vx + \alpha \lambda \frac{\mD^{2}\vz^{\star}}{\|\mD\vz^{\star}\|_{2}} = \vzero \qquad \Leftrightarrow \qquad \vz^{\star} = \left[\mI + \alpha\lambda \frac{\mD^{2}}{\|\mD\vz^{\star}\|_{2}}\right]^{-1}\vx
\end{equation*}

Let us call $\theta^{\star} \triangleq \|\mD\vz^{\star}\|_{2} > 0$. We can rewrite the coordinates $z_{i}^{\star}$ of $\vz^{\star}$ more explicitly, as a function of $x_{i}$ and $\theta^{\star}$
\begin{equation*}
    \big[\prox_{\alpha h \circ \mD}(\vx)\big]_{i} \triangleq z_{i}^{\star} = \frac{\theta^{\star}x_{i}}{\theta^{\star} + \alpha\lambda d_{i}^{2}}.
\end{equation*}

Although we have found an expression for $\vz^{\star}$, the constant $\theta^{\star}$ remains a function of $\vz^{\star}$. This introduces some constraints on $\theta^{\star}$, which in turn will introduce constraints on $\vx$ to guarantee the existence of $\theta^{\star}$. For example, we have
\begin{equation*}
    \theta^{\star 2} = \|\mD\vz^{\star}\|_{2}^{2} = \sum_{i=1}^{n} d_{i}^{2}z_{i}^{\star 2} = \theta^{\star 2}\sum_{i=1}^{n} \left[\frac{d_{i}x_{i}}{\theta^{\star} + \alpha\lambda d_{i}^{2}}\right]^{2} \qquad \Leftrightarrow \qquad G(\theta^{\star}) = 1,
\end{equation*}

which is the condition on $\theta^{\star}$ in \cref{eq:condition-theta}. Moreover, the function $G$ is convex and monotonically decreasing on the positive line $\sR_{+}$. This implies that a solution $\theta^{\star} > 0$ of $G(\theta^{\star}) = 1$ exists (and is unique) if and only if $G(0) > 1$
\begin{equation*}
    G(0) = \frac{\|\mD^{-1}\vx\|_{2}^{2}}{(\alpha\lambda)^{2}} > 1 \qquad \Leftrightarrow \qquad \|\mD^{-1}\vx\|_{2} > \alpha \lambda.
\end{equation*}
In other words, we can conclude that
\begin{equation*}
    \big[\prox_{\alpha h \circ \mD}(\vx)\big]_{i} = \frac{\theta^{\star}x_{i}}{\theta^{\star} + \alpha \lambda d_{i}^{2}} \qquad \Leftrightarrow \qquad \|\mD^{-1}\vx\|_{2} > \alpha \lambda.
\end{equation*}
\end{proof}
Finally, \cref{thm:weighted-prox-l2} is a corollary of the above proposition. Let's first recall the theorem:
\begingroup
\def\thetheorem{\ref{thm:weighted-prox-l2}}
\begin{theorem}[Weighted proximal operator of $\ell_{2}$]
  Let $\mD = \diag(d_{1}, \ldots, d_{n})$ be a positive definite diagonal matrix (i.e. $d_{i} > 0$ for all $i$), and $h(\vx) = \lambda \|\vx\|_{2}$ the $\ell_{2}$ penalty. The weighted proximal operator of $h$ is given by
  \begin{equation*}
    \big[\prox_{\alpha h}^{\mD}(\vx)\big]_{i} = \left\{\begin{array}{cl}
        \dfrac{d_{i}\theta^{\star}x_{i}}{d_{i}\theta^{\star} + \alpha \lambda} & \textrm{if $\|\mD\vx\|_{2} > \alpha \lambda$}\\[1em]
        0 & \textrm{otherwise},
    \end{array}\right.
  \end{equation*}
  where $\theta^{\star} > 0$ is the unique positive solution of
  \begin{equation*}
    \sum_{i=1}^{n}\left[\frac{d_{i}x_{i}}{d_{i}\theta^{\star} + \alpha\lambda}\right]^{2} = 1.
  \end{equation*}
\end{theorem}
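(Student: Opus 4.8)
The plan is to obtain \cref{thm:weighted-prox-l2} as a direct corollary of \cref{lem:weighted-prox-to-prox} and \cref{prop:prox-l2-unweighted}, via a diagonal change of variables; I do not expect any genuine obstacle here, only careful bookkeeping of the scaling factors.

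First I would apply \cref{lem:weighted-prox-to-prox} with $\mA = \mD$. Since $\mD = \diag(d_{1},\ldots,d_{n})$ is diagonal with all $d_{i} > 0$, the matrices $\mD^{1/2} = \diag(\sqrt{d_{1}},\ldots,\sqrt{d_{n}})$ and $\mD^{-1/2}$ are well-defined, diagonal, and positive definite, so the lemma gives $\prox_{\alpha h}^{\mD}(\vx) = \mD^{-1/2}\prox_{\alpha h\circ\mD^{-1/2}}(\mD^{1/2}\vx)$. It then remains only to evaluate the \emph{unweighted} proximal operator $\prox_{\alpha h\circ\mD^{-1/2}}$ at the point $\mD^{1/2}\vx$.

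The second step is to invoke \cref{prop:prox-l2-unweighted}, but with the role of its diagonal matrix played by $\mE \triangleq \mD^{-1/2}$, i.e.\ with diagonal entries $e_{i} = d_{i}^{-1/2}$, evaluated at $\vy \triangleq \mD^{1/2}\vx$ so that $y_{i} = \sqrt{d_{i}}\,x_{i}$. The activation condition from the proposition, $\|\mE^{-1}\vy\|_{2} > \alpha\lambda$, becomes $\|\mD\vx\|_{2} > \alpha\lambda$, which matches the case split in the theorem (and the vanishing branch follows since $\mD^{-1/2}\vzero = \vzero$). In the active case the proposition gives $[\prox_{\alpha h\circ\mE}(\vy)]_{i} = \theta^{\star}y_{i}/(\theta^{\star} + \alpha\lambda e_{i}^{2})$; substituting $y_{i}$, $e_{i}$ and clearing the inner denominator by $d_{i}$ turns this into $\theta^{\star}d_{i}^{3/2}x_{i}/(d_{i}\theta^{\star} + \alpha\lambda)$, and left-multiplying by $\mD^{-1/2}$ (i.e.\ the $i$-th factor $d_{i}^{-1/2}$) produces exactly $[\prox_{\alpha h}^{\mD}(\vx)]_{i} = d_{i}\theta^{\star}x_{i}/(d_{i}\theta^{\star} + \alpha\lambda)$.

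Finally I would check that the defining equation for $\theta^{\star}$ transports correctly: in the proposition it reads $\sum_{i}\big[e_{i}y_{i}/(\theta^{\star} + \alpha\lambda e_{i}^{2})\big]^{2} = 1$, and since $e_{i}y_{i} = x_{i}$ and $\theta^{\star} + \alpha\lambda e_{i}^{2} = (d_{i}\theta^{\star} + \alpha\lambda)/d_{i}$, this collapses to $\sum_{i}\big[d_{i}x_{i}/(d_{i}\theta^{\star} + \alpha\lambda)\big]^{2} = 1$, which is \cref{eq:weighted-prox-l2-theta}; existence, uniqueness, and positivity of $\theta^{\star}$ are inherited directly from the proposition (via the convexity and strict monotonicity on $\sR_{+}$ of the associated function $G$). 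The only thing to be careful about is not to conflate the abstract matrix appearing in \cref{prop:prox-l2-unweighted} — here instantiated as $\mE = \mD^{-1/2}$, \emph{not} $\mD$ — and to track the half-integer powers of $d_{i}$ through the two scalings; I expect this bookkeeping, rather than any conceptual point, to be the only place where an error could plausibly creep in.
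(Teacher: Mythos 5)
Your proposal is correct and follows exactly the same route as the paper's own proof, which likewise obtains \cref{thm:weighted-prox-l2} by combining \cref{lem:weighted-prox-to-prox} (with $\mA = \mD$) and \cref{prop:prox-l2-unweighted} under the substitutions $\mD \rightarrow \mD^{-1/2}$, $\vx \rightarrow \mD^{1/2}\vx$. The only difference is that the paper states this in a single sentence, whereas you carry out the bookkeeping of the half-integer powers of $d_{i}$ explicitly; your computations (the activation condition becoming $\|\mD\vx\|_{2} > \alpha\lambda$, the active-case formula, and the transported equation for $\theta^{\star}$) all check out.
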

\addtocounter{theorem}{-1}
\endgroup
\begin{proof}
    We get the expected result using the changes of variable $\mD \rightarrow \mD^{-1/2}$ and $\vx \rightarrow \mD^{1/2}\vx$ in \cref{prop:prox-l2-unweighted}, together with \cref{lem:weighted-prox-to-prox} (pre-multiplication by $\mD^{-1/2}$). 
\end{proof}



\subsection{Approximation of the weighted proximal operator of $\ell_{2}$}
\label{app:proof-approx-weighted-prox-l2}
In this section, we prove \cref{prop:weighted-prox-l2-approx}, which is recalled here:
\begingroup
\def\theproposition{\ref{prop:weighted-prox-l2-approx}}
\begin{proposition}
    Let $\varepsilon > 0$, and $\tilde{\theta} > 0$ such that $|\tilde{\theta} - \theta^{\star}| \leq \varepsilon$ (recall that $\theta^{\star}$ is defined as the unique positive solution of \cref{eq:weighted-prox-l2-theta}). Let $\widetilde{\prox}^{\mD}_{\alpha h}(\vx)$ be an approximation of the weighted proximal operator of $\ell_{2}$ (i.e. $h(\vx) = \lambda\|\vx\|_{2}$), replacing $\theta^{\star}$ by $\tilde{\theta}$ in \cref{eq:weighted-prox-l2}. Then we have for all $\vx \in \sR^{n}$:
    \begin{equation}
        \big\|\widetilde{\prox}_{\alpha h}^{\mD}(\vx) - \prox_{\alpha h}^{\mD}(\vx)\big\|_{2} \leq \varepsilon.
    \end{equation}
    \label{prop:approx-weighted-prox-l2}
\end{proposition}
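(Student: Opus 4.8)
The plan is to avoid any analytic (mean-value or Lipschitz) argument and instead compute the difference of the two operators coordinate by coordinate, after which the bound falls out of the very equation \cref{eq:weighted-prox-l2-theta} that defines $\theta^\star$. First I would dispose of the trivial branch: if $\|\mD\vx\|_2 \le \alpha\lambda$, then the branch selected in \cref{eq:weighted-prox-l2} depends only on $\vx$ and not on whatever value is plugged in for $\theta^\star$, so both $\prox_{\alpha h}^{\mD}(\vx)$ and $\widetilde{\prox}_{\alpha h}^{\mD}(\vx)$ equal $\vzero$ and the inequality holds as $0 \le \varepsilon$. From here on I assume $\|\mD\vx\|_2 > \alpha\lambda$, so that both operators use the nonzero branch.

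Next, for each coordinate $i$ I would rewrite the nonzero branch as $\tfrac{d_i\theta x_i}{d_i\theta+\alpha\lambda} = x_i\bigl(1 - \tfrac{\alpha\lambda}{d_i\theta+\alpha\lambda}\bigr)$, once with $\theta=\theta^\star$ and once with $\theta=\tilde\theta$; subtracting, the constant term $x_i$ cancels and a one-line computation gives $[\widetilde{\prox}_{\alpha h}^{\mD}(\vx)]_i - [\prox_{\alpha h}^{\mD}(\vx)]_i = \dfrac{x_i d_i\,\alpha\lambda\,(\tilde\theta - \theta^\star)}{(d_i\theta^\star+\alpha\lambda)(d_i\tilde\theta+\alpha\lambda)}$. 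The key feature is that the scalar $\tilde\theta - \theta^\star$ factors out of every coordinate. Squaring and summing over $i$ then gives $\bigl\|\widetilde{\prox}_{\alpha h}^{\mD}(\vx) - \prox_{\alpha h}^{\mD}(\vx)\bigr\|_2^2 = (\tilde\theta-\theta^\star)^2 \sum_{i=1}^{n}\bigl(\tfrac{d_i x_i}{d_i\theta^\star+\alpha\lambda}\bigr)^2\bigl(\tfrac{\alpha\lambda}{d_i\tilde\theta+\alpha\lambda}\bigr)^2$. Since $d_i>0$, $\tilde\theta>0$, $\alpha>0$ and $\lambda>0$, each factor $\alpha\lambda/(d_i\tilde\theta+\alpha\lambda)$ lies in $(0,1)$, so dropping it only enlarges the sum, and the remaining sum $\sum_i (d_i x_i/(d_i\theta^\star+\alpha\lambda))^2$ is precisely the left-hand side of \cref{eq:weighted-prox-l2-theta}, hence equals $1$. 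Therefore the squared norm is at most $(\tilde\theta-\theta^\star)^2 \le \varepsilon^2$, and taking square roots finishes the proof.

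I do not anticipate a genuine obstacle — the whole argument is a short algebraic manipulation — but there is one pitfall I would be careful to avoid. A tempting shortcut is to regard $\theta \mapsto \vz(\theta)$ (the coordinatewise map above) as a curve and bound $\|\vz(\tilde\theta)-\vz(\theta^\star)\|_2$ by $|\tilde\theta-\theta^\star|$ via a mean-value inequality; but $\|\vz'(\theta)\|_2^2$ is only controlled by $\sum_i (d_i x_i/(d_i\theta+\alpha\lambda))^2$, which equals $1$ at $\theta^\star$ and exceeds $1$ for $\theta<\theta^\star$, so $\|\vz'(\theta)\|_2$ need not be $\le 1$ on the relevant interval when the numerical solver returns $\tilde\theta<\theta^\star$. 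The exact cancellation in the previous paragraph is exactly what makes the bound hold symmetrically, for $\tilde\theta$ on either side of $\theta^\star$; the same template, with the obvious changes, handles the MCP / $\ell_2$ analogue, since the coordinate formula in \cref{eq:weighted-prox-mcp-l2} has the same rational structure and $\theta^\star$ is again pinned down by the normalization \cref{eq:weighted-prox-mcp-l2-theta}.
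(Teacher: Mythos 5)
Your proposal is correct and follows essentially the same route as the paper: the same coordinatewise algebraic cancellation giving the factor $\alpha\lambda(\tilde\theta-\theta^\star)$, bounding $\alpha\lambda/(d_i\tilde\theta+\alpha\lambda)$ by $1$, and invoking the normalization $\sum_i\bigl(d_ix_i/(d_i\theta^\star+\alpha\lambda)\bigr)^2=1$ from the defining equation of $\theta^\star$. Your explicit handling of the zero branch and the remark on why a mean-value shortcut would fail are sensible additions, but the core argument is identical.
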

\addtocounter{proposition}{-1}
\endgroup

\begin{proof}
Let $\mD = \diag(d_{1}, \ldots, d_{n})$ be a positive definite diagonal matrix, and $d_{\min} > 0$ the smallest value of the diagonal of $\mD$. For all $\vx \in \sR^{n}$
\begin{align*}
    \big\|\widetilde{\prox}_{\alpha h}^{\mD}(\vx) - \prox_{\alpha h}^{\mD}(\vx)\big\|_{2}^{2} &\leq \sum_{i=1}^{n}\left[\frac{d_{i}\tilde{\theta}x_{i}}{d_{i}\tilde{\theta} + \alpha \lambda} - \frac{d_{i}\theta^{\star}x_{i}}{d_{i}\theta^{\star} + \alpha \lambda}\right]^{2}\\
    & = \sum_{i=1}^{n}\left[\frac{\alpha\lambda d_{i}x_{i}(\tilde{\theta} - \theta^{\star})}{(d_{i}\tilde{\theta} + \alpha\lambda)(d_{i}\theta^{\star} + \alpha\lambda)}\right]^{2}\\
    &\leq (\tilde{\theta} - \theta^{\star})^{2}\underbrace{\vphantom{\sum_{i=1}^{n}}\left[\frac{\alpha\lambda}{d_{\min}\tilde{\theta} + \alpha \lambda}\right]^{2}}_{\leq\,1}\underbrace{\sum_{i=1}^{n}\left[\frac{d_{i}x_{i}}{d_{i}\theta^{\star} + \alpha\lambda}\right]^{2}}_{=\,1\ \textrm{(\cref{eq:weighted-prox-l2-theta}})}\\
    &\leq (\tilde{\theta} - \theta^{\star})^{2} \leq \varepsilon^{2}
\end{align*}
\end{proof}

The following proposition also gives bounds on $\theta^{\star}$ defined in \cref{eq:weighted-prox-l2-theta}, to narrow down the search space for the numerical solver (such as the Newton-Raphson algorithm, see \cref{app:newton-raphson} for details).
\begin{proposition}[Bounds on $\theta^{\star}$ for the $\ell_{1}/\ell_{2}$ penalty]
    Let $\mD = \diag(d_{1}, \ldots, d_{n})$ be a positive definite diagonal matrix, with $d_{\min}$ and $d_{\max}$ being respectively the smallest and largest values of the diagonal of $\mD$. Let $\theta^{\star}$ defined by \cref{eq:weighted-prox-l2-theta}. Then for all $\vx$ such that $\|\mD\vx\|_{2} > \alpha \lambda$:
    \begin{equation}
        0 < \frac{\|\mD\vx\|_{2} - \alpha \lambda}{d_{\max}} \leq \theta^{\star} \leq \frac{\|\mD\vx\|_{2} - \alpha\lambda}{d_{\min}}.
    \end{equation}
    \label{prop:bounds-theta-weighted-prox-l2}
\end{proposition}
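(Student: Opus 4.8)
The plan is to exploit the strict monotonicity of the function defining $\theta^{\star}$ in \cref{eq:weighted-prox-l2-theta}. Set $\Phi(\theta) \triangleq \sum_{i=1}^{n}\big[d_{i}x_{i}/(d_{i}\theta + \alpha\lambda)\big]^{2}$ for $\theta \geq 0$. Since $\|\mD\vx\|_{2} > \alpha\lambda > 0$ forces $\vx \neq \vzero$, each summand $|d_{i}x_{i}|/(d_{i}\theta+\alpha\lambda)$ is non-increasing in $\theta$ (and at least one is strictly decreasing), so $\Phi$ is strictly decreasing on $\sR_{+}$ — this is the same observation already used for $G$ in the proof of \cref{prop:prox-l2-unweighted}. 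By the theorem, $\theta^{\star}$ is the unique point with $\Phi(\theta^{\star}) = 1$. Hence it suffices to evaluate the sign of $\Phi(\theta)-1$ at the two candidate endpoints: $\Phi(\theta) \geq 1 \Rightarrow \theta \leq \theta^{\star}$, and $\Phi(\theta) \leq 1 \Rightarrow \theta \geq \theta^{\star}$.

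For the lower bound I would take $\theta_{\mathrm{low}} = (\|\mD\vx\|_{2} - \alpha\lambda)/d_{\max}$, which is positive precisely because $\|\mD\vx\|_{2} > \alpha\lambda$. The key step is the elementary inequality $d_{i}\theta_{\mathrm{low}} + \alpha\lambda \leq d_{\max}\theta_{\mathrm{low}} + \alpha\lambda = \|\mD\vx\|_{2}$, valid for every $i$. Substituting into $\Phi$ and pulling the now $i$-independent denominator out of the sum gives $\Phi(\theta_{\mathrm{low}}) \geq \|\mD\vx\|_{2}^{-2}\sum_{i} d_{i}^{2}x_{i}^{2} = 1$, and monotonicity yields $\theta_{\mathrm{low}} \leq \theta^{\star}$.

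For the upper bound the argument is symmetric with $d_{\min}$ in place of $d_{\max}$: set $\theta_{\mathrm{high}} = (\|\mD\vx\|_{2} - \alpha\lambda)/d_{\min}$, use $d_{i}\theta_{\mathrm{high}} + \alpha\lambda \geq d_{\min}\theta_{\mathrm{high}} + \alpha\lambda = \|\mD\vx\|_{2}$ to obtain $\Phi(\theta_{\mathrm{high}}) \leq 1$, and conclude $\theta_{\mathrm{high}} \geq \theta^{\star}$. Combining the two inequalities gives the stated sandwich, and $\theta_{\mathrm{low}} > 0$ gives the strict leftmost inequality.

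I do not anticipate a genuine obstacle here: the only real idea is to notice that replacing $d_{i}$ by $d_{\max}$ (resp. $d_{\min}$) in the denominators collapses the sum exactly to $\|\mD\vx\|_{2}^{2}/\|\mD\vx\|_{2}^{2} = 1$ at these particular values of $\theta$, after which everything follows from $\Phi$ being strictly decreasing with unit level set $\{\theta^{\star}\}$. The only minor care needed is to confirm $\vx \neq \vzero$, so that $\Phi$ is genuinely decreasing and $\theta_{\mathrm{low}}$ is well-defined and positive, which is immediate from the hypothesis $\|\mD\vx\|_{2} > \alpha\lambda$.
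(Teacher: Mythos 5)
Your proof is correct and rests on the same key inequality as the paper's: replacing $d_{i}$ by $d_{\max}$ (resp.\ $d_{\min}$) in the denominators collapses the sum to $\|\mD\vx\|_{2}^{2}$ over a common denominator. The only organizational difference is that the paper applies this bound directly at $\theta^{\star}$ in the defining equation and solves the resulting scalar inequalities, which sidesteps even the (correctly justified) appeal to strict monotonicity of $\Phi$; both routes are equally valid.
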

\begin{proof}
Using the fact that for all $i$ we have $d_{\min} \leq d_{i} \leq d_{\max}$, we get the following inequalities:
\begin{equation*}
    \left[\frac{\|\mD\vx\|_{2}}{d_{\max}\theta^{\star} + \alpha\lambda}\right]^{2} \leq \underbrace{\sum_{i=1}^{n}\left[\frac{d_{i}x_{i}}{d_{i}\theta^{\star} + \alpha\lambda}\right]^{2}}_{=\,1\ \textrm{(\cref{eq:weighted-prox-l2-theta})}} \leq \left[\frac{\|\mD\vx\|_{2}}{d_{\min}\theta^{\star} + \alpha\lambda}\right]^{2}
\end{equation*}
These two inequalities give us the expected bounds on $\theta^{\star}$:
\begin{align*}
    \frac{\|\mD\vx\|_{2}}{d_{\max}\theta^{\star} + \alpha\lambda} &\leq 1 &\Leftrightarrow&& \frac{\|\mD\vx\|_{2} - \alpha\lambda}{d_{\max}} & \leq \theta^{\star}\\
    \frac{\|\mD\vx\|_{2}}{d_{\min}\theta^{\star} + \alpha\lambda} &\geq 1 &\Leftrightarrow&& \frac{\|\mD\vx\|_{2} - \alpha\lambda}{d_{\min}} & \geq \theta^{\star}
\end{align*}
\end{proof}

\subsection{Weighted proximal operator of MCP / $\ell_{2}$}
\label{app:proof-weighted-prox-mcp-l2}
Similar to \cref{app:proof-weighted-prox-l2}, we start by giving the expression of the proximal operator of $h \circ \mA^{-1/2}$ (up to a change of variable, for simplicity), in order to prove \cref{thm:weighted-prox-mcp-l2} via \cref{lem:weighted-prox-to-prox}.
\begin{proposition}
    \label{prop:prox-mcp-l2-unweighted}
    Let $\mD = \diag(d_{1}, \ldots, d_{n})$ be a positive definite diagonal matrix (i.e. $d_{i} > 0$ for all $i$), and $h(\vx\,;\beta, \lambda)$ the MCP / $\ell_{2}$ penalty defined in \cref{eq:mcp-l2}. Suppose that $\alpha$ and $\beta$ satisfy $\beta > \alpha d_{\max}^{2}$, where $d_{\max}$ is the largest value of the diagonal of $\mD$. The proximal operator of $h \circ \mD$ is given by
    \begin{equation}
        \big[\prox_{\alpha h \circ \mD}(\vx)\big]_{i} = \left\{\begin{array}{cl}
            x_{i} & \textrm{if $\|\mD\vx\|_{2} > \beta\lambda$} \\[0.8em]
            \dfrac{\beta\theta^{\star}x_{i}}{\beta\theta^{\star} + \alpha d_{i}^{2}(\beta\lambda - \theta^{\star})} & \textrm{\begin{tabular}{@{}l}if $\|\mD\vx\|_{2} \leq \beta \lambda$\\[0.3em] and $\|\mD^{-1}\vx\|_{2} > \alpha \lambda$\end{tabular}}\\[1.5em]
            0 & \textrm{\begin{tabular}{@{}l}if $\|\mD\vx\|_{2} \leq \beta \lambda$\\[0.3em] and $\|\mD^{-1}\vx\|_{2} \leq \alpha \lambda$,\end{tabular}}
        \end{array}\right.
    \end{equation}
    
    where $\theta^{\star} > 0$ is the unique positive solution of
    \begin{equation}
        G(\theta^{\star}) \triangleq \beta^{2}\sum_{i=1}^{n}\left[\frac{d_{i}x_{i}}{\theta^{\star}(\beta - \alpha d_{i}^{2}) + \alpha\beta\lambda d_{i}^{2}}\right]^{2} = 1.
        \label{eq:condition-theta-mcp}
    \end{equation}
\end{proposition}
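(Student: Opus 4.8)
The plan is to mirror the proof of \cref{prop:prox-l2-unweighted}, working with $F(\vz) \triangleq \frac{1}{2}\|\vz - \vx\|_{2}^{2} + \alpha\,\MCP(\|\mD\vz\|_{2};\beta,\lambda)$ and characterising its minimiser by first-order conditions. The genuinely new ingredient is that $\MCP$ is non-convex, so the first step is to show that the hypothesis $\beta > \alpha d_{\max}^{2}$ makes $F$ (strictly) convex, hence with a unique minimiser. On the convex sublevel set $S_{1} = \{\vz : \|\mD\vz\|_{2} \leq \beta\lambda\}$ one has $F(\vz) = \frac{1}{2}\langle\vz,(\mI - \tfrac{\alpha}{\beta}\mD^{2})\vz\rangle - \langle\vz,\vx\rangle + \tfrac{1}{2}\|\vx\|_{2}^{2} + \alpha\lambda\|\mD\vz\|_{2}$, and $\mI - \tfrac{\alpha}{\beta}\mD^{2} \succ 0$ precisely because $\alpha d_{i}^{2} < \beta$ for every $i$; combined with the convex term $\alpha\lambda\|\mD\vz\|_{2}$, this makes $F|_{S_{1}}$ convex. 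On the complementary convex set $S_{2} = \{\vz : \|\mD\vz\|_{2} \geq \beta\lambda\}$ the penalty is constant, so $F|_{S_{2}} = \frac{1}{2}\|\vz-\vx\|_{2}^{2} + \mathrm{const}$ is convex. Since $\MCP$ is $C^{1}$ (its derivative vanishes at the junction $r = \beta\lambda$), $F$ is $C^{1}$ on a neighbourhood of the common boundary $\{\|\mD\vz\|_{2} = \beta\lambda\}$, and a function convex on each of two closed convex sets covering $\sR^{n}$ and $C^{1}$ across their shared boundary is convex on $\sR^{n}$; strict convexity follows since both pieces are strictly convex.

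With convexity established, I would split into the three regimes according to where the minimiser $\vz^{\star}$ lies. If $\|\mD\vx\|_{2} > \beta\lambda$, then $\vx$ is interior to $S_{2}$, where $\nabla F(\vz) = \vz - \vx$, so $\vx$ is stationary and hence the global minimiser: $[\prox_{\alpha h\circ\mD}(\vx)]_{i} = x_{i}$. If $\vz^{\star} = \vzero$, the same subgradient argument as in \cref{prop:prox-l2-unweighted} applies: near the origin $F(\vz) = \frac{1}{2}\|\vz-\vx\|_{2}^{2} + \alpha\lambda\|\mD\vz\|_{2} - \tfrac{\alpha}{2\beta}\|\mD\vz\|_{2}^{2}$, whose subdifferential at $\vzero$ is $\{-\vx + \alpha\lambda\mD\vu : \|\vu\|_{2}\leq 1\}$, so $\vzero$ is the minimiser iff $\|\mD^{-1}\vx\|_{2} \leq \alpha\lambda$. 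In the remaining regime $\vz^{\star}$ is a nonzero point with $\|\mD\vz^{\star}\|_{2} < \beta\lambda$, where $F$ is differentiable; setting $\nabla F(\vz^{\star}) = \vzero$ and writing $\theta^{\star} = \|\mD\vz^{\star}\|_{2}$ yields, coordinate by coordinate, $z_{i}^{\star}\bigl(1 + \tfrac{\alpha\lambda d_{i}^{2}}{\theta^{\star}} - \tfrac{\alpha d_{i}^{2}}{\beta}\bigr) = x_{i}$, i.e. the stated expression $z_{i}^{\star} = \frac{\beta\theta^{\star}x_{i}}{\beta\theta^{\star} + \alpha d_{i}^{2}(\beta\lambda - \theta^{\star})}$; imposing the consistency relation $\theta^{\star 2} = \sum_{i} d_{i}^{2} z_{i}^{\star 2}$ and dividing by $\theta^{\star 2}$ gives exactly $G(\theta^{\star}) = 1$ from \cref{eq:condition-theta-mcp}.

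Finally I would establish existence and uniqueness of $\theta^{\star}$, as in the $\ell_{2}$ case but now on the bounded interval $(0,\beta\lambda]$. Since $\beta - \alpha d_{i}^{2} > 0$ for every $i$, each summand of $G$ has the form $(\text{positive affine, increasing in }\theta)^{-2}$, so (when $\vx \neq \vzero$; the case $\vx = \vzero$ falls under $\vz^{\star} = \vzero$) $G$ is strictly decreasing and convex on $[0,\beta\lambda]$. A direct computation gives $G(0) = \|\mD^{-1}\vx\|_{2}^{2}/(\alpha\lambda)^{2}$ and $G(\beta\lambda) = \|\mD\vx\|_{2}^{2}/(\beta\lambda)^{2}$, so the conditions $\|\mD^{-1}\vx\|_{2} > \alpha\lambda$ and $\|\mD\vx\|_{2} \leq \beta\lambda$ are exactly $G(0) > 1 \geq G(\beta\lambda)$, which by monotonicity pins down a unique $\theta^{\star} \in (0,\beta\lambda]$. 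I would conclude by checking that the three conditions partition $\sR^{n}$: $\beta > \alpha d_{\max}^{2}$ rules out $\|\mD^{-1}\vx\|_{2} \leq \alpha\lambda$ and $\|\mD\vx\|_{2} > \beta\lambda$ holding together (they would force $\beta\lambda/d_{\max} < \alpha\lambda d_{\max}$), and similarly excludes the other overlaps. The main obstacle is the first paragraph: because $h$ is truly non-convex, one must see that the assumption $\beta > \alpha d_{\max}^{2}$ is exactly what restores global convexity — so that first-order conditions are sufficient and no spurious stationary points arise — with the only delicate point being the gluing of the two convex pieces across the $C^{1}$ junction; the coordinate algebra and the monotone root-finding analysis are routine and parallel \cref{prop:prox-l2-unweighted}.
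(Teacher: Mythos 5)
Your proof is correct and follows essentially the same route as the paper's: a case analysis on the regime of the minimiser of $F(\vz) = \tfrac{1}{2}\|\vz-\vx\|_{2}^{2} + \alpha\,\MCP(\|\mD\vz\|_{2};\beta,\lambda)$, the subgradient condition at the origin, the first-order condition away from it, and the monotone analysis of $G$ on $(0,\beta\lambda]$ with $G(0) = \|\mD^{-1}\vx\|_{2}^{2}/(\alpha\lambda)^{2}$ and $G(\beta\lambda) = \|\mD\vx\|_{2}^{2}/(\beta\lambda)^{2}$. The one genuine addition is your opening paragraph establishing global convexity of $F$ under $\beta > \alpha d_{\max}^{2}$: the paper invokes the ``necessary and sufficient first-order condition of optimality for $F$'' and the convexity of the auxiliary function $H$ at the origin without ever proving that $F$ itself is convex, so making this explicit is an improvement — it is what licenses treating stationarity as sufficient and guarantees uniqueness of the minimiser across the three regimes. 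One correction to that paragraph, though: $S_{2} = \{\vz : \|\mD\vz\|_{2} \geq \beta\lambda\}$ is the complement of an open ellipsoid and is \emph{not} convex, so the gluing lemma you cite (``convex on each of two closed convex sets covering $\sR^{n}$\dots'') does not apply as stated. The fix is immediate and preserves your conclusion: restrict $F$ to an arbitrary line; the line meets the ellipsoid $\{\|\mD\vz\|_{2} = \beta\lambda\}$ in at most two points, so the restriction splits into at most three subintervals, on each of which it is convex (by your two piecewise computations), and it is $C^{1}$ at the junctions because the derivative of $\MCP$ vanishes at $\beta\lambda$; a univariate function that is piecewise convex with $C^{1}$ junctions is convex, hence $F$ is convex on $\sR^{n}$. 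Everything else — the coordinate algebra giving $z_{i}^{\star} = \beta\theta^{\star}x_{i}/(\beta\theta^{\star} + \alpha d_{i}^{2}(\beta\lambda - \theta^{\star}))$, the consistency equation $G(\theta^{\star})=1$, and the check that the three conditions on $\vx$ are mutually exclusive under $\beta > \alpha d_{\max}^{2}$ — matches the paper's argument.
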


\begin{proof}
Let us first recall the definition of the proximal operator of $h \circ \mD$
\begin{equation*}
    \prox_{\alpha h\circ \mD}(\vx) \triangleq \argmin_{\vz\in\sR^{n}}\underbrace{\frac{1}{2}\|\vz - \vx\|_{2}^{2} + \alpha h(\mD\vz\,;\beta, \lambda)}_{\triangleq\,F(\vz)}.
\end{equation*}

Let us first consider the case where a global minimizer $\vz^{\star}$ of $F$ satisfies $\|\mD\vz^{\star}\|_{2} > \beta\lambda$. By definition of the MCP / $\ell_{2}$ penalty in \cref{eq:mcp-l2}, this means that
\begin{equation*}
    F(\vz^{\star}) = \argmin_{\vz\in\sR^{n}}\frac{1}{2}\|\vz - \vx\|_{2}^{2} + \alpha\frac{\beta\lambda^{2}}{2}.
\end{equation*}

Since the condition $\|\mD\vz^{\star}\|_{2} > \beta\lambda$ implies that $\vz^{\star} \neq \vzero$, we can use the necessary and sufficient first-order condition of optimality for $F$ (which is convex and differentiable at $\vz^{\star} \neq \vzero$) to get $\vz^{\star} = \vx$. In other words, we have
\begin{equation*}
    \prox_{\alpha h \circ \mD}(\vx) = \vx \qquad \Leftrightarrow \qquad \|\mD\vx\|_{2} > \beta\lambda.
\end{equation*}

Otherwise, if a global minimizer $\vz^{\star}$ of $F$ satisfies $\|\mD\vz^{\star}\|_{2} \leq \beta\lambda$, then again by definition of $h$:
\begin{equation*}
    F(\vz^{\star}) = \argmin_{\vz\in\sR^{n}}\frac{1}{2}\|\vz - \vx\|_{2}^{2} + \alpha\left(\lambda\|\mD\vz\|_{2} - \frac{\|\mD\vz\|_{2}^{2}}{2\beta}\right).
\end{equation*}

Let us first consider the special case where $\vzero$ is a minimizer of $F$. Similar to the proof of \cref{prop:prox-l2-unweighted}, by definition of $\vzero$ being a global minimizer of $F$, we have
\begin{align*}
    \forall \vz \in \sR^{n},\ \ F(\vz) \geq F(\vzero) & & \Leftrightarrow & & & \forall \vz \in \sR^{n},\ \  \frac{1}{2}\|\vz\|_{2}^{2} + \alpha \left(\lambda \|\mD\vz\|_{2} - \frac{\|\mD\vz\|_{2}^{2}}{2\beta}\right) \geq \langle \vz, \vx \rangle\\
    & & \Leftrightarrow & & & \forall \vy \in \sR^{n},\ \ \frac{1}{2}\|\mD^{-1}\vy\|_{2}^{2} - \alpha\frac{\|\vy\|_{2}^{2}}{2\beta} + \alpha\lambda \|\vy\|_{2} \geq \langle \vy, \mD^{-1}\vx\rangle.
\end{align*}

The last inequality is equivalent to $\mD^{-1}\vx \in \partial H(\vzero)$, where $\partial H(\vzero)$ is the subdifferential of the function $H$ defined by:
\begin{align*}
    & H(\vy) \triangleq \frac{1}{2}\|\mD^{-1}\vy\|_{2}^{2} - \alpha \frac{\|\vy\|_{2}^{2}}{2\beta} + \alpha\lambda\|\vy\|_{2}\\
    \textrm{and}\ \ & \partial H(\vzero) = \{\vy \in \sR^{n}\mid \|\vy\|_{2}\leq \alpha\lambda\}.
\end{align*}

Note that the function $H$ is convex thanks to the condition $\beta > \alpha d_{\max}^{2}$, and therefore its subdifferential is well defined. This proves that $\vzero$ is a global minimizer of $F$ iff $\|\mD^{-1}\vx\|_{2} \leq \alpha \lambda$ (and $\|\mD\vx\|_{2} \leq \beta \lambda$, given the first case considered above). In other words,
\begin{equation*}
    \prox_{\alpha h\circ \mD}(\vx) = \vzero \qquad \Leftrightarrow \qquad \|\mD\vx\|_{2} \leq \beta\lambda\ \  \textrm{and}\ \ \|\mD^{-1}\vx\|_{2} \leq \alpha\lambda.
\end{equation*}

If the minimizer $\vz^{\star} \neq \vzero$ of $F$ is non-zero and satisfies $\|\mD\vz^{\star}\|_{2} \leq \beta\lambda$, then we can again use the necessary and sufficient first-order condition of optimality for $F$ to get
\begin{equation*}
    \vz^{\star} - \vx + \alpha\left(\frac{\lambda \mD^{2}\vz^{\star}}{\|\mD\vz^{\star}\|_{2}} - \frac{\mD^{2}\vz^{\star}}{\beta}\right) = \vzero \qquad \Leftrightarrow \qquad \vz^{\star} = \left[\mI + \alpha\left(\frac{\lambda}{\|\mD\vz^{\star}\|_{2}} - \frac{1}{\beta}\right)\mD^{2}\right]^{-1}\vx.
\end{equation*}

Let us call $\theta^{\star} \triangleq \|\mD\vz^{\star}\|_{2} > 0$. We can rewrite the coordinates $z_{i}^{\star}$ of $\vz^{\star}$ more explicitly, as a function of $x_{i}$ and $\theta^{\star}$
\begin{equation*}
    \big[\prox_{\alpha h\circ \mD}(\vx)\big]_{i} \triangleq z_{i}^{\star} = \frac{\beta\theta^{\star}x_{i}}{\beta\theta^{\star} + \alpha d_{i}^{2}(\beta\lambda - \theta^{\star})}.
\end{equation*}

Similar to the proof of \cref{prop:prox-l2-unweighted}, the constraints on $\theta^{\star}$ induce constraints on $\vx$ to guarantee the existence of $\theta^{\star}$. For example, we have
\begin{equation*}
    \theta^{\star 2} = \|\mD\vz^{\star}\|_{2}^{2} = \sum_{i=1}^{n}d_{i}^{2}z_{i}^{\star 2} = \beta^{2}\theta^{\star 2}\sum_{i=1}^{n}\frac{d_{i}^{2}x_{i}^{2}}{\big(\theta^{\star}(\beta - \alpha d_{i}^{2}) + \alpha\beta\lambda d_{i}^{2}\big)^{2}} \qquad \Leftrightarrow \qquad G(\theta^{\star}) = 1,
\end{equation*}
which is the condition on $\theta^{\star}$ in \cref{eq:condition-theta-mcp}. Moreover, since we made the assumption that $\beta > \alpha d_{\max}^{2}$, the function $G$ is convex and monotonically decreasing on the positive line $\mathbb{R}_{+}$. This implies that a solution $\theta^{\star} \in (0, \beta \lambda]$ of $G(\theta^{\star}) = 1$ exists (recall that we are in the case where $\|\mD\vz^{\star}\|_{2} \leq \beta\lambda$), and is unique, if and only if $G(0) > 1 \geq G(\beta \lambda)$. In other words, this implies
\begin{align*}
    G(0) = \frac{\|\mD^{-1}\vx\|_{2}^{2}}{(\alpha\lambda)^{2}} &> 1 & \Leftrightarrow & & \|\mD^{-1}\vx\|_{2} &> \alpha\lambda\\
    \textrm{and}\ \ G(\beta\lambda) = \frac{\|\mD\vx\|_{2}^{2}}{(\beta\lambda)^{2}} &\leq 1 & \Leftrightarrow & & \|\mD\vx\|_{2} &\leq \beta\lambda.
\end{align*}
\end{proof}

Finally, \cref{thm:weighted-prox-mcp-l2} is a corollary of the above proposition. Let's first recall the theorem:
\begingroup
\def\thetheorem{\ref{thm:weighted-prox-mcp-l2}}
\begin{theorem}[Weighted proximal operator of MCP / $\ell_{2}$]
  Let $\mD = \diag(d_{1}, \ldots, d_{n})$ be a positive definite diagonal matrix (i.e. $d_{i} > 0$ for all $i$), and $h(\vx) = \MCP(\|\vx\|_{2})$ the MCP / $\ell_{2}$ penalty. Suppose that $\alpha$ and $\beta$ satisfy $\alpha < \beta d_{\min}$, where $d_{\min}$ is the smallest value of the diagonal of $\mD$. The weighted proximal operator of $h$ is given by
  \begin{equation*}
      \big[\prox_{\alpha h}^{\mD}(\vx)\big]_{i} = \left\{\begin{array}{cl}
            x_{i} & \textrm{if $\|\vx\|_{2} > \beta\lambda$} \\[0.8em]
            \dfrac{d_{i}\beta\theta^{\star}x_{i}}{(d_{i}\beta - \alpha)\theta^{\star} + \alpha \beta\lambda} & \textrm{\begin{tabular}{@{}l}if $\|\vx\|_{2} \leq \beta \lambda$\\[0.4em] and $\|\mD\vx\|_{2} > \alpha \lambda$\end{tabular}}\\[1.5em]
            0 & \textrm{\begin{tabular}{@{}l}if $\|\vx\|_{2} \leq \beta \lambda$\\[0.4em] and $\|\mD\vx\|_{2} \leq \alpha \lambda$,\end{tabular}}
        \end{array}\right.
  \end{equation*}
  where $\theta^{\star} > 0$ is the unique positive solution of
  \begin{equation*}
      \beta^{2}\sum_{i=1}^{n}\left[\frac{d_{i}x_{i}}{(d_{i}\beta - \alpha)\theta^{\star} + \alpha\beta\lambda}\right]^{2} = 1.
  \end{equation*}
\end{theorem}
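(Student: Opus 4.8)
The plan is to derive \cref{thm:weighted-prox-mcp-l2} as an immediate corollary of \cref{prop:prox-mcp-l2-unweighted}, in exactly the same way \cref{thm:weighted-prox-l2} was obtained from \cref{prop:prox-l2-unweighted}. I would start from \cref{lem:weighted-prox-to-prox} applied with $\mA = \mD$, which gives
\[
  \prox_{\alpha h}^{\mD}(\vx) = \mD^{-1/2}\,\prox_{\alpha h\circ\mD^{-1/2}}\!\big(\mD^{1/2}\vx\big),
\]
and then invoke \cref{prop:prox-mcp-l2-unweighted} with the diagonal positive definite matrix $\mD^{-1/2} = \diag(d_1^{-1/2},\dots,d_n^{-1/2})$ evaluated at the point $\mD^{1/2}\vx$, whose $i$-th coordinate is $\sqrt{d_i}\,x_i$. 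The first thing to verify is that the hypothesis of \cref{prop:prox-mcp-l2-unweighted} — namely $\beta > \alpha\,d_{\max}^{2}$ for the largest diagonal entry of the matrix being fed in — becomes exactly the hypothesis $\alpha < \beta d_{\min}$ of \cref{thm:weighted-prox-mcp-l2}: the largest diagonal entry of $\mD^{-1/2}$ is $d_{\min}^{-1/2}$, so its square is $1/d_{\min}$, and the condition reads $\beta d_{\min} > \alpha$.

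Next I would translate the three regimes. Using the identities $\mD^{-1/2}\cdot\mD^{1/2}\vx = \vx$ and $(\mD^{-1/2})^{-1}\cdot\mD^{1/2}\vx = \mD\vx$, the three case conditions of \cref{prop:prox-mcp-l2-unweighted} become, respectively, $\|\vx\|_{2} > \beta\lambda$; $\|\vx\|_{2}\le\beta\lambda$ together with $\|\mD\vx\|_{2} > \alpha\lambda$; and $\|\vx\|_{2}\le\beta\lambda$ together with $\|\mD\vx\|_{2}\le\alpha\lambda$ — precisely the conditions in \cref{thm:weighted-prox-mcp-l2}. For the values, I would substitute $d_i' = d_i^{-1/2}$ and $x_i' = \sqrt{d_i}\,x_i$ (so that $d_i'x_i' = x_i$ and $(d_i')^{2} = 1/d_i$) into the expressions of \cref{prop:prox-mcp-l2-unweighted}, and then premultiply by $\mD^{-1/2}$, i.e. multiply the $i$-th coordinate by $d_i^{-1/2}$. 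In the first regime this turns $x_i' = \sqrt{d_i}\,x_i$ into $x_i$; in the last regime $0$ stays $0$; and in the middle regime a one-line computation
\[
  d_i^{-1/2}\cdot\frac{\beta\theta^\star\sqrt{d_i}\,x_i}{\beta\theta^\star + \alpha d_i^{-1}(\beta\lambda - \theta^\star)}
  = \frac{d_i\beta\theta^\star x_i}{(d_i\beta - \alpha)\theta^\star + \alpha\beta\lambda}
\]
gives the announced formula. The same substitution in \cref{eq:condition-theta-mcp} sends $d_i'x_i'$ to $x_i$ and $\theta^\star(\beta - \alpha(d_i')^{2}) + \alpha\beta\lambda(d_i')^{2}$ to $\big((d_i\beta - \alpha)\theta^\star + \alpha\beta\lambda\big)/d_i$, so the defining equation becomes $\beta^{2}\sum_{i}\big[d_i x_i/((d_i\beta - \alpha)\theta^\star + \alpha\beta\lambda)\big]^{2} = 1$, as claimed; existence and uniqueness of the positive root $\theta^\star$ are inherited directly from \cref{prop:prox-mcp-l2-unweighted}.

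Since \cref{prop:prox-mcp-l2-unweighted} already carries out all the substantive work (characterizing the minimizer of the proximal objective across the three regimes, exploiting the convexity of the auxiliary function $H$ that forces the $\beta$–$\alpha$ relation, and using the monotonicity of $G$ on $\sR_{+}$ to pin down a unique $\theta^\star$), the only content of this proof is the change-of-variables bookkeeping. The one point needing care — and the only place anything could go wrong — is checking that the hypothesis transforms correctly under $\mD\mapsto\mD^{-1/2}$ (largest diagonal entry becoming $d_{\min}^{-1/2}$) and that the algebraic simplifications of the case conditions, of the value formulas, and of the equation for $\theta^\star$ all land exactly on the statement of \cref{thm:weighted-prox-mcp-l2}; I expect no genuine difficulty beyond careful arithmetic.
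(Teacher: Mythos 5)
Your proposal is correct and is exactly the paper's own proof: the paper likewise obtains \cref{thm:weighted-prox-mcp-l2} from \cref{prop:prox-mcp-l2-unweighted} via \cref{lem:weighted-prox-to-prox} with the substitutions $\mD \rightarrow \mD^{-1/2}$ and $\vx \rightarrow \mD^{1/2}\vx$. Your write-up simply spells out the change-of-variables bookkeeping (hypothesis, case conditions, value formulas, and the equation for $\theta^{\star}$) that the paper leaves implicit, and all of those computations check out.
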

\addtocounter{theorem}{-1}
\endgroup
\begin{proof}
We get the expected result using the changes of variables $\mD \rightarrow \mD^{-1/2}$ and $\vx \rightarrow \mD^{1/2}\vx$ in \cref{prop:prox-mcp-l2-unweighted}, together with \cref{lem:weighted-prox-to-prox} (pre-multiplication by $\mD^{-1/2}).$
\end{proof}

\subsection{Approximation of the weighted proximal operator of MCP / $\ell_{2}$}
\label{app:proof-approx-weighted-prox-mcp-l2}
Similar to \cref{prop:weighted-prox-l2-approx}, the following proposition shows that the error on the weighted proximal operator of MCP / $\ell_{2}$ induced by the numerical solver to find $\theta^{\star}$ is controlled by the error of the numerical solver.
\begin{proposition}
    Let $\varepsilon > 0$, and $\tilde{\theta} > 0$ such that $|\tilde{\theta} - \theta^{\star}| \leq \varepsilon$ (recall that $\theta^{\star}$ is defined as the unique positive solution of \cref{eq:weighted-prox-mcp-l2-theta}). Let $\widetilde{\prox}^{\mD}_{\alpha h}(\vx)$ be an approximation of the weighted proximal operator of MCP / $\ell_{2}$ (i.e. $h(\vx) = \MCP(\|\vx\|_{2};\beta, \lambda)$), replacing $\theta^{\star}$ by $\tilde{\theta}$ in \cref{eq:weighted-prox-mcp-l2}.
    
    Suppose that $\alpha$ and $\beta$ satisfy $\alpha < d_{\min}\beta$, where $d_{\min}$ is the smallest value of the diagonal of $\mD$ (recall that $\mD$ is a positive definite diagonal matrix). Then we have for all $\vx \in \sR^{n}$:
    \begin{equation}
        \big\|\widetilde{\prox}_{\alpha h}^{\mD}(\vx) - \prox_{\alpha h}^{\mD}(\vx)\big\|_{2} \leq \varepsilon.
    \end{equation}
\end{proposition}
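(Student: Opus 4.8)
The plan is to follow the proof of the $\ell_{2}$ case, \cref{prop:approx-weighted-prox-l2}, almost verbatim, adapting the bookkeeping to the extra parameter $\beta$. First I would observe that replacing $\theta^{\star}$ by $\tilde{\theta}$ in \cref{eq:weighted-prox-mcp-l2} never changes which of the three branches is active, since the branch conditions $\|\vx\|_{2} > \beta\lambda$ and $\|\mD\vx\|_{2} > \alpha\lambda$ do not involve $\theta$. Hence in the first branch $\widetilde{\prox}^{\mD}_{\alpha h}(\vx) = \prox^{\mD}_{\alpha h}(\vx) = \vx$, and in the third branch both equal $\vzero$; in either case the quantity to be bounded is exactly $0$. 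The only case that requires work is the middle branch, where $\|\vx\|_{2} \leq \beta\lambda$ and $\|\mD\vx\|_{2} > \alpha\lambda$.

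In that branch I would, for each coordinate $i$, write the difference $\frac{d_{i}\beta\tilde{\theta}x_{i}}{(d_{i}\beta-\alpha)\tilde{\theta}+\alpha\beta\lambda} - \frac{d_{i}\beta\theta^{\star}x_{i}}{(d_{i}\beta-\alpha)\theta^{\star}+\alpha\beta\lambda}$ over the common denominator $\big((d_{i}\beta-\alpha)\tilde{\theta}+\alpha\beta\lambda\big)\big((d_{i}\beta-\alpha)\theta^{\star}+\alpha\beta\lambda\big)$; the cross terms proportional to $(d_{i}\beta-\alpha)\tilde{\theta}\theta^{\star}$ cancel in the numerator, leaving precisely $\alpha\beta^{2}\lambda\, d_{i}x_{i}\,(\tilde{\theta}-\theta^{\star})$. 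Squaring, summing over $i$, and pulling the scalar $(\tilde{\theta}-\theta^{\star})^{2}\alpha^{2}\beta^{4}\lambda^{2}$ out front, I am left with $\sum_{i} d_{i}^{2}x_{i}^{2}\big/\big[((d_{i}\beta-\alpha)\tilde{\theta}+\alpha\beta\lambda)^{2}((d_{i}\beta-\alpha)\theta^{\star}+\alpha\beta\lambda)^{2}\big]$; bounding the $\tilde{\theta}$-denominators below by their smallest value and invoking the normalization \cref{eq:weighted-prox-mcp-l2-theta}, namely $\beta^{2}\sum_{i} d_{i}^{2}x_{i}^{2}/((d_{i}\beta-\alpha)\theta^{\star}+\alpha\beta\lambda)^{2}=1$, collapses everything to $(\tilde{\theta}-\theta^{\star})^{2}\max_{i}\big[\alpha\beta\lambda/((d_{i}\beta-\alpha)\tilde{\theta}+\alpha\beta\lambda)\big]^{2}$. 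Since each $(d_{i}\beta-\alpha)\tilde{\theta}$ is strictly positive, that maximum is at most $1$, so $\big\|\widetilde{\prox}^{\mD}_{\alpha h}(\vx)-\prox^{\mD}_{\alpha h}(\vx)\big\|_{2}^{2}\leq(\tilde{\theta}-\theta^{\star})^{2}\leq\varepsilon^{2}$, which is the claim.

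The only delicate point --- and the reason the hypothesis $\alpha < d_{\min}\beta$ is imposed --- is positivity: it forces $d_{i}\beta-\alpha\geq d_{\min}\beta-\alpha>0$ for every $i$, which is what makes all the denominators $(d_{i}\beta-\alpha)\tilde{\theta}+\alpha\beta\lambda$ and $(d_{i}\beta-\alpha)\theta^{\star}+\alpha\beta\lambda$ strictly positive (indeed strictly larger than $\alpha\beta\lambda>0$), and hence simultaneously makes the algebraic manipulations legitimate and makes the bounding factor $\leq 1$. This is precisely the condition under which \cref{thm:weighted-prox-mcp-l2} was established, so no new assumption is introduced. Apart from this positivity check and the routine cancellation in the numerator, the argument is the same ``factor out the root-finding error $(\tilde{\theta}-\theta^{\star})$, bound one factor by $1$, and use the normalization to eliminate the rest'' pattern as for $\ell_{2}$, so I do not anticipate any genuine difficulty.
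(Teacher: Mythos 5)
Your proposal is correct and follows essentially the same argument as the paper's proof: reduce to the middle branch, put the coordinate-wise difference over a common denominator so the numerator collapses to $\alpha\beta^{2}\lambda d_{i}x_{i}(\tilde{\theta}-\theta^{\star})$, factor out the worst-case ratio $\alpha\beta\lambda/((d_{\min}\beta-\alpha)\tilde{\theta}+\alpha\beta\lambda)\leq 1$, and use the normalization \cref{eq:weighted-prox-mcp-l2-theta} to make the remaining sum equal $1$. Your explicit check that the trivial branches contribute zero is a slightly more careful bookkeeping of what the paper handles with its initial inequality, but the route is the same.
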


\begin{proof}
The proof is similar to the one of \cref{prop:approx-weighted-prox-l2} in \cref{app:proof-approx-weighted-prox-l2}. Let $\mD = \diag(d_{1}, \ldots, d_{n})$ be a positive definite diagonal matrix, and $d_{\min} > 0$ the smallest value of the diagonal of $\mD$. For all $\vx \in \sR^{n}$
\begin{align*}
    \big\|\widetilde{\prox}_{\alpha h}^{\mD}(\vx) - \prox_{\alpha h}^{\mD}(\vx)\big\|_{2}^{2} & \leq \sum_{i=1}^{n} \left[\frac{d_{i}\beta\tilde{\theta}x_{i}}{(d_{i}\beta - \alpha)\tilde{\theta} + \alpha\beta\lambda} - \frac{d_{i}\beta\theta^{\star}x_{i}}{(d_{i}\beta - \alpha)\theta^{\star} + \alpha\beta\lambda}\right]^{2}\\
    & = \sum_{i=1}^{n}\left[\frac{\alpha\beta^{2}\lambda d_{i}x_{i}(\tilde{\theta} - \theta^{\star})}{((d_{i}\beta - \alpha)\tilde{\theta} + \alpha\beta\lambda)((d_{i}\beta - \alpha)\theta^{\star} + \alpha\beta\lambda)}\right]^{2}\\
    &\leq (\tilde{\theta} - \theta^{\star})^{2}\underbrace{\vphantom{\sum_{i=1}^{2}}\left[\frac{\alpha\beta\lambda}{(d_{\min}\beta - \alpha)\tilde{\theta} + \alpha\beta\lambda}\right]^{2}}_{\leq\,1}\underbrace{\beta^{2}\sum_{i=1}^{n}\left[\frac{d_{i}x_{i}}{(d_{i}\beta - \alpha)\theta^{\star} + \alpha\beta\lambda}\right]^{2}}_{=\,1\ \textrm{(\cref{eq:weighted-prox-mcp-l2-theta})}}\\
    &\leq (\tilde{\theta} - \theta^{\star})^{2} \leq \varepsilon^{2}
\end{align*}
\end{proof}
Finally, the following proposition gives bounds on $\theta^{\star}$ defined in \cref{eq:weighted-prox-mcp-l2-theta}, to narrow down the search space for the numerical solver (e.g. Newton-Raphson algorithm).
\begin{proposition}[Bounds on $\theta^{\star}$ for the MCP / $\ell_{2}$ penalty]
    Let $\mD = \diag(d_{1}, \ldots, d_{n})$ be a positive definite diagonal matrix, with $d_{\min}$ and $d_{\max}$ being respectively the smallest and largest values of the diagonal of $\mD$. Let $\theta^{\star}$ defined by \cref{eq:weighted-prox-mcp-l2-theta}, and suppose that $\alpha$ and $\beta$ satisfy $\alpha < d_{\min}\beta$. Then for all $\vx$ such that $\|\vx\|_{2} \leq \beta\lambda$ and $\|\mD\vx\|_{2} > \alpha\lambda$
    \begin{equation}
        0 < \beta\frac{\|\mD\vx\|_{2} - \alpha\lambda}{d_{\max}\beta - \alpha} \leq \theta^{\star} \leq \beta\frac{\|\mD\vx\|_{2} - \alpha\lambda}{d_{\min}\beta - \alpha}.
    \end{equation}
    \label{prop:bounds-theta-weighted-prox-mcp-l2}
\end{proposition}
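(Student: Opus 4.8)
The plan is to replay, almost verbatim, the argument used for \cref{prop:bounds-theta-weighted-prox-l2}, exploiting the monotonicity of each summand in the defining relation \cref{eq:weighted-prox-mcp-l2-theta} with respect to the coefficient $d_i$ sitting in the denominator. The crucial preliminary observation is that the hypothesis $\alpha < d_{\min}\beta$ forces $d_i\beta - \alpha \geq d_{\min}\beta - \alpha > 0$ for every $i$; hence, since $\theta^{\star} > 0$ and $\alpha\beta\lambda > 0$, all the denominators $(d_i\beta-\alpha)\theta^{\star}+\alpha\beta\lambda$ are strictly positive and the affine map $d \mapsto (d\beta-\alpha)\theta^{\star}+\alpha\beta\lambda$ is increasing on $[d_{\min},d_{\max}]$. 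First I would record the existence and uniqueness of $\theta^{\star}$ in this regime (already given by \cref{prop:prox-mcp-l2-unweighted}), so the statement is well-posed.

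Next, using $d_{\min} \leq d_i \leq d_{\max}$ inside each denominator, and collecting the numerators via $\sum_i d_i^2 x_i^2 = \|\mD\vx\|_2^2$, I would sandwich the left-hand side of \cref{eq:weighted-prox-mcp-l2-theta}:
\begin{equation*}
    \frac{\beta^2\|\mD\vx\|_2^2}{\big((d_{\max}\beta-\alpha)\theta^{\star}+\alpha\beta\lambda\big)^2} \;\leq\; \beta^2\sum_{i=1}^n\left[\frac{d_ix_i}{(d_i\beta-\alpha)\theta^{\star}+\alpha\beta\lambda}\right]^2 \;\leq\; \frac{\beta^2\|\mD\vx\|_2^2}{\big((d_{\min}\beta-\alpha)\theta^{\star}+\alpha\beta\lambda\big)^2}.
\end{equation*}
Since the middle term equals $1$ by definition of $\theta^{\star}$, and since all quantities involved are positive (using $\|\mD\vx\|_2 > \alpha\lambda \geq 0$), taking square roots yields the two scalar inequalities $\beta\|\mD\vx\|_2 \leq (d_{\max}\beta-\alpha)\theta^{\star}+\alpha\beta\lambda$ and $\beta\|\mD\vx\|_2 \geq (d_{\min}\beta-\alpha)\theta^{\star}+\alpha\beta\lambda$.

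Finally I would solve these for $\theta^{\star}$. Dividing the first by $d_{\max}\beta-\alpha > 0$ gives $\theta^{\star} \geq \beta(\|\mD\vx\|_2-\alpha\lambda)/(d_{\max}\beta-\alpha)$, and the strict positivity of this lower bound follows from $\|\mD\vx\|_2 > \alpha\lambda$; dividing the second by $d_{\min}\beta-\alpha > 0$ gives $\theta^{\star} \leq \beta(\|\mD\vx\|_2-\alpha\lambda)/(d_{\min}\beta-\alpha)$, which is the claimed upper bound. There is no substantial obstacle here — the whole proof is a routine computation — and the only point genuinely requiring care is the direction of the denominator inequalities: everything hinges on $\alpha < d_{\min}\beta$ keeping $d_i\beta-\alpha$ positive, since otherwise replacing $d_i$ by $d_{\min}$ or $d_{\max}$ in the denominator would not preserve the ordering of the summands.
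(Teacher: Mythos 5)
Your proposal is correct and follows essentially the same route as the paper: bound each denominator using $d_{\min}\le d_i\le d_{\max}$, sum the numerators into $\|\mD\vx\|_2^2$, compare with the defining identity $G(\theta^\star)=1$, and solve the resulting scalar inequalities for $\theta^\star$. The only difference is that you make explicit the positivity of $d_i\beta-\alpha$ guaranteed by $\alpha<d_{\min}\beta$ (which the paper's proof uses implicitly), and that is indeed the one point where care is required.
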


\begin{proof}
The proof is similar to the one of \cref{prop:bounds-theta-weighted-prox-l2} in \cref{app:proof-approx-weighted-prox-l2}. Using the fact that for all $i$ we have $d_{\min} \leq d_{i} \leq d_{\max}$, we get the following inequalities:
\begin{equation*}
    \left[\frac{\beta\|\mD\vx\|_{2}}{(d_{\max}\beta - \alpha)\theta^{\star} + \alpha\beta\lambda}\right]^{2} \leq \underbrace{\beta^{2}\sum_{i=1}^{n}\left[\frac{d_{i}x_{i}}{(d_{i}\beta - \alpha)\theta^{\star} + \alpha\beta\lambda}\right]^{2}}_{=\,1\ \textrm{(\cref{eq:weighted-prox-mcp-l2-theta})}} \leq \left[\frac{\beta\|\mD\vx\|_{2}}{(d_{\min}\beta - \alpha)\theta^{\star} + \alpha\beta\lambda}\right]^{2}
\end{equation*}
These two inequalities give us the expected bounds on $\theta^{\star}$:
\begin{align*}
    \frac{\beta\|\mD\vx\|_{2}}{(d_{\max}\beta - \alpha)\theta^{\star} + \alpha\beta\lambda} &\leq 1 &\Leftrightarrow&& \beta\frac{\|\mD\vx\|_{2} - \alpha\lambda}{d_{\max}\beta - \alpha} & \leq \theta^{\star}\\
    \frac{\beta\|\mD\vx\|_{2}}{(d_{\min}\beta - \alpha)\theta^{\star} + \alpha\beta\lambda} &\geq 1 &\Leftrightarrow&& \beta\frac{\|\mD\vx\|_{2} - \alpha\lambda}{d_{\min}\beta - \alpha} & \geq \theta^{\star}
\end{align*}
\end{proof}

\section{Proof of the convergence analysis}
\label{app:proof-convergence-analysis}
In this section, we prove the main theorem of convergence from \cref{sec:convergence-analysis}, extending the analysis from \citet{xu2019analysisprox,yun2020proxgen} to inexact proximal gradient methods, under \cref{hyp:prox-approximation} on the approximation of the weighted proximal operator. Before getting the the proof of \cref{thm:convergence-analysis}, we first recall the definition of the Frechet subdifferential, which plays a central role in the statement of convergence for non-differentiable and non-convex functions.
\begin{definition}[Frechet subdifferential; \citealt{rockafellar2009variational}]
    Let $F: \sR^{N} \rightarrow \overline{\sR}$ be a function (possibly non-convex), and a point $\overline{\vx}$ such that $F(\overline{\vx})$ is finite. The \emph{Frechet subdifferential} is the set $\widehat{\partial}F$ of regular subgradients, satisfying:
    \begin{equation}
        \widehat{\partial}F(\overline{\vx}) = \bigg\{\vv \in \sR^{N} \;\Big|\; \liminf_{\substack{\vx \rightarrow \overline{\vx}\\\vx \neq \overline{\vx}}} \frac{F(\vx) - F(\overline{\vx}) - \langle \vv, \vx - \overline{\vx}\rangle}{\vx - \overline{\vx}} \geq 0\bigg\}.
    \end{equation}
    \label{def:frechet-subdifferential}
\end{definition}
Recall that $F(\vx) \triangleq \E_{\xi}[f(\vx;\xi)] + h(\vx)$. We are now ready to prove \cref{thm:convergence-analysis}:
\begingroup
\def\thetheorem{\ref{thm:convergence-analysis}}
\begin{theorem}
    Suppose that \cref{hyp:proxgen-assumptions,hyp:prox-approximation} are satisfied. If we run \cref{alg:minimization-composite-loss} with a non-increasing step-size $\alpha_{t}$, such that $\alpha_{0} < \delta/2L$, then the output $\vx_{R}$ of \cref{alg:minimization-composite-loss}, where $R$ is sampled uniformly in $\{1, \ldots, T\}$, satisfies
    \begin{equation*}
        \displaystyle \E_{R}\big[\dist\!\big(\vzero, \widehat{\partial}F(\vx_{R})\big)^{2}\big] \leq \frac{C_{1}}{T}\sum_{t=0}^{T-1}\|\vg_{t} - \nabla f(\vx_{t})\|_{2}^{2} + \frac{C_{2}\Delta}{T} + \frac{C_{3}}{T},
    \end{equation*}
    with $C_{1}$, $C_{2}$, and $C_{3}$ positive constants independent of $T$. Here $\dist(\vz, S)$ is the distance of a set $S$ to a point $\vz$, defined as the minimal distance of any point in $S$ to $\vz$.
\end{theorem}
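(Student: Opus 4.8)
The plan is to follow the template of the exact analysis in \citet{xu2019analysisprox,yun2020proxgen}, and to isolate the extra terms produced by the inexact weighted proximal operator so that, thanks to \cref{hyp:prox-approximation}, they enter the bound only through the \emph{square-summable} quantities $\varepsilon_{t+1}^2$; their total is then the finite constant $K$, which gets absorbed into $C_3$. Throughout, write $\vx_{t+1}^\star = \prox_{\alpha_t h}^{\mD_t}(\vx_t - \alpha_t\mD_t^{-1}\vm_t)$ for the exact update, and use the boundedness facts $\|\vm_t\|_2 \le G$ (a convex combination of stochastic gradients, each bounded by $G$), $\|\nabla f(\vx_t)\|_2 = \|\E_\xi \vg_t\|_2 \le G$, $\|\mD_t\|_2 \le \alpha_t/\gamma$ (from $\alpha_t\mD_t^{-1}\succeq\gamma\mI$), and $\|\vx_{t+1}^\star - \vx_t\|_2 \le \|\vx_{t+1}-\vx_t\|_2 + \varepsilon_{t+1} \le D + \sqrt{K}$.

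\textbf{Step 1: a descent inequality along the exact updates.} Since $\vx_{t+1}^\star$ is a global minimizer of $\vz \mapsto \tfrac12\|\vz - \vx_t + \alpha_t\mD_t^{-1}\vm_t\|_{\mD_t}^2 + \alpha_t h(\vz)$, comparing its value with that at $\vz = \vx_t$ gives the sufficient-decrease inequality $h(\vx_{t+1}^\star) - h(\vx_t) \le -\langle \vx_{t+1}^\star - \vx_t, \vm_t\rangle - \tfrac{1}{2\alpha_t}\|\vx_{t+1}^\star - \vx_t\|_{\mD_t}^2$, while Fermat's rule at the same minimizer (sum rule with a smooth quadratic) produces a regular subgradient $\vw_{t+1} \in \widehat\partial h(\vx_{t+1}^\star)$ with $\vw_{t+1} = -\vm_t - \tfrac1{\alpha_t}\mD_t(\vx_{t+1}^\star - \vx_t)$. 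Adding the sufficient-decrease inequality to the $L$-smoothness upper bound for $f$ at $\vx_{t+1}^\star$, using $\mD_t \succeq \delta\mI$ together with $\alpha_0 < \delta/2L$ to turn $\tfrac L2\|\cdot\|_2^2 - \tfrac1{2\alpha_t}\|\cdot\|_{\mD_t}^2$ into $-c_0\|\vx_{t+1}^\star - \vx_t\|_2^2$ with $c_0 > 0$, and splitting $\nabla f(\vx_t) - \vm_t = (\nabla f(\vx_t) - \vg_t) + (\vg_t - \vm_t)$ with Young's inequality, I obtain
\[
  c_0\,\|\vx_{t+1}^\star - \vx_t\|_2^2 \;\le\; F(\vx_t) - F(\vx_{t+1}^\star) + c_1\|\nabla f(\vx_t) - \vg_t\|_2^2 + c_1\|\vg_t - \vm_t\|_2^2 .
\]

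\textbf{Step 2: telescoping over the actual iterates.} Write $F(\vx_t) - F(\vx_{t+1}^\star) = [F(\vx_t) - F(\vx_{t+1})] + [F(\vx_{t+1}) - F(\vx_{t+1}^\star)]$; the first piece telescopes to $F(\vx_0) - F(\vx_T) \le \Delta$. For the second, by \cref{hyp:prox-approximation} the point $\vx_{t+1}$ lies in an $\varepsilon_{t+1}$-ball around $\vx_{t+1}^\star$ on which $h$ is $L'$-smooth, hence $F$ is $(L+L')$-smooth there, so $F(\vx_{t+1}) - F(\vx_{t+1}^\star) \le \langle \nabla F(\vx_{t+1}^\star), \vx_{t+1} - \vx_{t+1}^\star\rangle + \tfrac{L+L'}{2}\varepsilon_{t+1}^2$. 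Here $\nabla F(\vx_{t+1}^\star) = \nabla f(\vx_{t+1}^\star) + \vw_{t+1} = \nabla f(\vx_{t+1}^\star) - \vm_t - \tfrac1{\alpha_t}\mD_t(\vx_{t+1}^\star - \vx_t)$, which by the boundedness facts above (and $L$-smoothness for $\nabla f(\vx_{t+1}^\star)$) is bounded uniformly in $t$ and, more sharply, satisfies $\|\nabla F(\vx_{t+1}^\star)\|_2^2 \le c_2\big(\|\vx_{t+1}^\star - \vx_t\|_2^2 + \|\nabla f(\vx_t)-\vg_t\|_2^2 + \|\vg_t-\vm_t\|_2^2\big)$. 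Applying Young's inequality $\langle \nabla F(\vx_{t+1}^\star),\vx_{t+1}-\vx_{t+1}^\star\rangle \le \tfrac{\eta}{2}\|\nabla F(\vx_{t+1}^\star)\|_2^2 + \tfrac1{2\eta}\varepsilon_{t+1}^2$ with $\eta$ small enough that $\tfrac{\eta c_2}{2} < \tfrac{c_0}{2}$, the $\|\vx_{t+1}^\star - \vx_t\|_2^2$ contribution can be absorbed into the left side of Step 1. Summing over $t = 0,\dots,T-1$, using the momentum bound $\sum_t \|\vg_t - \vm_t\|_2^2 \le \text{const}$ (a geometric-series argument from $\rho_t = \rho_0\mu^t$, as in \citet{yun2020proxgen}) and $\sum_t \varepsilon_{t+1}^2 = K$, gives $\sum_t \|\vx_{t+1}^\star - \vx_t\|_2^2 \le c_3\big(\Delta + \sum_t\|\nabla f(\vx_t) - \vg_t\|_2^2 + 1\big)$, and hence the same bound (up to constants) for $\sum_t \|\vx_{t+1} - \vx_t\|_2^2$ via $\|\vx_{t+1}-\vx_t\|_2^2 \le 2\|\vx_{t+1}^\star-\vx_t\|_2^2 + 2\varepsilon_{t+1}^2$.

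\textbf{Step 3: bounding the Frechet subdifferential.} Since $f$ is differentiable and $h$ is differentiable at $\vx_{t+1}$ (smooth ball), $\vv_{t+1} := \nabla f(\vx_{t+1}) + \nabla h(\vx_{t+1}) \in \widehat\partial F(\vx_{t+1})$ (when the update is exact, $\vw_{t+1}$ itself is a regular subgradient at $\vx_{t+1} = \vx_{t+1}^\star$, so the argument is identical). Writing $\nabla h(\vx_{t+1}) = \vw_{t+1} + \big(\nabla h(\vx_{t+1}) - \nabla h(\vx_{t+1}^\star)\big)$, substituting $\vw_{t+1} = -\vm_t - \tfrac1{\alpha_t}\mD_t(\vx_{t+1}^\star - \vx_t)$, and regrouping,
\[
  \vv_{t+1} = \big(\nabla f(\vx_{t+1}) - \nabla f(\vx_t)\big) + \big(\nabla f(\vx_t) - \vg_t\big) + \big(\vg_t - \vm_t\big) - \tfrac1{\alpha_t}\mD_t(\vx_{t+1}-\vx_t) + \vr_{t+1},
\]
where $\vr_{t+1} = \tfrac1{\alpha_t}\mD_t(\vx_{t+1}-\vx_{t+1}^\star) + \big(\nabla h(\vx_{t+1}) - \nabla h(\vx_{t+1}^\star)\big)$ has norm at most $(\tfrac1\gamma + L')\varepsilon_{t+1}$. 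Using $\|\nabla f(\vx_{t+1})-\nabla f(\vx_t)\|_2 \le L\|\vx_{t+1}-\vx_t\|_2$ and $\|\tfrac1{\alpha_t}\mD_t(\vx_{t+1}-\vx_t)\|_2 \le \tfrac1\gamma\|\vx_{t+1}-\vx_t\|_2$, I get $\dist(\vzero, \widehat\partial F(\vx_{t+1}))^2 \le \|\vv_{t+1}\|_2^2 \le c_4\big(\|\vx_{t+1}-\vx_t\|_2^2 + \|\nabla f(\vx_t)-\vg_t\|_2^2 + \|\vg_t-\vm_t\|_2^2 + \varepsilon_{t+1}^2\big)$. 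Summing, plugging in the bound on $\sum_t\|\vx_{t+1}-\vx_t\|_2^2$ from Step 2, using the momentum bound and $\sum_t\varepsilon_{t+1}^2 = K$, dividing by $T$, and noting $\E_R[\dist(\vzero,\widehat\partial F(\vx_R))^2] = \tfrac1T\sum_{t=0}^{T-1}\dist(\vzero,\widehat\partial F(\vx_{t+1}))^2$ (the index shift $R \in \{1,\dots,T\}$ matching $\vx_{t+1}$, $t = 0,\dots,T-1$), yields the stated inequality with explicit positive constants $C_1, C_2, C_3$ independent of $T$.

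I expect the main obstacle to be the bookkeeping around the inexactness: making sure that \emph{every} term carrying $\varepsilon_{t+1}$ can be made quadratic — this requires the Young-inequality absorptions against the uniformly bounded quantities $\nabla F(\vx_{t+1}^\star)$, $\vm_t$, $\mD_t$, which in turn rely on the boundedness assumptions — so that their aggregate contribution is the finite constant $K$ rather than something growing with $T$. The remaining ingredients (the descent inequality of Step 1, the momentum lemma, the subdifferential decomposition) are the exact-method arguments of \citet{yun2020proxgen,xu2019analysisprox}.
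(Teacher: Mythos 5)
Your proof is correct and follows essentially the same route as the paper's: the sufficient-decrease inequality from the prox minimizer property, $L$-smoothness of $f$ plus $L'$-smoothness of $h$ in the $\varepsilon_{t+1}$-ball, the optimality condition supplying the subgradient $-\vm_t - \tfrac{1}{\alpha_t}\mD_t(\vx_{t+1}^{\star} - \vx_t) \in \widehat{\partial}h(\vx_{t+1}^{\star})$, Young-inequality absorptions, telescoping, and the geometric / square-summable series for $\rho_t$ and $\varepsilon_{t+1}$. The only differences are organizational: the paper absorbs the cross term $\langle \nabla F(\vx_{t+1}^{\star}), \vx_{t+1}-\vx_{t+1}^{\star}\rangle$ by letting $\dist\!\big(\vzero,\widehat{\partial}F(\vx_{t+1}^{\star})\big)^{2}$ reappear with coefficient $1/2$ (via the tuned constant $C$) and then transfers the bound to $\vx_{t+1}$ through the Pompeiu--Hausdorff distance of the subdifferentials, whereas you bound $\|\nabla F(\vx_{t+1}^{\star})\|_{2}^{2}$ explicitly by the step length plus gradient errors and exhibit a subgradient at $\vx_{t+1}$ directly --- both give the stated bound with constants of the same form.
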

\addtocounter{theorem}{-1}
\endgroup
\begin{proof}
The proof follows the same steps as the proof of the non-asymptotic convergence analysis from \citet{yun2020proxgen} and \citet{xu2019analysisprox}. We detail the full proof here, including the steps from \citep{yun2020proxgen}, for completeness. Recall from \cref{hyp:prox-approximation} the notation
\begin{equation*}
    \vx_{t+1}^{\star} = \prox_{\alpha h}^{\mD_{t}}\!\big(\vx_{t} - \alpha \mD^{-1}\vm_{t}\big).
\end{equation*}
\paragraph{Difference of consecutive iterates} Let us first consider the case of $t$ fixed, where \cref{hyp:prox-approximation} (2) is satisfied; that is, we assume that $\vx_{t+1}$ is an $\varepsilon_{t+1}$-approximation of the true proximal update $\vx_{t+1}^{\star}$, and $h$ is $L'$-smooth in a $\varepsilon_{t+1}$-ball around $\vx_{t+1}^{\star}$. By definition of the weighted proximal operator as a minimizer in \cref{eq:weighted-proximal-operator}, we have $\vx_{t+1}^{\star}$ satisfying
\begin{align}
    & \langle \vm_{t}, \vx_{t+1}^{\star} \rangle + h(\vx_{t+1}^{\star}) + \frac{1}{2\alpha_{t}}\|\vx_{t+1}^{\star} - \vx_{t}\|^{2}_{\mD_{t}} \leq \langle \vm_{t}, \vx_{t} \rangle + h(\vx_{t})\nonumber\\
    \Leftrightarrow\qquad & \langle \vm_{t}, \vx_{t+1}^{\star} - \vx_{t} \rangle + h(\vx_{t+1}^{\star}) + \frac{1}{2\alpha_{t}}\|\vx_{t+1}^{\star} - \vx_{t}\|^{2}_{\mD_{t}} \leq h(\vx_{t})\label{eq:proof-prox-minimizer}
\end{align}
Moreover since we assumed that $h$ is $L'$-smooth in a $\varepsilon_{t+1}$-ball $\gB$ around $\vx_{t+1}^{\star}$, and $\vx_{t+1} \in \gB$:
\begin{equation}
    h(\vx_{t+1}) \leq h(\vx_{t+1}^{\star}) + \langle \nabla h(\vx_{t+1}^{\star}), \vx_{t+1} - \vx_{t+1}^{\star} \rangle + \frac{L'}{2}\|\vx_{t+1} - \vx_{t+1}^{\star}\|_{2}^{2}
    \label{eq:proof-h-smooth}
\end{equation}
Adding \cref{eq:proof-prox-minimizer} \& \cref{eq:proof-h-smooth}, we get an inequality on the difference of $h$ evaluated at two consecutive iterates:
\begin{equation}
    \langle \vm_{t}, \vx_{t+1}^{\star} - \vx_{t}\rangle + \frac{1}{2\alpha_{t}}\|\vx_{t+1}^{\star} - \vx_{t}\|^{2}_{\mD_{t}} - \langle \nabla h(\vx_{t+1}^{\star}), \vx_{t+1} - \vx_{t+1}^{\star}\rangle - \frac{L'}{2}\|\vx_{t+1} - \vx_{t+1}^{\star}\|_{2}^{2} \leq h(\vx_{t}) - h(\vx_{t+1})
    \label{eq:proof-difference-h}
\end{equation}
On the other hand, using the $L$-smoothness of $f$, we can derive the following two inequalities
\begin{align*}
    f(\vx_{t+1}) &\leq f(\vx_{t+1}^{\star}) + \langle \nabla f(\vx_{t+1}^{\star}), \vx_{t+1} - \vx_{t+1}^{\star} \rangle + \frac{L}{2}\|\vx_{t+1} - \vx_{t+1}^{\star}\|_{2}^{2}\\
    f(\vx_{t+1}^{\star}) &\leq f(\vx_{t}) + \langle \nabla f(\vx_{t}), \vx_{t+1}^{\star} - \vx_{t} \rangle + \frac{L}{2}\|\vx_{t+1}^{\star} - \vx_{t}\|_{2}^{2}
\end{align*}
which, once added together, leads to a similar inequality as the one in \cref{eq:proof-difference-h}, involving the difference of $f$ evaluated at two consecutive iterates:
\begin{equation}
    f(\vx_{t+1}) - f(\vx_{t}) \leq \langle \nabla f(\vx_{t+1}^{\star}), \vx_{t+1} - \vx_{t+1}^{\star}\rangle + \langle \nabla f(\vx_{t}), \vx_{t+1}^{\star} - \vx_{t}\rangle + \frac{L}{2} \|\vx_{t+1} - \vx_{t+1}^{\star}\|_{2}^{2} + \frac{L}{2}\|\vx_{t+1}^{\star} - \vx_{t}\|_{2}^{2}
    \label{eq:proof-difference-f}
\end{equation}
Finally, subtracting \cref{eq:proof-difference-f} to \cref{eq:proof-difference-h}, we get the following inequality on the difference of $F$ evaluated at two consecutive iterates
\begin{align}
    \langle \vm_{t} - \nabla f(\vx_{t}), \vx_{t+1}^{\star} - \vx_{t} \rangle - \langle \nabla F(\vx_{t+1}^{\star}), \vx_{t+1} - \vx_{t+1}^{\star} \rangle + \frac{1}{2\alpha_{t}}\|\vx_{t+1}^{\star} - \vx_{t}\|_{\mD_{t}}^{2}\nonumber\\
    \qquad - \frac{L + L'}{2}\|\vx_{t+1} - \vx_{t+1}^{\star}\|_{2}^{2} - \frac{L}{2}\|\vx_{t+1}^{\star} - \vx_{t}\|_{2}^{2} \leq F(\vx_{t}) - F(\vx_{t+1}).\label{eq:proof-difference-F}
\end{align}
\paragraph{Bound on $\|\vx_{t+1}^{\star} - \vx_{t}\|_{2}^{2}$} We will eventually use this inequality to obtain an upper-bound on $\|\vx_{t+1}^{\star} - \vx_{t}\|_{2}^{2}$. But first, we would like to bound some of the remaining terms in this inequality, starting with the first term of the left-hand side of \cref{eq:proof-difference-F}. Since we have assumed in \cref{hyp:prox-approximation} that the series of $\varepsilon_{t+1}^{2}$ was convergent, it guarantees that $\varepsilon_{t+1}$ is bounded by some constant $D'$
\begin{align*}
    \sum_{t=0}^{+\infty} \varepsilon_{t+1}^{2} < +\infty \qquad &\Rightarrow \qquad \forall t \geq 0,\ \|\vx_{t+1} - \vx_{t+1}^{\star}\|_{2} \leq \varepsilon_{t+1} \leq D'\\
    \eqcomment{\cref{hyp:proxgen-assumptions}}{} & \Rightarrow \qquad \forall t \geq 0,\ \|\vx_{t+1}^{\star} - \vx_{t}\|_{2} \leq \|\vx_{t+1} - \vx_{t}\|_{2} + \|\vx_{t+1} - \vx_{t+1}^{\star}\|_{2} \leq D + D'
\end{align*}
As an aside, note that while this provides an inequality over $\|\vx_{t+1}^{\star} - \vx_{t}\|_{2}$, the bound we will eventually derive from \cref{eq:proof-difference-F} will be more appropriate for proving convergence. Using the definition of $\vm_{t} = (1 - \rho_{t})\vg_{t} + \rho_{t}\vm_{t-1}$, together with Young's inequality ($|\langle \va, \vb \rangle| \leq c^{2}/2 \|\va\|_{2}^{2} + 1/2c^{2}\|\vb\|_{2}^{2}$, for any constant $c \neq 0$), we get
\begin{align}
    \big|\langle \vm_{t} - \nabla f(\vx_{t}), \vx_{t+1}^{\star} - \vx_{t}\rangle\big| &= \big|\langle (1 - \rho_{t})\vg_{t} + \rho_{t}\vm_{t-1} - \nabla f(\vx_{t}), \vx_{t+1}^{\star} - \vx_{t}\rangle\big|\nonumber\\[0.4em]
    \eqcomment{Triangular inequality}{} & \leq \big| \langle \vg_{t} - \nabla f(\vx_{t}), \vx_{t+1}^{\star} - \vx_{t}\rangle \big| + \big|\langle \rho_{t} \vg_{t}, \vx_{t+1}^{\star} - \vx_{t}\rangle \big| + \big|\langle \rho_{t}\vm_{t-1}, \vx_{t+1}^{\star} - \vx_{t}\rangle \big|\nonumber\\
    \eqcomment{Young \& CS inequalities}{} & \leq \frac{1}{2L}\|\vg_{t} - \nabla f(\vx_{t})\|_{2}^{2} + \frac{L}{2}\|\vx_{t+1}^{\star} - \vx_{t}\|_{2}^{2} + \rho_{t}(\|\vg_{t}\|_{2} + \|\vm_{t-1}\|_{2})\|\vx_{t+1}^{\star} - \vx_{t}\|_{2}\nonumber\\
    &\leq \frac{1}{2L}\|\vg_{t} - \nabla f(\vx_{t})\|_{2}^{2} + \frac{L}{2}\|\vx_{t+1}^{\star} - \vx_{t}\|_{2}^{2} + 2\rho_{t}(D + D')G.\label{eq:proof-ineq-dotprod}
\end{align}
In particular, the last inequality (\cref{eq:proof-ineq-dotprod}) uses \cref{hyp:proxgen-assumptions} to bound $\|\vg_{t}\|_{2}$, and Lemma 1 of \citep{yun2020proxgen} to bound $\|\vm_{t-1}\|_{2}$, both by $G$. We then use Young's inequality again to bound the second term on the left-hand side of \cref{eq:proof-difference-F}:
\begin{align}
    \big| \langle \nabla F(\vx_{t+1}^{\star}), \vx_{t+1} - \vx_{t+1}^{\star}\rangle \big| &\leq \frac{1}{2C}\|\nabla F(\vx_{t+1}^{\star})\|_{2}^{2} + \frac{C}{2}\|\vx_{t+1} - \vx_{t+1}^{\star}\|_{2}^{2}\nonumber\\[0.4em]
    &\leq \frac{1}{2C}\dist\!\big(\vzero, \widehat{\partial}F(\vx_{t+1}^{\star})\big)^{2} + \frac{C}{2}\|\vx_{t+1} - \vx_{t+1}^{\star}\|_{2}^{2},
    \label{eq:proof-ineq-dotprodF}
\end{align}
where $C$ is a positive constant defined as
\begin{equation*}
    C = \frac{3(2 + 1/\gamma^{2} + 3L + 4L^{2})}{\delta / 2\alpha_{0} - L} > 0.
\end{equation*}
$C$ is indeed positive because we assumed that $\alpha_{0} < \delta/2L$. Finally, since we have assumed in \cref{hyp:proxgen-assumptions} that $\mD_{t} \succeq \delta \mI$ (in other words, $\mD_{t} - \delta \mI$ is positive semi-definite), we can lower-bound $\|\vx_{t+1}^{\star} - \vx_{t}\|_{\mD_{t}}$ by its Euclidean norm
\begin{equation*}
    \frac{1}{2\alpha_{t}}\|\vx_{t+1}^{\star} - \vx_{t}\|_{\mD_{t}}^{2} \geq \frac{\delta}{2\alpha_{t}}\|\vx_{t+1}^{\star} - \vx_{t}\|_{2}^{2} \geq \frac{\delta}{2\alpha_{0}}\|\vx_{t+1}^{\star} - \vx_{t}\|_{2}^{2},
\end{equation*}
where the second inequality is due to the learning rate $\alpha_{t}$ being non-increasing. Putting it all together:
\begin{align*}
    \left(\frac{\delta}{2\alpha_{0}} - \frac{L}{2}\right)\|\vx_{t+1}^{\star} - \vx_{t}\|_{2}^{2} &\leq \frac{1}{2\alpha_{t}}\|\vx_{t+1}^{\star} - \vx_{t}\|_{\mD_{t}}^{2} - \frac{L}{2}\|\vx_{t+1}^{\star} - \vx_{t}\|_{2}^{2}\\[0.4em]
    \eqcomment{\cref{eq:proof-difference-F}}{} &\leq F(\vx_{t}) - F(\vx_{t+1}) - \langle \vm_{t} - \nabla f(\vx_{t}), \vx_{t+1}^{\star} - \vx_{t}\rangle + \langle \nabla F(\vx_{t+1}^{\star}), \vx_{t+1} - \vx_{t+1}^{\star} \rangle\\
    & \qquad\qquad + \frac{L + L'}{2}\|\vx_{t+1} - \vx_{t+1}^{\star}\|_{2}^{2}\\
    \eqcomment{\cref{eq:proof-ineq-dotprod,eq:proof-ineq-dotprodF}}{} &\leq F(\vx_{t}) - F(\vx_{t+1}) + \frac{1}{2L}\|\vg_{t} - \nabla f(\vx_{t})\|_{2}^{2} + \frac{L}{2}\|\vx_{t+1}^{\star} - \vx_{t}\|_{2}^{2} + 2\rho_{t}(D + D')G\\
    & \qquad \qquad + \frac{1}{2C}\dist\!\big(\vzero, \widehat{\partial}F(\vx_{t+1}^{\star})\big)^{2} + \frac{C}{2}\|\vx_{t+1} - \vx_{t+1}^{\star}\|_{2}^{2} + \frac{L + L'}{2}\|\vx_{t+1} - \vx_{t+1}^{\star}\|_{2}^{2}
\end{align*}
Gathering all the terms involving $\|\vx_{t+1}^{\star} - \vx_{t}\|_{2}^{2}$ on the left-hand side of the inequality, and using \cref{hyp:prox-approximation}, we finally get the following upper-bound:
\begin{align}
    \left(\frac{\delta}{2\alpha_{0}} - L\right)\|\vx_{t+1}^{\star} - \vx_{t}\|_{2}^{2} &\leq F(\vx_{t}) - F(\vx_{t+1}) + \frac{1}{2L}\|\vg_{t} - \nabla f(\vx_{t})\|_{2}^{2} + 2\rho_{t}(D + D')G\nonumber\\
    & \qquad \qquad + \frac{1}{2C}\dist\!\big(\vzero, \widehat{\partial}F(\vx_{t+1}^{\star})\big)^{2} + \frac{L + L' + C}{2}\varepsilon_{t+1}^{2}.
    \label{eq:proof-bound-norm}
\end{align}
Note that while we derived the above bound in the case where \cref{hyp:prox-approximation} (2) is satisfied, this inequality is also valid when $\vx_{t+1} = \vx_{t+1}^{\star}$; the terms coming from \cref{eq:proof-ineq-dotprodF} would vanish, making the above inequality looser, but valid for all $t$.

\paragraph{Bound on $\dist\!\big(\vzero, \widehat{\partial}F(\vx_{t+1}^{\star})\big)^{2}$} Recall that by definition of the weighted proximal operator as a minimizer, and by the first-order condition of optimality, we have $\vx_{t+1}^{\star}$ satisfying
\begin{equation*}
    \vzero \in \frac{1}{\alpha_{t}}\mD_{t}(\vx_{t+1}^{\star} - \vx_{t}) + \vm_{t} + \widehat{\partial}h(\vx_{t+1}^{\star}).
\end{equation*}
Adding $\nabla f(\vx_{t+1}^{\star})$, we have \citep{rockafellar2009variational,xu2019analysisprox}
\begin{equation}
    \nabla f(\vx_{t+1}^{\star}) - \vm_{t} - \frac{1}{\alpha_{t}}\mD_{t}(\vx_{t+1}^{\star} - \vx_{t}) \in \nabla f(\vx_{t+1}^{\star}) + \widehat{\partial}h(\vx_{t+1}^{\star}) = \widehat{\partial}F(\vx_{t+1}^{\star}).
\end{equation}
Since we know one element of $\widehat{\partial}F(\vx_{t+1}^{\star})$, we can upper-bound its distance to $\vzero$:
\begin{align}
    \dist\!\big(\vzero, \widehat{\partial}F(\vx_{t+1}^{\star})\big)^{2} &\leq \Big\|\vm_{t} - \nabla f(\vx_{t+1}^{\star}) + \frac{1}{\alpha_{t}}\mD_{t}(\vx_{t+1}^{\star} - \vx_{t})\Big\|_{2}^{2}\nonumber\\
    &= \Big\|\vm_{t} - \nabla f(\vx_{t+1}^{\star}) + (\vx_{t+1}^{\star} - \vx_{t}) + \frac{1}{\alpha_{t}}\mD_{t}(\vx_{t+1}^{\star} - \vx_{t}) - (\vx_{t+1}^{\star} - \vx_{t})\Big\|_{2}^{2}\nonumber\\
    \eqcomment{Young's inequality}{} &\leq 3\left[\|\vm_{t} - \nabla f(\vx_{t+1}^{\star}) + \vx_{t+1}^{\star} - \vx_{t}\|_{2}^{2} + \Big\|\frac{1}{\alpha_{t}}\mD_{t}(\vx_{t+1}^{\star} - \vx_{t})\Big\|_{2}^{2} + \|\vx_{t+1}^{\star} - \vx_{t}\|_{2}^{2}\right]\nonumber\\
    \eqcomment{\cref{hyp:proxgen-assumptions}}{} &\leq 3\Big[\|\vm_{t} - \nabla f(\vx_{t+1}^{\star}) + \vx_{t+1}^{\star} - \vx_{t}\|_{2}^{2} + \frac{1}{\gamma^{2}}\|\vx_{t+1}^{\star} - \vx_{t}\|_{2}^{2} + \|\vx_{t+1}^{\star} - \vx_{t}\|_{2}^{2}\Big]\nonumber\\
    &=3\Big[\|\vm_{t} - \nabla f(\vx_{t+1}^{\star})\|_{2}^{2} + 2 \langle \vm_{t} - \nabla f(\vx_{t+1}^{\star}), \vx_{t+1}^{\star} - \vx_{t}\rangle + \Big(2 + \frac{1}{\gamma^{2}}\Big)\|\vx_{t+1}^{\star} - \vx_{t}\|_{2}^{2}\Big]\label{eq:proof-bound-dist-star1}
\end{align}
Again, we would like to bound some of the remaining terms on the right-hand side of the above inequality. We will start with the first term; using the definition of $\vm_{t}$, we have
\begin{align}
    \|\vm_{t} - \nabla f(\vx_{t+1}^{\star})\|_{2}^{2} &= \|(1 - \rho_{t})\vg_{t} - \nabla f(\vx_{t}) + \nabla f(\vx_{t}) - \nabla f(\vx_{t+1}^{\star}) + \rho_{t}\vm_{t-1}\|_{2}^{2}\nonumber\\[0.4em]
    &\leq 4\Big[\|\vg_{t} - \nabla f(\vx_{t})\|_{2}^{2} + \|\nabla f(\vx_{t+1}^{\star}) - \nabla f(\vx_{t})\|_{2}^{2} + \|\rho_{t}\vg_{t}\|_{2}^{2} + \|\rho_{t}\vm_{t-1}\|_{2}^{2}\Big]\nonumber\\[0.3em]
    \eqcomment{$f$ is $L$-smooth}{} &\leq 4\Big[\|\vg_{t} - \nabla f(\vx_{t})\|_{2}^{2} + L^{2}\|\vx_{t+1}^{\star} - \vx_{t}\|_{2}^{2} + 2\rho_{t}^{2}G^{2}\Big],
\end{align}
where we used the same bounds on $\|\vg_{t}\|_{2}$ and $\|\vm_{t-1}\|_{2}$ as in \cref{eq:proof-ineq-dotprod}. Similarly, we can bound the second term on the right-hand side of \cref{eq:proof-bound-dist-star1}:
\begin{align}
    \big|\langle \vm_{t} - \nabla f(\vx_{t+1}^{\star}), \vx_{t+1}^{\star} - \vx_{t}\rangle \big| & = \big|\langle \vm_{t} - \nabla f(\vx_{t}) + \nabla f(\vx_{t}) - \nabla f(\vx_{t+1}^{\star}), \vx_{t+1}^{\star} - \vx_{t}\rangle \big|\nonumber\\[0.4em]
    \eqcomment{Triangular inequality}{} &\leq \big|\langle \vm_{t} - \nabla f(\vx_{t}), \vx_{t+1}^{\star} - \vx_{t}\rangle \big| + \big| \langle \nabla f(\vx_{t+1}^{\star}) - \nabla f(\vx_{t}), \vx_{t+1}^{\star} - \vx_{t}\rangle \big|\nonumber\\[0.4em]
    \eqcomment{Cauchy-Schwarz inequality}{} &\leq \big|\langle \vm_{t} - \nabla f(\vx_{t}), \vx_{t+1}^{\star} - \vx_{t}\rangle \big| + \|\nabla f(\vx_{t+1}^{\star}) - \nabla f(\vx_{t})\|_{2}\|\vx_{t+1}^{\star} - \vx_{t}\|_{2}\nonumber\\[0.4em]
    \eqcomment{$f$ is $L$-smooth}{} &\leq \big|\langle \vm_{t} - \nabla f(\vx_{t}), \vx_{t+1}^{\star} - \vx_{t}\rangle \big| + L\|\vx_{t+1}^{\star} - \vx_{t}\|_{2}^{2}\nonumber\\
    \eqcomment{\cref{eq:proof-ineq-dotprod}}\qquad & \leq \frac{1}{2L}\|\vg_{t} - \nabla f(\vx_{t})\|_{2}^{2} + \frac{3L}{2}\|\vx_{t+1}^{\star} - \vx_{t}\|_{2}^{2} + 2\rho_{t}(D + D')G\label{eq:proof-ineq-dotprodtp1}
\end{align}
Putting it all together, and using the bound on $\|\vx_{t+1}^{\star} - \vx_{t}\|_{2}^{2}$ we derived earlier:
\begin{align*}
    \dist\!\big(\vzero, \widehat{\partial}F(\vx_{t+1}^{\star})\big)^{2} &\leq 3\Big[\Big(4 + \frac{1}{L}\Big)\|\vg_{t} - \nabla f(\vx_{t})\|_{2}^{2} + \Big(2 + \frac{1}{\gamma^{2}} + 3L + 4L^{2}\Big)\|\vx_{t+1}^{\star} - \vx_{t}\|_{2}^{2}\\
    & \qquad \qquad + 4\rho_{t}(D + D')G + 8\rho_{t}^{2}G^{2}\Big]\\
    \eqcomment{\cref{eq:proof-bound-norm}}{} & \leq C\big(F(\vx_{t}) - F(\vx_{t+1})\big) + \Big[12 + \frac{3}{L} + \frac{C}{2L}\Big]\|\vg_{t} - \nabla f(\vx_{t})\|_{2}^{2} + \frac{1}{2}\dist\!\big(\vzero, \widehat{\partial}F(\vx_{t+1}^{\star})\big)^{2}\\
    &\qquad \qquad + (2C + 4)\rho_{t}(D + D')G + 8\rho_{t}^{2}G^{2} + \frac{C(L + L' + C)}{2}\varepsilon_{t+1}^{2}
\end{align*}
Rearranging the terms together, we get the following bound on $\dist\!\big(\vzero, \widehat{\partial}F(\vx_{t+1}^{\star})\big)^{2}$:
\begin{align}
    \frac{1}{2}\dist\!\big(\vzero, \widehat{\partial}F(\vx_{t+1}^{\star})\big)^{2} & \leq C\big(F(\vx_{t}) - F(\vx_{t+1})\big) + \Big[12 + \frac{3}{L} + \frac{C}{2L}\Big]\|\vg_{t} - \nabla f(\vx_{t})\|_{2}^{2}\nonumber\\
    &\qquad \qquad + (2C + 4)\rho_{t}(D + D')G + 8\rho_{t}^{2}G^{2} + \frac{C(L + L' + C)}{2}\varepsilon_{t+1}^{2}.\label{eq:proof-bound-dist-star}
\end{align}

\paragraph{Bound on $\dist\!\big(\vzero, \widehat{\partial}F(\vx_{t+1})\big)^{2}$} Although we have derived a bound on $\dist\!\big(\vzero, \widehat{\partial}F(\vx_{t+1}^{\star})\big)^{2}$, we are eventually interested in bounding the distance of $\widehat{\partial}F(\vx_{t+1})$ to $\vzero$, at the possible approximation $\vx_{t+1}$ of $\vx_{t+1}^{\star}$. Using the triangular inequality
\begin{align*}
    \dist\!\big(\vzero, \widehat{\partial}F(\vx_{t+1})\big)^{2} &\leq \big[\dist\!\big(\vzero, \widehat{\partial}F(\vx_{t+1}^{\star})\big) + d_{H}\big(\widehat{\partial}F(\vx_{t+1}), \widehat{\partial}F(\vx_{t+1}^{\star})\big)\big]^{2}\\[0.4em]
    \eqcomment{Young's inequality}{} &\leq 2\dist\!\big(\vzero, \widehat{\partial}F(\vx_{t+1}^{\star})\big)^{2} + 2d_{H}\big(\widehat{\partial}F(\vx_{t+1}), \widehat{\partial}F(\vx_{t+1}^{\star})\big)^{2},
\end{align*}
where $d_{H}$ is the Pompeiu–Hausdorff distance between two sets. In the case of \cref{hyp:prox-approximation} (2), where $h$ (and therefore $F$) is differentiable in the $\varepsilon_{t+1}$-ball $\gB$ around $\vx_{t+1}^{\star}$, both of these sets are reduced to a singleton
\begin{align*}
    d_{H}\big(\widehat{\partial}F(\vx_{t+1}), \widehat{\partial}F(\vx_{t+1}^{\star})\big) &= d_{H}\big(\{\nabla F(\vx_{t+1})\}, \{\nabla F(\vx_{t+1}^{\star})\}\big)\\[0.4em]
    & = \|\nabla F(\vx_{t+1}) - \nabla F(\vx_{t+1}^{\star})\|_{2}\\[0.4em]
    \eqcomment{$f$ is $L$-smooth, $h$ is $L'$-smooth in $\gB$}{} &\leq (L + L')\|\vx_{t+1} - \vx_{t+1}^{\star}\|_{2}\\[0.4em]
    \eqcomment{\cref{hyp:prox-approximation}}{} &\leq (L + L')\varepsilon_{t+1}.
\end{align*}
Note that while we derived the inequality above with \cref{hyp:prox-approximation} (2), it is also valid when $\vx_{t+1} = \vx_{t+1}^{\star}$, albeit looser (since the Pompeiu-Hausdorff distance would vanish in the latter case); hence this inequality is valid for all $t$. Using the bound on $\dist\!\big(\vzero, \widehat{\partial}F(\vx_{t+1}^{\star})\big)^{2}$, we can finally get the following bound on $\dist\!\big(\vzero, \widehat{\partial}F(\vx_{t+1})\big)^{2}$
\begin{align}
    \dist\!\big(\vzero, \widehat{\partial}F(\vx_{t+1})\big)^{2} &\leq 4C\big(F(\vx_{t}) - F(\vx_{t+1})\big) + 4\Big[12 + \frac{3}{L} + \frac{C}{2L}\Big]\|\vg_{t} - \nabla f(\vx_{t})\|_{2}^{2}\nonumber\\[0.4em]
    & \qquad \qquad + 4(2C + 4)\rho_{t}(D + D')G + 32\rho_{t}^{2}G^{2} + \big[2C(L + L' + C) + 2(L + L')^{2}\big]\varepsilon_{t+1}^{2}\label{eq:proof-bound-dist}
\end{align}

\paragraph{Convergence result} To prove our final convergence result, we can simply write the expectation over squared distances (where $R$ is uniform) as an average of quantities we have been capable of bounding:
\begin{equation}
    \E_{R}\big[\dist\!\big(\vzero, \widehat{\partial}F(\vx_{R})\big)^{2}\big] = \frac{1}{T}\sum_{t=0}^{T-1}\dist\!\big(\vzero, \widehat{\partial}F(\vx_{t+1})\big)^{2}
    \label{eq:proof-expectation-average}
\end{equation}
Moreover, note that the right-hand side of the inequality in \cref{eq:proof-bound-dist} involves the difference of $F$ at two consecutive iterates, and therefore involves a telescoping series once summed over. Using \cref{hyp:proxgen-assumptions}, we get
\begin{equation*}
    \sum_{t=0}^{T-1}\big(F(\vx_{t}) - F(\vx_{t+1})\big) = F(\vx_{0}) - F(\vx_{T}) \leq F(\vx_{0}) - F(\vx^{\star}) \leq \Delta.
\end{equation*}
We can similarly bound some of the remaining sums. Using the definition of $\rho_{t} = \rho_{0}\mu^{t}$ (\cref{hyp:proxgen-assumptions}), we have
\begin{align*}
    \sum_{t=0}^{T-1}\rho_{t} &\leq \rho_{0}\sum_{t=0}^{+\infty}\mu^{t} = \frac{\rho_{0}}{1-\mu} & \sum_{t=0}^{T-1}\rho_{t}^{2} &\leq \rho_{0}^{2}\sum_{t=0}^{+\infty}\mu^{2t} = \frac{\rho_{0}^{2}}{1 - \mu^{2}},
\end{align*}
and using \cref{hyp:prox-approximation},
\begin{equation*}
    \sum_{t=0}^{T-1}\varepsilon_{t+1}^{2} \leq \sum_{t=0}^{+\infty}\varepsilon_{t+1}^{2} = K.
\end{equation*}
To conclude, using the bound from \cref{eq:proof-bound-dist} inside \cref{eq:proof-expectation-average}, and the various bounds on the sums above, we get the following bound on the expected squared distance:
\begin{equation}
    \E_{R}\big[\dist\!\big(\vzero, \widehat{\partial}F(\vx_{R})\big)^{2}\big] \leq \frac{C_{1}}{T}\sum_{t=0}^{T-1}\|\vg_{t} - \nabla f(\vx_{t})\|_{2}^{2} + \frac{C_{2}\Delta}{T} + \frac{C_{3}}{T},
    \label{eq:proof-convergence-analysis-main-thm}
\end{equation}
where $C_{1}$, $C_{2}$, and $C_{3}$ are three constants independent of $T$, defined as
\begin{align*}
    C_{1} &= 4\Big[12 + \frac{3}{L} + \frac{C}{2L}\Big]\\[0.4em]
    C_{2} &= 4C\\[0.4em]
    C_{3} &= \frac{4(2C + 4)\rho_{0}(D + D')G}{1 - \mu} + \frac{32\rho_{0}^{2}G^{2}}{1 - \mu^{2}} + K\big[2C(L + L' + C) + 2(L + L')^{2}\big].
\end{align*}
\end{proof}

We can also prove \cref{cor:fixed-batch-size} \citep{xu2019analysisprox}, which we recall here:
\begingroup
\def\thecorollary{\ref{cor:fixed-batch-size}}
\begin{corollary}[Fixed mini-batch size]
    If the assumptions of \cref{thm:convergence-analysis} are satisfied, with $T = 2(C_{2}\Delta + C_{3})/\varepsilon^{2}$ and with a fixed mini-batch size $m_{t}$ with $m_{t} = 2C_{1}\sigma^{2}/\varepsilon^{2}$, then the output $\vx_{R}$ of \cref{alg:minimization-composite-loss} satisfies
    \begin{equation*}
        \E_{R}\big[\dist\!\big(\vzero, \widehat{\partial}F(\vx_{R})\big)^{2}\big] \leq \varepsilon^{2},
    \end{equation*}
    where $C_{1}$, $C_{2}$, and $C_{3}$ are the constants from \cref{thm:convergence-analysis}. To have $\E[\dist(\vzero, \widehat{\partial}F(\vx_{R}))] \leq \varepsilon$, it is then sufficient to have $T = O(1 / \varepsilon^{2})$, making the total complexity $O(1 / \varepsilon^{4})$.
\end{corollary}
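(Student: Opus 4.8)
The plan is to apply \cref{thm:convergence-analysis} verbatim and specialize its right-hand side to a constant mini-batch size $m_t \equiv m$. First I would bound the stochastic-gradient term: when $\vg_t$ is the average of $m$ i.i.d. sampled gradients, the bounded-variance condition in \cref{hyp:proxgen-assumptions} upgrades to $\E\big[\|\vg_t - \nabla f(\vx_t)\|_2^2\big] \leq \sigma^2/m$ for every $t$ (conditioning on the past iterates, and using that averaging $m$ i.i.d. unbiased estimates divides the variance by $m$). Taking the expectation over all the mini-batch noise on both sides of the inequality in \cref{thm:convergence-analysis} — the left-hand side then becoming the full expectation $\E\big[\dist(\vzero,\widehat{\partial}F(\vx_R))^2\big]$ — this term collapses to at most $C_1\sigma^2/m$, and since the two remaining terms are deterministic we obtain
\begin{equation*}
    \E\big[\dist(\vzero, \widehat{\partial}F(\vx_R))^2\big] \leq \frac{C_1\sigma^2}{m} + \frac{C_2\Delta + C_3}{T}.
\end{equation*}

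Next I would substitute the prescribed values. With $m = 2C_1\sigma^2/\varepsilon^2$ the first term equals $\varepsilon^2/2$, and with $T = 2(C_2\Delta + C_3)/\varepsilon^2$ the second term also equals $\varepsilon^2/2$, so their sum is $\varepsilon^2$, which is the claimed bound (rounding $m$ and $T$ up to integers only tightens it). For the second assertion I would apply Jensen's inequality to the concave square root: $\E\big[\dist(\vzero,\widehat{\partial}F(\vx_R))\big] \leq \sqrt{\E\big[\dist(\vzero,\widehat{\partial}F(\vx_R))^2\big]} \leq \varepsilon$. Finally, since both $T$ and $m$ are $\Theta(1/\varepsilon^2)$, the total number of stochastic gradient evaluations is $Tm = \Theta(1/\varepsilon^4)$, which is the stated complexity.

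There is no real obstacle here; this is a direct corollary of \cref{thm:convergence-analysis}. The only points needing a little care are the bookkeeping of the two independent sources of randomness (the mini-batch draws $\xi_0,\dots,\xi_{T-1}$ and the uniform index $R$), so that one may take the outer expectation through the bound term by term, and checking that the constants $C_1, C_2, C_3$ inherited from \cref{thm:convergence-analysis} depend only on the problem parameters ($L$, $L'$, $\delta$, $\gamma$, $\alpha_0$, $\rho_0$, $\mu$, $G$, $D$, $D'$, $K$) and not on $m$ or $T$, so that fixing $m$ as above introduces no circular dependency.
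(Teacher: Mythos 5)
Your proposal is correct and follows essentially the same route as the paper: bound the per-step variance term by $\sigma^{2}/m_{t}$ via averaging of i.i.d.\ unbiased gradient samples, substitute the prescribed $m_{t}$ and $T$ so that each of the two resulting terms equals $\varepsilon^{2}/2$, and conclude via Jensen for the unsquared statement. Your explicit handling of the outer expectation over the mini-batch noise (and the independence argument for the variance reduction, which the paper labels somewhat loosely as Jensen) is if anything slightly more careful than the paper's own write-up.
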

\addtocounter{theorem}{-1}
\endgroup
\begin{proof}
If we assume that \cref{hyp:proxgen-assumptions} is satisfied, we know that $\vg_{t}$ is estimated using a mini-batch of $m_{t}$ samples of the form $\nabla f(\vx_{t};\xi_{i_{t}})$, then we can further bound the remaining quantity in \cref{eq:proof-convergence-analysis-main-thm}:
\begin{align*}
    \|\vg_{t} - \nabla f(\vx_{t})\|_{2}^{2} &= \E\left[\Big\|\frac{1}{m_{t}}\sum_{i_{t}=1}^{m_{t}}\nabla f(\vx_{t};\xi_{i_{t}}) - \nabla f(\vx_{t})\Big\|_{2}^{2}\right]\\
    &= \frac{1}{m_{t}^{2}}\E\left[\Big\|\sum_{i_{t}=1}^{m_{t}}\big(\nabla f(\vx_{t};\xi_{i_{t}}) - \nabla f(\vx_{t})\big)\Big\|_{2}^{2}\right]\\
    \eqcomment{Jensen's inequality}{}&\leq \frac{1}{m_{t}^{2}}\sum_{i_{t}=1}^{m_{t}}\E\big[\|\nabla f(\vx_{t};\xi_{i_{t}}) - \nabla f(\vx_{t})\|_{2}^{2}\big]\\
    \eqcomment{\cref{hyp:proxgen-assumptions}}{}&\leq \frac{\sigma^{2}}{m_{t}}
\end{align*}
Using \cref{thm:convergence-analysis}, we then have
\begin{equation}
    \E_{R}\big[\dist\!\big(\vzero, \widehat{\partial}F(\vx_{R})\big)^{2}\big] \leq \frac{C_{1}\sigma^{2}}{m_{t}} + \frac{C_{2}\Delta}{T} + \frac{C_{3}}{T}.
    \label{eq:proof-corollary-substitute}
\end{equation}
And finally substituting  $T = 2(C_{2}\Delta + C_{3})/\varepsilon^{2}$ and $m_{t} = 2C_{1}\sigma^{2}/\varepsilon^{2}$ in \cref{eq:proof-corollary-substitute}, we get the expected result
\begin{equation}
    \E_{R}\big[\dist\!\big(\vzero, \widehat{\partial}F(\vx_{R})\big)^{2}\big] \leq \varepsilon^{2}.
\end{equation}
\end{proof}

\newpage
\section{Algorithmic details}
\label{app:algorithmic-details}
In this section, we give some details about the different algorithms used in this work, including the proximal gradient method ProxGen \citep{yun2020proxgen}, the Newton-Raphson algorithm in order to approximate the weighted proximal operator, as well as the procedure used to prune neural networks with indirect sparsity.

In \cref{alg:minimization-loss,alg:minimization-composite-loss}, we show a side-by-side comparison between ProxGen (\cref{alg:minimization-composite-loss}; \citealp{yun2020proxgen}) to minimize a composite loss function, and a standard first-order adaptive method to minimize an objective function $f$. The only difference is highlighted in red, where the weighted proximal operator is applied to the gradient update.
\begin{figure}[ht]
\begin{minipage}[t]{0.46\textwidth}
\begin{algorithm}[H]
  \caption{Minimization of the loss function\\$f(\vx)$ with an adaptive optimizer}
  \label{alg:minimization-loss}
\begin{algorithmic}
    \REQUIRE A loss function $\E_{\xi}[f(\vx;\xi)]$
    \REQUIRE A sequence of learning rates $(\alpha_{t})_{t\geq 0}$, $\vm_{0}$, $\mD_{0}$
    \FOR{$t=0, \ldots, T-1$}
    \STATE Draw a minibatch $\xi_{t}$
    \STATE Update the mean estimate $\vm_{t}$ with $\vg_{t} = \nabla f(\vx_{t};\xi_{t})$
    \STATE Update the preconditioning matrix $\mD_{t}$
    \STATE Update the parameters: $\vx_{t+1} \leftarrow \vx_{t} - \alpha_{t} \mD_{t}^{-1}\vm_{t}$
    \ENDFOR
    \OUTPUT $\vx_{T}$, \OR $\vx_{R}$, $R$ sampled uniformly in $\{1, \ldots, T\}$
\end{algorithmic}
\end{algorithm}
\end{minipage}
\hfill
\begin{minipage}[t]{0.53\textwidth}
\begin{algorithm}[H]
  \caption{Minimization of the composite loss function\\$f(\vx) + h(\vx)$ with an adaptive optimizer \citep{yun2020proxgen}}
  \label{alg:minimization-composite-loss}
\begin{algorithmic}
    \REQUIRE A composite loss function $\E_{\xi}[f(\vx;\xi)] + h(\vx)$
    \REQUIRE A sequence of learning rates $(\alpha_{t})_{t\geq 0}$, $\vm_{0}$, $\mD_{0}$
    \FOR{$t=0, \ldots, T-1$}
    \STATE Draw a minibatch $\xi_{t}$
    \STATE Update the mean estimate $\vm_{t}$ with $\vg_{t} = \nabla f(\vx_{t};\xi_{t})$
    \STATE Update the preconditioning matrix $\mD_{t}$
    \STATE \textcolor{Red}{Update the parameters: $\vx_{t+1} \leftarrow \prox_{\alpha_{t} h}^{\mD_{t}}\!\big(\vx_{t} - \alpha_{t} \mD_{t}^{-1}\vm_{t}\big)$}
    \ENDFOR
    \OUTPUT $\vx_{T}$, \OR $\vx_{R}$, $R$ sampled uniformly in $\{1, \ldots, T\}$
\end{algorithmic}
\end{algorithm}
\end{minipage}
\end{figure}

\subsection{Adaptive optimizers}
\label{app:adaptive-optimizers}
\cref{tab:optimizers} shows, for reference, the updates of the mean estimate $\vm_{t}$ and the preconditioning matrix $\mD_{t}$ for different standard adaptive first-order methods.
\begin{table}[ht]
    \caption{Standard adaptive optimizers used in Deep Learning, with the corresponding updates for the mean estimate $\vm_{t}$, and the preconditioning matrix $\mD_{t}$. Table adapted from \citep{melchior2019adaprox}.}
    \vspace*{0.1in}
    \begin{adjustbox}{center}
    \begin{tabular}{c|cc|ccc}
        \toprule
         & \multicolumn{2}{c|}{Mean estimate} & \multicolumn{2}{c}{Preconditioning matrix}\\
         & $\widehat{\vm}_{t}$ & $\vm_{t}$ & $\vv_{t}$ & $\mD_{t}$\\
        \midrule
        SGD & -- & $\vg_{t}$ & -- & $\mI$\\[0.1em]
        Momentum & \multirow{2}{*}{$\mu\widehat{\vm}_{t-1} + \vg_{t}$} & \multirow{2}{*}{$\widehat{\vm}_{t}$} & \multirow{2}{*}{--} & \multirow{2}{*}{$\mI$}\\
        \citep{rumelhart1986momentum} & & & & \\[0.1em]
        AdaGrad & \multirow{2}{*}{--} & \multirow{2}{*}{$\vg_{t}$} & \multirow{2}{*}{$\vv_{t-1} + \vg_{t}^{2}$} & \multirow{2}{*}{$\sqrt{\vv_{t}} + \varepsilon$}\\
        \citep{duchi2011adagrad} & & & & \\[0.1em]
        RMSprop & \multirow{2}{*}{--} & \multirow{2}{*}{$\vg_{t}$} & \multirow{2}{*}{$\beta \vv_{t-1} + (1 - \beta)\vg_{t}^{2}$} & \multirow{2}{*}{$\sqrt{\vv_{t} + \varepsilon}$}\\
        \citep{tieleman2012rmsprop} & & & & \\[0.1em]
        Adam & \multirow{2}{*}{$\beta_{1}\widehat{\vm}_{t-1} + (1 - \beta_{1})\vg_{t}$} & \multirow{2}{*}{$\dfrac{\widehat{\vm}_{t}}{1 - \beta_{1}^{t}}$} & \multirow{2}{*}{$\beta_{2}\vv_{t-1} + (1 - \beta_{2})\vg_{t}^{2}$} & \multirow{2}{*}{$\sqrt{\dfrac{\vv_{t}}{1 - \beta_{2}^{t}}} + \varepsilon$}\\
        \citep{kingma2014adam} & & & & \\[0.2em]
        \bottomrule
    \end{tabular}
    \end{adjustbox}
    \label{tab:optimizers}
\end{table}

\subsection{Approximation of the weighted proximal operator with Newton-Raphson}
\label{app:newton-raphson}
In \cref{alg:newton-raphson-l2}, we show how to use the Newton-Raphson algorithm as our routine for finding $\theta^{\star}$, and therefore approximate the weighted proximal operator of the $\ell_{2}$ norm. This procedure can be adapted to MCP / $\ell_{2}$ in a straightforward way. Note that while this iterative procedure has to be run at every gradient update during optimization, empirically this induces a reasonable overhead, which can be controlled by the tolerance $\varepsilon$; lower tolerance yields fewer iterations per gradient update. See \cref{sec:approximation-weighted-prox} for empirical evidence.
\begin{figure}[ht]
\begin{minipage}{0.6\textwidth}
\begin{algorithm}[H]
  \caption{Approximation of the weighted proximal operator of the $\ell_{2}$ norm ($h(\vx) = \lambda \|\vx\|_{2}$) with Newton-Raphson}
  \label{alg:newton-raphson-l2}
\begin{algorithmic}
    \REQUIRE A point $\vx$ such that $\|\mD\vx\|_{2} > \alpha\lambda$.
    \REQUIRE An error tolerance $\varepsilon$
    \STATE Initialization: $\theta \leftarrow (\|\mD\vx\|_{2} - \alpha\lambda) / d_{\max}$ \hfill $\triangleright$ \cref{prop:bounds-theta-weighted-prox-l2}
    \WHILE{$|G(\theta)| > \varepsilon$}
        \STATE $\theta \leftarrow \theta - \dfrac{G(\theta)}{G'(\theta)}$
    \ENDWHILE
    \OUTPUT $\big[\prox_{\alpha h}^{\mD}(\vx)\big]_{i} = \dfrac{d_{i}\theta x_{i}}{d_{i}\theta + \alpha \lambda}$ \hfill $\triangleright$ \cref{thm:weighted-prox-l2}
\end{algorithmic}
\end{algorithm}
\end{minipage}\hfill%
\begin{minipage}{0.35\textwidth}
    Where $G$ is the function defined by
    \begin{equation*}
        G(\theta) = \sum_{i=1}^{n}\left[\frac{d_{i}x_{i}}{d_{i}\theta + \alpha \lambda}\right]^{2} - 1
    \end{equation*}
\end{minipage}
\end{figure}

\subsection{Pruning with indirect sparsity}
\label{app:pruning-indirect-sparsity}
In \cref{sec:pruning-indirect-sparsity}, we argued that structured sparsity had benefit not only on the layer where groups of variables were zeroed-out, but on neighboring layer as well. \cref{fig:indirect-sparsity} shows an illustration of this behaviour on a convolutional neural network, inspired by \cref{fig:structured-sparsity}, where structured sparsity has been applied channel-wise.
\begin{figure}[ht]
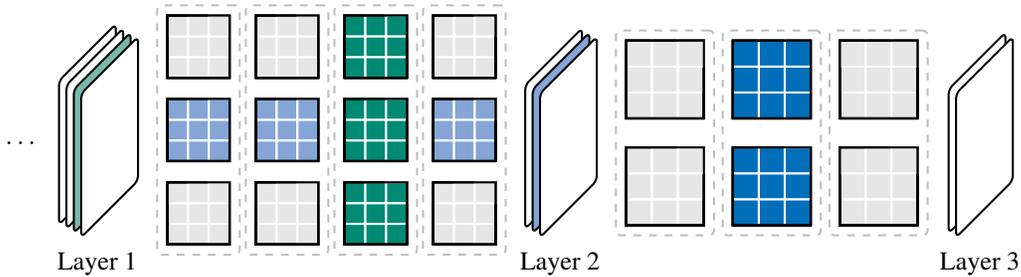

    \centering
    \includestandalone[width=0.8\linewidth]{figures/indirect-sparsity-pruning}
    \caption{Pruning a convolutional neural network with indirect sparsity. This illustration shows the groups of filters set to zero thanks to the structured sparsity inducing penalty (green for the first layer, blue for the second). The filters that can be further pruned in the first layer thanks to indirect sparsity are shown in light-blue.}
    \label{fig:indirect-sparsity}
\end{figure}

The algorithm to prune these filters operates from right to left. When a group of filters are set to zero thanks to the structured sparsity inducing penalty, the corresponding channel at the previous layer (e.g. second channel of Layer 2 in \cref{fig:indirect-sparsity}, shown in light-blue) does not contribute to the predictions of the network anymore; here, the representation at Layer 3 is independent of the channel in light-blue. Therefore the channel in light-blue can be ignored from prior computations as well, meaning that the filters in light-blue can be set to zero as well, without functionally affecting the neural network: these filters were pruned \emph{indirectly} by the structure in the sparsity patterns in Layer 2.

On top of this, the structured sparsity inducing penalty applied to the first layer can also set some groups of variables to zero (here the group shown in green); this procedure can then be run from the output layer, all the way to the input layer. This leads to significantly fewer parameters: in \cref{fig:indirect-sparsity}, the first layer only has 50\% of non-zero parameters left, while only 25\% were pruned directly through the structured sparsity inducing penalty applied to that layer (i.e. 75\% group sparsity). Finally, note that while we showed the effect of indirect sparsity on a convolutional neural network, this applies similarly to linear layers with row-wise groups.

\section{Experimental details}
\label{app:experimental-details}
In this section, we provide details details about the experiments in \cref{sec:experimental-results}, as well as additional results on Residual Networks.

\subsection{Convolutional Neural Networks}
\label{app:convolutional-neural-networks}
In order to accelerate training, the VGG-16 network used in our experiments only has 14 layers, as opposed to 16 \citep{simonyan2015vgg}. It has the same feature extraction body as \citet{simonyan2015vgg} (with 13 convolutional layers), but only has a single linear layer for the classification part (as opposed to 3 layers). We trained this network on CIFAR-10 for 100 epochs using Adam \citep{kingma2014adam} with decoupled weight-decay, with a learning rate $\alpha = 10^{-3}$ (decreasing by a factor of $10$ every $40$ epochs), and a weight-decay parameter of $\lambda_{wd} = 5 \times 10^{-3}$.

For our experiments with structured sparsity inducing penalties (both trained using subgradient methods and proximal gradient methods), we used the following hyperparameters:
\begin{table}[h]
    \centering
    \begin{tabular}{lcc}
        \toprule
        Penalty & $\lambda$ & $\beta$\\
        \midrule
        $\ell_{1}/\ell_{2}$ & $2 \times 10^{-5}$ & -- \\
        Group MCP & $2 \times 10^{-5}$ & $5 \times 10^{3}$ \\
        \bottomrule
    \end{tabular}
    \label{tab:cnn-hyperparameters}
\end{table}

Note that in all cases, $\lambda_{g}$ was reweighted by the size of the groups with $\lambda_{g} = \lambda \sqrt{|g|}$ \citep{murphy2012mlapp}.

\begin{figure}[t]
    \centering
    \includegraphics[width=0.45\linewidth]{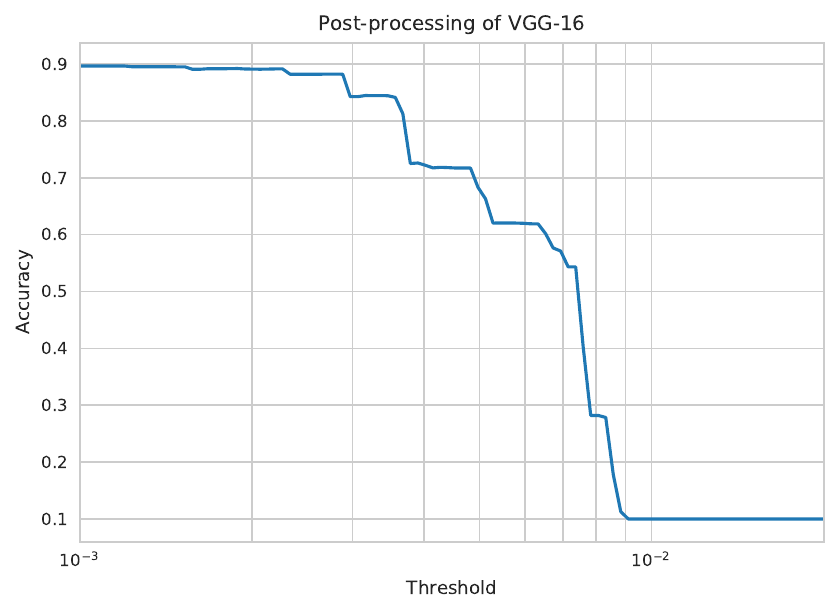}%
    \includegraphics[width=0.45\linewidth]{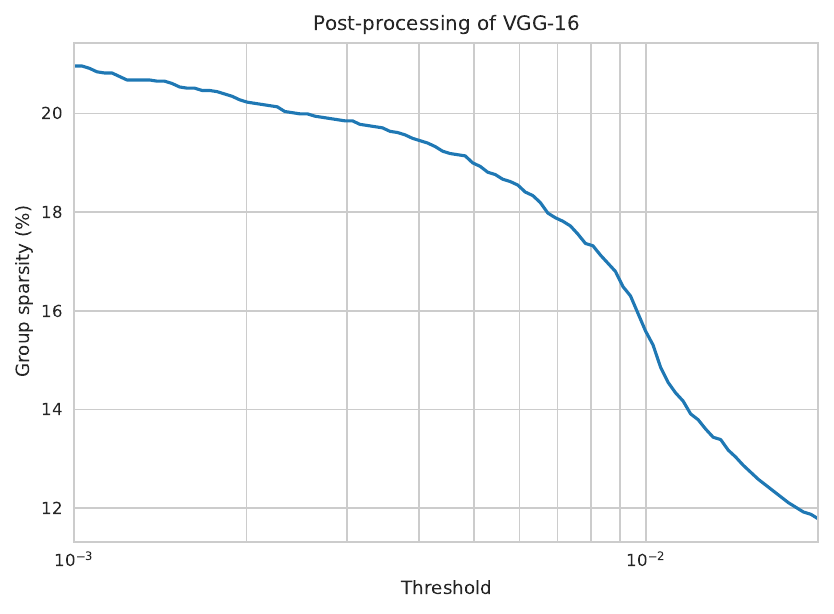}%
    \vspace*{-1em}
    \caption{Post-processing of VGG-16, trained with subgradient methods \citep{wen2016structured}, with the mixed $\ell_{1}/\ell_{2}$ norm. Both graphs show the performance in terms of test accuracy (left) and group sparsity (right) as the threshold used for pruning varies.}
    \label{fig:vgg16-threshold}
\end{figure}

\paragraph{Post-processing} As noted in \cref{sec:convolutional-neural-networks}, and confirming the observations from \citep{bach2011optimization}, the solutions found by training VGG-16 with subgradient methods were not sparse (i.e. no group of variables was set to zero), despite the structured sparsity inducing penalties added. Following \citet{wen2016structured}, we applied a post-processing step after training in order to prune out parameters with small values. Moreover since we are interested in structured sparsity, we applied the thresholding step at the level of groups. More precisely, give a threshold $t$, we pruned out groups of variables whose $\ell_{2}$ norm was smaller than this threshold (rescaled by the group sizes):
\begin{equation*}
    \|\vx_{g}\|_{2} \leq t\sqrt{|g|} \qquad \Rightarrow \qquad \vx_{g} = \vzero.
\end{equation*}
\cref{fig:vgg16-threshold} shows the impact of this post-processing step as $t$ varies. We can observe that the accuracy is highly sensitive to the choice of the threshold $t$. The values reported in \cref{tab:vgg16-group-sparsity} use a threshold $t = 2.5\times 10^{-3}$ to balance group sparsity with test accuracy. Finally, recall that proximal gradient methods did not require any post-processing step, since they are capable of returning sparse solutions directly where whole groups of variables are set to zero.


\subsection{Residual Networks}
\label{app:residual-networks}
We also applied our inexact proximal gradient method with both structured sparsity inducing penalties on a ResNet-34 network, trained on CIFAR-10. Similar to VGG-16 in \cref{sec:convolutional-neural-networks}, we used channel-wise and row-wise groups, depending on the nature of the layer (convolutional and linear layers respectively). Overall, this corresponds to 8k groups, in a model containing 21M parameters. Contrary to our experiments with VGG-16, we only trained the networks using proximal gradient methods, and not subgradient methods.

We trained the network for 200 epochs using Adam with decoupled weight decay, with a learning rate $\alpha = 10^{-3}$ (decreasing by a factor 10 after 150 epochs), and a weight decay parameter of $\lambda_{wd} = 5\times 10^{-4}$. Performance in terms of group sparsity and accuracy are reported in \cref{tab:resnet-group-sparsity}. The constants $\lambda_{g}$ are reweighted by the size of the groups with $\lambda_{g} = \lambda\sqrt{|g|}$.

\begin{table}[h]
    \centering
    \caption{Performance of ResNet-34 trained on CIFAR-10, with different structured sparsity inducing penalties. Here, group sparsity is the proportion of groups (out of 8k) with non-zero norm.}
    \vspace*{0.1in}
    \begin{tabular}{lcc|cc}
        \toprule
        Penalty & $\lambda$ & $\beta$ & Group Sparsity & Test accuracy\\
        \midrule
        Baseline & -- & -- & $44.90\%$ & $93.39\%$\\
        $\ell_{1}/\ell_{2}$ & $1 \times 10^{-6}$ & -- & $32.64\%$ & $93.11\%$ \\
        Group MCP & $1 \times 10^{-5}$ & $1 \times 10^{3}$ & $31.04\%$ & $93.45\%$ \\
        \bottomrule
    \end{tabular}
    \label{tab:resnet-group-sparsity}
\end{table}

Similar to our results with VGG-16, we can observe that both lead to high levels of groups sparsity (i.e. few groups are non-zero), with minimal impact on the final test accuracy. Surprisingly, we also observe that the baseline model trained with no additional structured sparsity inducing penalty leads to some groups being zeroed-out. This is an interesting phenomenon, which is probably caused by the combination of weight decay together with skip-connections.

\begin{figure}[ht]
    \centering
    \includegraphics[width=0.9\linewidth]{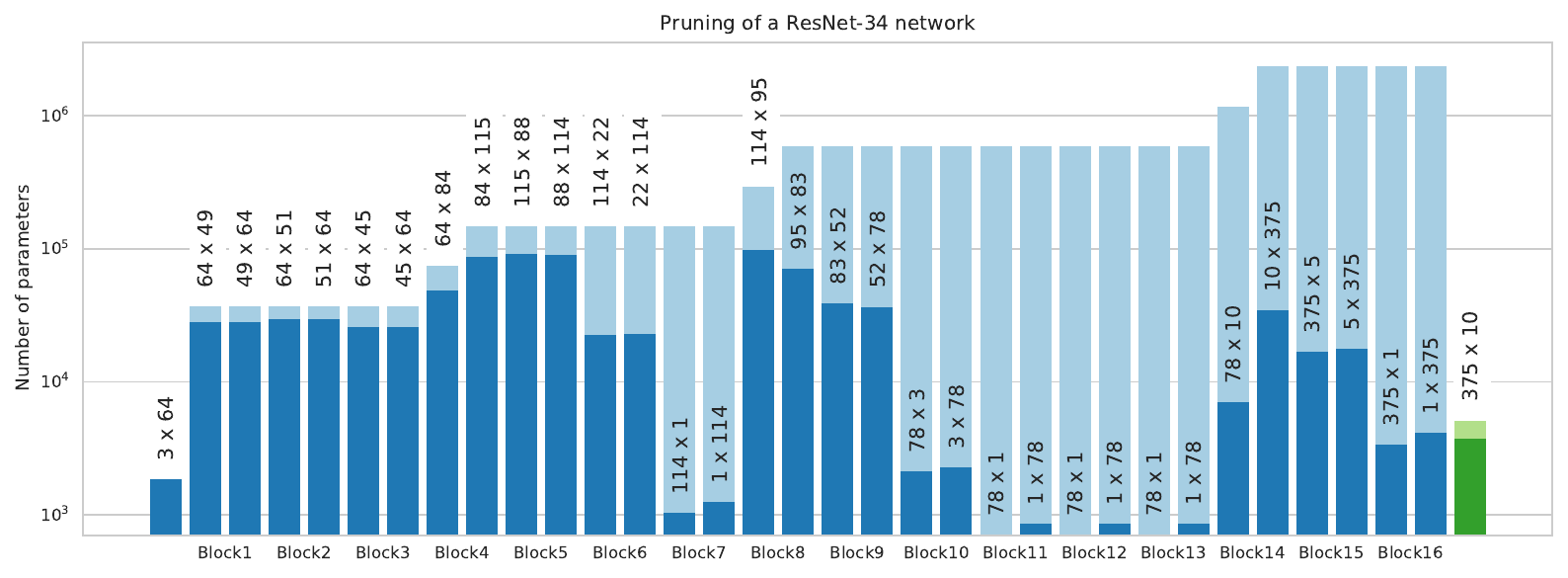}
    \caption{Comparison of the size of the network before (light) and after (dark) pruning, for each layer of ResNet-34. Comvolutional layers (with their corresponding batch-normalization) are represented in blue, and the linear layer in green. The layers are grouped in blocks with two convolutional layers, and an additional skip-connection (not shown here). The label above each bar represents the size of the layer after pruning.}
    \label{fig:pruned-resnet34}
\end{figure}

Similar to \cref{sec:pruning-indirect-sparsity}, we can also prune the ResNet-34 even more using indirect sparsity, thanks to our choice of row-wise groups in the structured sparsity inducing penalties. \cref{fig:pruned-resnet34} shows the effect of pruning on the different layers of the network, for the ResNet-34 trained with the group MCP penalty (reaching a group sparsity of $31.04\%$ in \cref{tab:resnet-group-sparsity}). Interestingly, some convolutional blocks get almost entirely bypassed by their skip connections (e.g. Block7, Blocks11-13, and Block16), having an effect similar to \emph{depth-wise} sparsity \citep{wen2016structured}, without explicitly using depth-wise groups. The total effective sparsity here is $4\%$, corresponding to 919k parameters in the pruned model.

\subsection{Large-scale Transformers}
\label{app:large-scale-transformers}

In all our experiments, the BERT networks were fine-tuned on SQuAD 1.1 for 3 epochs using Adam with linearly decreasing learning rate, starting at $\alpha = 5 \times 10^{-5}$. The networks reported in \cref{tab:squad-group-sparsity} were obtained using multiple values of the hyperparameters $\lambda$ and $\beta$, which are given here:

\begin{table}[h]
    \centering
    \begin{tabular}{lcc|ccc}
        \toprule
        & $\lambda$ & $\beta$ & Group Sparsity & EM & F1\\
        \midrule
        Baseline & -- & -- & -- & $81.01$ & $88.27$ \\
        \midrule
        \multirow{2}{*}{$\ell_{1}/\ell_{2}$} & $1 \times 10^{-5}$ & -- & $84.34\%$ & $72.29$ & $81.96$ \\
        & $2 \times 10^{-5}$ & -- & $76.14\%$ & $66.16$ & $77.50$ \\[0.2em]
        \multirow{2}{*}{Group MCP} & $5 \times 10^{-5}$ & $8 \times 10^{2}$ & $81.49\%$ & $75.40$ & $84.50$ \\
        & $1 \times 10^{-4}$ & $4 \times 10^{2}$ & $61.33\%$ & $69.47$ & $79.95$ \\
        \bottomrule
    \end{tabular}
    \label{tab:squad-hyperparameters}
\end{table}

Again, $\lambda_{g}$ was reweighted by the size of the groups with $\lambda_{g} = \lambda\sqrt{|g|}$.

\paragraph{Choice of groups} As mentioned in \cref{sec:transformers}, in our experiments we used \emph{row-wise} structured sparsity to encourage entire rows of every weight matrix in BERT to be set to zero; this represents 86k groups in total. This choice was motivated by finding a network, where intermediate representations are only influenced by a subset of dimensions from the previous layer. This structure also helps in pruning, as shown in \cref{sec:pruning-indirect-sparsity} and \cref{app:pruning-indirect-sparsity}.

In the context of Transformers, we could use the structure of the network itself even more in order to better define our groups. For example, with our current definition of row-wise groups, the 3 matrices to define the keys, queries and values in a self-attention layer see their rows being penalized independently from one another (i.e. the groups are define as the rows for each of these 3 matrices). However we could also encourage the keys, queries, and values to use the same information from (a subset of) the previous layer. Concretely, this is made possible by grouping together the rows from these 3 matrices from the self-attention layer (effectively reducing by 3x the number of groups for this layer). This would have a more significant impact in terms of pruning from indirect sparsity, since the key, query, and value modules would share the same subset of input dimensions (as opposed to what we currently do, where we can only prune indirectly one layer based on the intersection of the groups set to zero in the 3 matrices of the self-attention layer).

We can further improve this by combining the effect of shared and individual groups for the weights of self-attention layers. Indeed, while we restricted our attention in this paper on disjoint groups, it is also possible to encourage structured sparsity based on a groups following a \emph{tree-structured hierarchy} \citep{bach2011optimization}. Here, this would mean that we can encourage a subset of input dimensions based on groups spanning the 3 weight matrices of the self-attention layer, as well as having ``specialized'' dimensions for the keys, queries, and values with groups corresponding to the rows of each individual weight matrix (which are disjoint subgroups from those spanning all 3 matrices). The application of structured sparsity inducing penalties on tree-structured groups with adaptive proximal gradient methods is left as future work.

\end{document}